\documentclass{article}
\usepackage{etoolbox}
\newcommand{\arxiv}[1]{\iftoggle{icml}{}{#1}}
\newcommand{\icml}[1]{\iftoggle{icml}{#1}{}}
\newtoggle{icml}
\global\togglefalse{icml}

\icml{\usepackage{icml2025}}
\usepackage{graphicx}	
\usepackage{amssymb}
\usepackage{amsfonts,latexsym,amsthm,amssymb,amsmath,amscd,euscript}
\usepackage{bbm}
\arxiv{\usepackage{framed}}
\arxiv{\usepackage{fullpage}}
\arxiv{\usepackage[round,authoryear]{natbib}}
\usepackage{mathrsfs}
\usepackage{hyperref}
\usepackage{enumitem}
\usepackage[export]{adjustbox} %
\usepackage{subcaption}
\usepackage{booktabs}
\icml{\usepackage[subtle]{savetrees}}

\hypersetup{
  colorlinks=true,
  linkcolor=blue,
  filecolor=blue,
  citecolor = black,      
  urlcolor=cyan,
}
\usepackage{accents}
\usepackage{setspace}
\hypersetup{colorlinks=true,citecolor=blue,urlcolor=black,linkbordercolor={1 0 0}}
\usepackage{tikz-cd}
\newcount\Comments  %
\Comments=0

\ifnum\Comments=1
\usepackage[inline]{showlabels}

\fi
\usepackage{turnstile}
\usepackage{mathpartir}
\usepackage{tikz}
\usepackage{xspace}

\arxiv{
\usepackage{algorithm}
\usepackage{verbatim}
\usepackage[noend]{algpseudocode}
}

\captionsetup{font=small}

\newcommand{\nc}{\newcommand}

\usepackage[nameinlink,capitalize]{cleveref}
\crefformat{equation}{#2(#1)#3}
\Crefformat{equation}{#2(#1)#3}

\Crefformat{figure}{#2Figure #1#3}
\Crefname{assumption}{Assumption}{Assumptions}
\Crefformat{assumption}{#2Assumption #1#3}
   \Crefname{question}{Question}{Questions}
   \Crefformat{question}{#2Question #1#3}
   \Crefname{claim}{Claim}{Claims}
   \Crefformat{claim}{#2Claim #1#3}
   \Crefname{problem}{Problem}{Problems}
  \Crefformat{problem}{#2Problem #1#3}
   \Crefname{conjecture}{Conjecture}{Conjectures}
  \Crefformat{conjecture}{#2Conjecture #1#3}%
\Crefname{subsubsection}{Section}{Sections}
\crefformat{subsubsection}{#2Section #1#3}
\Crefformat{subsubsection}{#2Section #1#3}

\nc{\sups}[1]{^{\scriptscriptstyle{#1}}}
\nc{\subs}[1]{_{\scriptscriptstyle{#1}}}

\newcommand{\wb}{\widebar}

\nc{\Critic}{\texttt{Critic}\xspace}
\nc{\PSDPUCB}{\texttt{PSDP-UCB}\xspace}
\nc{\LSVIUCB}{\texttt{LSVI-UCB}\xspace}
\nc{\Actor}{\texttt{Actor}\xspace}
\nc{\EstFeature}{\texttt{EstFeature}\xspace}
\nc{\ExpFTPL}{\texttt{ExpFTPL}\xspace}
\nc{\dist}{\mathsf{dist}}
\nc{\Bquad}{B^{\mathsf{quad}}}

\makeatletter
\newtheorem*{rep@theorem}{\rep@title}
\newcommand{\newreptheorem}[2]{%
\newenvironment{rep#1}[1]{%
 \def\rep@title{#2 \ref{##1}}%
 \begin{rep@theorem}}%
 {\end{rep@theorem}}}
\makeatother

\makeatletter
\newcommand\xlabel[2][]{\phantomsection\def\@currentlabelname{#1}\label{#2}}
\makeatother

{
\theoremstyle{plain}
\newtheorem{theorem}{Theorem}
\newtheorem{lemma}[theorem]{Lemma}

\newtheorem{proposition}[theorem]{Proposition}

\theoremstyle{definition}
\newtheorem{definition}{Definition}
\newtheorem{assumption}[theorem]{Assumption}

\newtheorem{remark}[definition]{Remark}

\numberwithin{theorem}{section}
\numberwithin{definition}{section}
}

\nc{\DMO}{\DeclareMathOperator}

\DeclareMathOperator*{\argmin}{arg\,min} %

\DMO{\prox}{prox}
\DMO{\UCB}{UCB}
\DMO{\LCB}{LCB}
\nc{\phidiff}{\phi\sups{\Delta}}
\nc{\pexp}{q_{\mathrm{exp}}}
\nc{\nn}{\nonumber}
\nc{\rk}{\mathrm{rk}}
\nc{\brk}[3]{{\rm br}_{#1}^{#2}({#3})}
\nc{\co}{{\rm co}}
\nc{\br}[2]{{\rm br}^{#1}({#2})}
\nc{\depth}[1]{{\rm d}({#1})}
\nc{\tA}{\textsc{A}}
\nc{\child}[2]{{\rm ch}_{#1}({#2})}
\nc{\parent}[1]{{\rm pa}({#1})}
\nc{\dg}{\dagger}
\nc{\bB}{\mathbf{B}}
\nc{\Span}{{\rm Span}}
\nc{\unif}{\mathsf{unif}}
\nc{\indsig}[2]{\mathcal{I}_{#1}({#2})}
\nc{\total}{{\rm fin}}
\nc{\early}{{\rm pre}}
\nc{\zsink}{z_{\rm sink}}
\nc{\lowv}{{\rm low}}
\nc{\ol}{\overline}
\nc{\ul}{\underline}
\nc{\madec}[3]{\texttt{ma-dec}_{#1}({#2}, {#3})}
\nc{\madeco}[1]{\texttt{ma-dec}_{#1}}
\nc{\madecd}[3]{\texttt{ma-dec}^{\texttt{d}}_{#1}({#2}, {#3})}
\nc{\SF}{\mathscr{F}}
\nc{\SH}{\mathscr{H}}
\nc{\SP}{\mathscr{P}}
\nc{\SPc}{\wb{\mathscr{P}}}
\nc{\SB}{\mathscr{B}}
\nc{\SC}{\mathscr{C}}
\nc{\BS}{\mathbb{S}}
\nc{\PiMarkov}{\Pi^{\rm markov}}
\nc{\trunc}[2]{\mathsf{trunc}_{#2}({#1})}
\nc{\sbl}{of strong Bellman type\xspace}
\nc{\inormal}[1][\Phi, u,v]{\til{N}_{{#1}}}

\nc{\gamvec}{\gamma}
\nc{\til}{\widetilde}
\nc{\td}{\tilde}
\nc{\wh}{\widehat}
\nc{\old}[1]{\ifnum\Comments=1 {\color{brown}  [OLD: #1]}\fi}
\nc{\noah}[1]{\ifnum\Comments=1 {\color{purple} [ng: #1]}\fi}
\nc{\eran}[1]{\ifnum\Comments=1 {\color{blue} [em: #1]}\fi}
\nc{\BP}{\mathbb{P}}
\nc{\BI}{\mathbb{I}}
\nc{\midpoint}[1][\Phi,\phi_1,\phi_2]{\mu^{\star}_{{#1}}}

\nc{\norm}[1]{\left\| {#1} \right\|}
\nc{\fools}[3]{\MF_{#3}({#1}, {#2})}
\nc{\fool}[2]{\MF({#1},{#2})}
\nc{\clip}[2]{{\rm clip}\left[ \left. {#1} \right| {#2} \right]}
\nc{\imax}{\omega}
\DMO{\conv}{conv}
\nc{\MH}{\mathcal{H}}
\nc{\MV}{\mathcal{V}}
\nc{\MC}{\mathcal{C}}
\nc{\MI}{\mathcal{I}}
\nc{\st}{\star}
\nc{\lng}{\langle}
\nc{\rng}{\rangle}
\DMO{\OOPT}{opt}
\nc{\dopt}[2]{\ell_{\OOPT}({#1},{#2})}
\nc{\grad}{\nabla}
\nc{\MG}{\mathcal{G}}
\nc{\MP}{\mathcal{P}}
\nc{\PP}{\mathbb{P}}
\nc{\TT}{\mathbb{T}}
\nc{\TTmax}{\TT_{\max}}
\DMO{\REG}{Reg}
\DMO{\WREG}{wReg}
\nc{\reg}[2]{{\Delta}_{{#1}}({#2})}
\nc{\wreg}[2]{{\Delta}^{\rm w}_{{#1}}({#2})}
\nc{\Reg}[2]{{\REG}_{{#1}}({#2})}
\nc{\wReg}[2]{{\WREG}_{{#1}}({#2})}
\DMO{\Ham}{Ham}
\DMO{\Gap}{Gap}
\DMO{\GD}{GD}
\DMO{\GDA}{GDA}
\DMO{\EG}{EG}
\nc{\TE}{\til{\E}}
\nc{\Var}{\mathbb{V}}
\DMO{\Cov}{Cov}
\DMO{\OGDA}{OGDA}
\DMO{\Unif}{Unif}
\DMO{\Tr}{Tr}
\nc{\Qu}{\ul{Q}}
\nc{\Qo}{\ol{Q}}
\nc{\Ro}{\ol{R}}
\nc{\Vu}{\ul{V}}
\nc{\Vo}{\ol{V}}
\nc{\RanQ}{\Delta Q}
\nc{\RanV}{\Delta V}
\nc{\clipQ}{\Delta \breve{Q}}
\nc{\frzQ}{\Delta \mathring{Q}}
\nc{\clipV}{\Delta \breve{V}}
\nc{\clipdelta}{\breve{\delta}}
\nc{\cliptheta}{\breve{\theta}}
\nc{\delmin}{\Delta_{{\rm min}}}
\nc{\delmins}[1]{\Delta_{{\rm min},{#1}}}
\nc{\gapfinal}[1]{\max \left\{ \frac{\frzQ_{{#1}}^{k^\st}(x,a)}{2H}, \frac{\delmin}{4H} \right\}}
\nc{\post}[2]{R({#1}; {#2})}
\nc{\posts}[3]{R_{#3}({#1}; {#2})}

\nc{\algnst}[1]{\begin{align*}#1\end{align*}}
\nc{\algn}[1]{\begin{align}#1\end{align}}
\nc{\matx}[1]{\left(\begin{matrix}#1\end{matrix}\right)}
\renewcommand{\^}[1]{^{(#1)}}

\nc{\nuu}{\nu}

\nc{\bel}[1]{\mathbf{b}({#1})}
\nc{\nbel}[1]{\bar{\mathbf{b}}({#1})}
\nc{\sbel}[2]{\mathbf{b}'_{#1}({#2})}
\nc{\nsbel}[2]{\bar{\mathbf{b}}'_{#1}({#2})}

\nc{\bv}{\mathbf{v}}
\nc{\bone}{\mathbf{1}}
\nc{\bX}{\mathbf{X}}
\nc{\bZ}{\mathbf{Z}}
\nc{\bY}{\mathbf{Y}}
\nc{\bu}{\mathbf{u}}
\nc{\bG}{\mathbf{G}}
\nc{\bz}{\mathbf{z}}
\nc{\bw}{\mathbf{w}}
\nc{\bA}{\mathbf{A}}
\nc{\bJ}{\mathbf{J}}
\nc{\bK}{\mathbf{K}}
\nc{\bb}{\mathbf{b}}
\nc{\ba}{\mathbf{a}}
\nc{\bc}{\mathbf{c}}
\nc{\bC}{\mathbf{C}}
\nc{\BR}{\mathbb R}
\nc{\BA}{\mathbb{A}}
\nc{\BC}{\mathbb C}
\nc{\bx}{\mathbf{x}}
\nc{\bS}{\mathbf{S}}
\nc{\bM}{\mathbf{M}}
\nc{\bR}{\mathbf{R}}
\nc{\bN}{\mathbf{N}}
\nc{\NN}{\mathbb{N}}
\nc{\by}{\mathbf{y}}
\nc{\sy}{y}
\nc{\sx}{x}

\nc{\MO}{\mathcal O}
\nc{\MU}{\mathcal{U}}
\nc{\ME}{\mathcal{E}}
\nc{\MN}{\mathcal{N}}
\nc{\MK}{\mathcal{K}}
\nc{\MM}{\mathcal{M}}
\nc{\MS}{\mathcal{S}}
\nc{\MT}{\mathcal{T}}
\nc{\BF}{\mathbb F}
\nc{\BQ}{\mathbb Q}
\nc{\MX}{\mathcal{X}}
\nc{\MA}{\mathcal{A}}
\nc{\MD}{\mathcal{D}}
\nc{\MB}{\mathcal{B}}
\nc{\MZ}{\mathcal{Z}}
\nc{\MJ}{\mathcal{J}}
\nc{\MW}{\mathcal{W}}
\nc{\MR}{\mathcal{R}}
\nc{\MY}{\mathcal{Y}}
\nc{\BZ}{\mathbb Z}
\nc{\BN}{\mathbb N}
\nc{\ep}{\epsilon}
\nc{\epbe}{\varepsilon_{\mathsf{BE}}}
\nc{\epout}{\varepsilon_{\mathsf{outlier}}}
\nc{\bellc}[1][h]{\MT_{#1}^\circ}
\nc{\vep}{\varepsilon}
\nc{\gapfn}[1]{\varepsilon_{#1}}
\nc{\ggapfn}[2]{\varphi_{#1}({#2})}
\nc{\epsahk}{\gapfn{0}}
\nc{\BH}{\mathbb H}
\nc{\BG}{\mathbb{G}}
\nc{\D}{\Delta}
\nc{\MF}{\mathcal{F}}
\nc{\One}[1]{\mathbbm{1}\{{#1}\}}
\nc{\bOne}{\mathbf{1}}
\nc{\Aopt}{\mathcal{A}^{\rm opt}}
\nc{\Amul}{\mathcal{A}^{\rm mul}}

\nc{\SQ}{\mathsf Q}

\nc{\DO}{\accentset{\circ}{\D}}
\nc{\mf}{\mathfrak}
\nc{\mfp}{\mathfrak{p}}
\nc{\mfq}{\mf{q}}
\nc{\mfx}{\mf{s}}
\nc{\Sp}{\mbox{Spec}}
\nc{\Spm}{\mbox{Specm}}
\nc{\hookuparrow}{\mathrel{\rotatebox[origin=c]{90}{$\hookrightarrow$}}}
\nc{\hookdownarrow}{\mathrel{\rotatebox[origin=c]{-90}{$\hookrightarrow$}}}
\nc{\hra}{\hookrightarrow}
\nc{\tra}{\twoheadrightarrow}
\nc{\sgn}{{\rm sgn}}
\nc{\aut}{{\rm Aut}}
\nc{\Hom}{{\rm Hom}}
\nc{\img}{{\rm Im}}
\DMO{\id}{Id}
\DMO{\supp}{supp}
\DMO{\KL}{KL}
\nc{\kld}[2]{d_{\mathsf{KL}}({#1}||{#2})}
\nc{\ren}[2]{D_2({#1}||{#2})}
\nc{\chisq}[2]{\chi^2({#1}||{#2})}
\nc{\tvd}[2]{d_{\mathsf{TV}}({#1}, {#2})}
\nc{\hell}[2]{d_{\mathsf{H}}^2({#1}, {#2})}
\nc{\dbi}[3][\pi]{D_{\mathsf{bi}}^{#1}({#2} \| {#3})}
\DMO{\BSS}{BSS}
\DMO{\BES}{BES}
\DMO{\BGS}{BGS}
\DMO{\poly}{poly}
\nc{\indep}{\perp}
\DMO{\sink}{sink}
\nc{\fp}[1]{\MP_1({#1})}
\nc{\BO}{\mathbb{O}}
\nc{\BT}{\mathbb{T}}

\nc{\RR}{\mathbb{R}}
\nc{\Gradient}{\nabla}
\DMO{\diag}{diag}
\nc{\EE}{\mathbb{E}}
\nc{\MQ}{\mathcal{Q}}
\nc{\ML}{\mathcal{L}}
\nc{\cPhi}{\bar \Phi}

\DeclareMathOperator*{\E}{\mathbb{E}}
\nc{\ra}{\rightarrow}

\nc{\pmhc}[1]{\{-1,1\}^{#1}}
\nc{\Dbnd}{D}
\nc{\Bbnd}{B}

\nc{\Key}{\mathsf{KeyGen}}
\nc{\Enc}{\mathsf{Encode}}
\nc{\Dec}{\mathsf{Decode}}
\nc{\sk}{\mathsf{sk}}
\nc{\pk}{\mathsf{pk}}
\nc{\lpk}{\ell_{\mathsf{pk}}}
\nc{\lsk}{\ell_{\mathsf{sk}}}
\nc{\msg}{\mathsf{m}}
\nc{\Adv}{\mathsf{Adv}}
\nc{\dham}{D_{\mathsf{Ham}}}
\nc{\negl}{\mathsf{negl}}
\nc{\Ber}{\mathrm{Ber}}
\nc{\PRFPRC}{\mathsf{PRF\text{-}PRC}}
\nc{\wt}{\mathrm{wt}}
\nc{\res}[2]{{#1}_{#2}}
\nc{\bzero}{\mathbf{0}}
\nc{\Bin}{\mathrm{Bin}}
\nc{\Hyp}{\mathrm{Hyp}}

\nc{\Nrho}[1][\rho]{{N}_{#1}}
\nc{\Trho}[1][\rho]{\mathsf{T}_{#1}}
\nc{\hc}[1][n]{\{0,1\}^{#1}}
\nc{\Stab}{\mathbf{Stab}}
\nc{\bW}{\mathbf{W}}
\nc{\NS}{{\mathbf{NS}}}

\nc{\KeyS}{\mathsf{KeyGen_{Sub}}}
\nc{\EncS}{\mathsf{Encode_{Sub}}}
\nc{\DecS}{\mathsf{Decode_{Sub}}}
\nc{\WeightPerturb}{\mathsf{WeightPerturb}}
\nc{\Unique}{\mathsf{Unique}}
\nc{\PRCS}{\mathsf{PRC_{Sub}}}
\nc{\PRC}{\mathsf{PRC}}
\nc{\PRCI}{\mathsf{PRC_{Idx}}}
\nc{\SampleUnique}{\mathsf{SampleUnique}}
\nc{\PerturbDifference}{\mathsf{PerturbDifference}}

\nc{\Model}{\mathsf{Model}}
\nc{\Modelo}{\overline{\Model}}
\nc{\prompt}{\mathtt{PROMPT}}
\nc{\Setup}{\mathsf{Setup}}
\nc{\Detect}{\mathsf{Detect}}
\nc{\Sigprc}{\Sigma_{\mathsf{PRC}}}
\nc{\Wat}{\mathsf{Wat}}
\nc{\term}{\mathtt{END}}
\nc{\tok}{\mathsf{t}}
\nc{\True}{\textsf{True}}
\nc{\False}{\textsf{False}}
\nc{\Eemb}{\ME_{\mathsf{Emb}}}
\nc{\hist}{\mathsf{hist}}
\nc{\hh}{\mathsf{h}}
\nc{\freq}{\mathsf{freq}}
\nc{\ff}{\mathsf{f}}

\nc{\Hemp}[1]{H_{\mathsf{e}}^{#1}}
\nc{\Hempt}[1]{\bar{H}_{\mathsf{e}}^{#1}}
\nc{\Hemptil}[1]{\tilde{H}_{\mathsf{e}}^{#1}}
\nc{\Spread}[1]{S^{#1}}
\nc{\Hmean}[1]{H_{\mathsf{m}}^{#1}}
\nc{\partition}[1][n,q]{P^{\mathsf{ptn}}_{#1}}
\nc{\Crob}{C_{\mathsf{rob}}}
\nc{\Lmax}{L_{\mathsf{max}}}
\nc{\skwat}{\sk_{\mathsf{Wat}}}
\nc{\EmbedToken}{\mathsf{EmbedChar}}
\nc{\len}{\mathrm{len}}
\nc{\Esub}{\ME_{\mathsf{sub}}}
\nc{\Ecomp}{\ME_{\mathsf{comp}}}
\nc{\comp}{\mathsf{c}}
\nc{\SE}{\mathscr{E}}
\nc{\alphb}{q}
\nc{\tAdv}{\widetilde{\Adv}}
\nc{\Funif}{{F_{\mathsf{Unif}}}}
\nc{\Alg}{\mathsf{Alg}}
\nc{\Majority}{\mathsf{Maj}}
\nc{\Dist}{\mathsf{Dist}}
\nc{\edit}{edit\xspace}
\nc{\Edit}{Edit\xspace}
\nc{\Wcomp}{\MW^{\mathsf{comp}}}

\nc{\Wk}{\mathbf{K}}
\nc{\Wq}{\mathbf{Q}}
\nc{\Wv}[1][]{\mathbf{V}_{#1}}
\nc{\Wo}{\mathbf{O}}
\nc{\MLP}{\mathsf{MLP}}
\nc{\Ahead}{\mathsf{AttHead}}
\nc{\softmax}{\mathrm{softmax}}
\nc{\bq}{\mathbf{q}}
\nc{\bk}{\mathbf{k}}
\nc{\Alayer}{\mathsf{AttLayer}}
\nc{\Wqk}[1][]{\mathbf{W}_{#1}}

\nc{\err}{\mathrm{err}}
\nc{\Tdim}{\mathrm{Tdim}}
\nc{\VCdim}{\mathrm{VCdim}}
\nc{\Algerm}{\mathsf{Alg}_{\mathsf{ERM}}}
\nc{\LGerr}{\mathrm{LGerr}}
\nc{\Ltrain}{L_{\mathsf{train}}}
\nc{\Lextrap}{L_{\mathsf{extrap}}}
\nc{\Edim}{\mathrm{Edim}}
\nc{\Algftpl}{\mathsf{Alg}_{\mathsf{FTPL}}}
\nc{\Hpred}{\MH^{\mathsf{attn}}}
\nc{\Gpred}{\MG^{\mathsf{key}}}
\nc{\Gval}{\MG^{\mathsf{val}}}
\nc{\Qpos}{\MQ^{\mathsf{pos}}}
\nc{\Qvoc}{\MQ^{\mathsf{voc}}}
\nc{\Llocal}{L_{\mathsf{local}}}

\makeatletter
\newcommand{\customitem}[1]{%
\item[#1]\protected@edef\@currentlabel{#1}%
}
\makeatother

\nc{\BOS}{\texttt{<BOS>}}
\nc{\SEP}{\texttt{<SEP>}}
\nc{\Dsp}{\MD^{\mathsf{sp}}}
\nc{\vape}{\mathbf{v}^{\mathsf{APE}}}
\nc{\SEPo}{\texttt{<SEP$_1$>}}
\nc{\SEPt}{\texttt{<SEP$_2$>}}

\nc{\Loss}{\ML}
\nc{\Xquery}{\bX_{\mathsf{query}}}
\nc{\hquery}{\bh_{\mathsf{query}}}
\nc{\bh}{\mathbf{h}}
\nc{\Amlayer}{\mathsf{AttMultiLayer}}
\nc{\Transformer}{\mathsf{Transformer}}
\nc{\Wembed}{\mathbf{W}_{\mathsf{e}}}
\nc{\Wunembed}{\mathbf{W}_{\mathsf{u}}}
\nc{\Aheads}{\mathtt{AttHeads}}
\nc{\bQ}{\mathbf{Q}}
\nc{\bV}{\mathbf{V}}
\nc{\bO}{\mathbf{O}}
\nc{\Aheadsimp}{\widetilde{\mathsf{AttHead}}}
\nc{\Lmin}{L_{\mathsf{min}}}
\nc{\PC}{\mathsf{PC}}
\nc{\expand}{\mathrm{expand}}
\nc{\Vexpand}{\mathrm{V}\text{-}{\expand}}
\nc{\rank}{\mathrm{rank}}
\nc{\Qposc}{\MQ^{\mathsf{pos-c}}}
\nc{\Sets}[2]{\mathrm{Sets}_{{#2}}({#1})}
\nc{\ktest}{k_{\mathsf{test}}}
\nc{\Ktrain}{K_{\mathsf{train}}}
\nc{\predpc}{Predictive Position Coupling\xspace}
\nc{\PPC}{PPC\xspace}
\nc{\vinit}{v_{\mathsf{init}}}
\nc{\hshort}{\hat h_{\mathsf{short}}}
\nc{\hlong}{\hat h_{\mathsf{long}}}
\nc{\generalization}{generalization\xspace}
\nc{\Generalization}{Generalization\xspace}
\arxiv{\renewcommand{\cite}{\citep}}
\nc{\db}[1]{\ifnum\Comments=1 {\color{red} [DB: #1]}\fi}

\arxiv{
  \title{The Role of Sparsity for Length \Generalization in Transformers}
  \date{\today}
  \author{Noah Golowich \\ {\small \texttt{nzg@mit.edu}} \\ {\small MIT EECS} \and Samy Jelassi \\ {\small \texttt{sjelassi@fas.harvard.edu}} \\ {\small Harvard University} \and David Brandfonbrener \\ {\small \texttt{david.brandfonbrener@gmail.com}} \\ {\small Kempner Institute at Harvard University} \and Sham M.~Kakade \\ {\small \texttt{sham@seas.harvard.edu}} \\ {\small Kempner Institute at Harvard University} \and Eran Malach \\ {\small \texttt{eran.malach@gmail.com}}  \\ {\small Kempner Institute at Harvard University}}
}
\icml{
\icmltitlerunning{Length \Generalization in Transformers}
  }
\begin{document}
\arxiv{\maketitle}
\icml{
  \twocolumn[
  \icmltitle{The Role of Sparsity for Length \Generalization in LLMs}
  \icmlkeywords{Length \Generalization, Transformers}
  ]
  }
\begin{abstract}
  Training large language models to predict beyond their training context lengths has drawn much attention in recent years, yet the principles driving such behavior of \emph{length generalization} remain underexplored. We propose a new theoretical framework to study length generalization for the next-token prediction task, as performed by decoder-only transformers. Conceptually, we show that length generalization occurs as long as each predicted token depends on a small (fixed) number of previous tokens.\arxiv{ We formalize such tasks via a notion we call \emph{$k$-sparse planted correlation} distributions, and show that an idealized model of transformers which generalize attention heads successfully length-generalize on such tasks. As a bonus, }
    Our theoretical model justifies certain techniques to modify positional embeddings which have been introduced to improve length generalization, such as \emph{position coupling}.\arxiv{

  } We support our theoretical results with experiments on synthetic tasks and natural language, which confirm that a key factor driving length generalization is a ``sparse'' dependency structure of each token on the previous ones. Inspired by our theory, we introduce \emph{Predictive Position Coupling}, which trains the transformer to \emph{predict} the position IDs used in a positional coupling approach. Predictive Position Coupling thereby allows us to broaden the array of tasks to which position coupling can successfully be applied to achieve length generalization.
\end{abstract}

\section{Introduction}
\label{sec:introduction}
Enabling large language models (LLMs) to generalize to contexts longer than their training context length has emerged as a key problem in recent years. %
Indeed, many factors limit the context length of sequences which can be used during training, including the increased \emph{computational cost} of training on long sequences \cite{tay_efficient_2022} as well as the fact that longer sequences may be less numerous in the training \emph{dataset}. Nevertheless, many applications require LLMs to be accurate on extremely long context lengths at inference time: for instance, a popular technique recently has been to use \emph{scratchpads} or \emph{Chain-of-Thought (CoT)} to perform various logic and reasoning tasks, and the length of the scratchpad can become very large, especially when combined with search or reinforcement learning techniques \cite{deepseekai_deepseek_2025,kimiteam_kimi_2025}.

Unfortunately, transformers struggle to \emph{length generalize} on even very simple arithmetic and logic tasks, such as computing parities, integer addition, and variable assignment \cite{anil_exploring_2022,kazemnejad_impact_2023}. Indeed, only recently have transformer models successfuly been trained to length-generalize to contexts many times their training length on integer addition tasks, using a technique known as \emph{position coupling} (closely related to \emph{Abacus embeddings}) \cite{cho_position_2024,cho_arithmetic_2024,mcleish_transformers_2024}. For many other simple problems, comparable length generalization remains a challenge. In light of this uneven progress, a natural question is whether there is a more principled way of understanding length generalization. \emph{In particular, can we formally reason about what structural properties of data aid or inhibit length generalization? Moreover, can this suggest architectural modifications, such as to positional embeddings, which improve length generalization?}

\icml{\vspace{-0.2cm}}
\paragraph{Contributions.} In this paper we give a positive answer to the above questions: First, we introduce a class of data distributions, namely those with \emph{sparse planted correlations} (\cref{def:sparse-planted}), which, roughly speaking, captures the property observed in many synthetic and natural language tasks that each token to be predicted depends on a small number $k$ of previous tokens, which we call the \emph{sparsity}. We then establish the following: 
\begin{itemize}\icml{[itemsep=0.0em,topsep=0.0em,leftmargin=0.25cm,rightmargin=0cm]}%
\item As long as the sparsity $k$ does not grow with the sequence length, then a simple class of models generalizing attention heads (namely, \emph{sparse functional attention}; \cref{def:sparse-group-attn}) has provable length generalization (\cref{thm:length-extrap}). This result also relies on an additional assumption of \emph{locality} on the hypothesis class. 
\item While the locality assumption is typically violated in practice, we show that (a theoretical abstraction of) \emph{position coupling} can remove the locality requirement (\cref{prop:pc-length-extrap-informal}), thus offering a theoretical justification for this recent technique.
\item We perform experiments (\cref{sec:experiments}) on  \emph{synthetic} and \emph{natural language} data to support our theorical conclusions: for instance, our experiments adjust the sparsity of several synthetic tasks, and we observe that length generalization improves monotonically with decreasing sparsity. For natural language data, we provide evidence that length-generalizing transformers indeed make accurate predictions using a small number of past tokens. 
\item Inspired by our theory, we introduce a modification of positional coupling, \emph{\predpc}, which, unlike positional coupling, works on tasks for which the coupled position IDs are input-dependent. We show (\cref{sec:cot-pc}) that \predpc enables significant length generalization on such tasks. 
\end{itemize}

\paragraph{Related work.} A few prior works have attempted to give theoretical justifications for the length generalization abilities of transformers \cite{zhou_what_2023,huang_formal_2024,sabbaghi_explicitly_2024,ahuja_provable_2024}; we compare these works, as well as many others, to ours in \cref{sec:related-work}. 
The idea that \emph{sparsity} and \emph{locality} are important for length generalization has appeared in several of these works \cite{huang_formal_2024,sabbaghi_explicitly_2024}. However, none has provided the precise theoretical and empirical evidence which we believe are essential for isolating the importance of these notions. 

\section{Preliminaries}
Our focus in this paper is on decoder-only transformers trained to predict the next token. We review the architecture of such transformers in more detail in \cref{sec:transformers-prelim}. In this section, we discuss a few aspects of the \emph{position encodings} of transformers which have previously been proposed to improve length \generalization.

Initially, \citet{vaswani_attention_2023} proposed to introduce positional information in a transformer model using \emph{absolute positional embeddings (APE)} (see also \citet{gehring_convolutional_2017}). The APE assigns to each token a \emph{position ID} $i$ (typically the position of that token in the sequence) and adds an embedding vector $\vape_i$ depending on $i$ to the embedding vector for the corresponding token. %
Recently, it has become more common to use position encodings which encode \emph{relative positions}. In particular, a popular positional encoding technique in many large-scale open-source transformers is the \emph{rotary positional encoding (RoPE)} \cite{su_roformer_2023}, which adjusts the computation of attention scores as follows: it multiplies each key embedding vector with position ID $i$ by a rotation matrix depending on $i$, and each query embedding vector with position ID $j$ by a rotation matrix depending on $j$. The effect of these rotations is that the attention score for a $(\text{key}, \text{query})$ pair with position IDs $(i,j)$ depends only on $i-j$ (and not $i$ or $j$ individually). 

An obstacle to length \generalization in transformers is that the joint distribution of particular tokens and their position IDs seen in training sequences may not match that seen in the longer sequences at test-time. To account for this discrepancy, a common empirical technique is to modify the position IDs at training or test time. As discussed in \cref{sec:introduction}, the \emph{positional coupling} technique, which has recently paved the way for major improvements in length generalization in arithmetic and logic tasks, plays a key role in our theoretical and empirical results. \icml{We discuss it next}\arxiv{We discuss it as well as another tecnique which will play a role in our experiments, \emph{PoSE}, next}; additional techniques to modify position IDs to achieve length generalization are discussed in \cref{sec:related-posids}.

\subsection{Position Coupling}
\label{sec:prelim-pc}
The technique of \emph{position coupling} \cite{cho_position_2024,cho_arithmetic_2024,mcleish_transformers_2024} (similar to \emph{Abacus} in \citet{mcleish_transformers_2024})  works specifically for problems with structured input where there is a clear one-to-one relationship between certain tokens. In particular, it assigns each token in a sequence a particular position ID in a way so that tokens assigned the same position ID should have a (task-dependent) ``one-to-one correspondence''. %
For example, to solve \emph{string reversal}, i.e., predict the last $L$ tokens of the sequence\arxiv{\footnote{We use $\SEP$ to denote a special separator token.}}
\arxiv{\begin{align}
\bX_1, \ldots, \bX_L, \SEP, \bX_L, \ldots, \bX_1 \label{eq:string-reversal}
       \end{align}}
     \icml{$\bX_1, \ldots, \bX_L, \SEP, \bX_L, \ldots, \bX_1$,}
\arxiv{for some token IDs $\bX_1, \ldots, \bX_L$, }
then since the $i$th-to-last reversed token is equal to the $i$th input  token, we feed the following position IDs:\arxiv{\footnote{The separator token $\SEP$ receives a default position ID of $0$.}}
\arxiv{
\begin{align}
1, 2, \ldots, L, 0, L, \ldots, 2, 1 \label{eq:string-reversal-posids}.
\end{align}}
\icml{$1, 2, \ldots, L, 0, L, \ldots, 2, 1$.}
\arxiv{We provide the precise position coupling schemes for our experiments with synthetic data in \cref{sec:cot-pc}.}

\arxiv{\subsection{Positional Skip-Wise (PoSE) Training}
\label{sec:prelim-pose}
The \emph{Positional skip-wise (PoSE)} technique \cite{zhu_pose_2024} (see also \cite{wu_never_2024}) aims to ensure that: %
(a) the position IDs used during training cover all posible  position IDs $1, \ldots, \Lmax$ that could be observed at test-time (where $\Lmax$ is the maximum length of a test-time sequence), and (b) the \emph{differences} between different position IDs seen in training sequences is of similar magnitude to that seen in testing sequences. To do so, we fix an integer $c$ denoting a number of \emph{chunks}, and given a sequence of tokens $\bX = (\bX_1, \ldots, \bX_\ell)$ during training, we partition $\bX$ into $c$ contiguous chunks (i.e., subsequences) and assign to each chunk a random contiguous sequence of position IDs so that the first position ID of each chunk is greater than the last position ID of the previous chunk. %
{At test  time, one simply uses the true position IDs, namely $(1, 2, \ldots, L)$, corresponding to a sequence $\bX$ of length $L$. }

The precise schemes to partition into chunks and assign position IDs that we use are as follows: in all of our experiments we take the number of chunks to be $c = 2$: during training, we split a sequence $\bX = (\bX_1, \ldots, \bX_\ell)$ into two parts by choosing a uniformly random position to split at. We then choose 2 integers $J_0, J_1 \sim \Unif([\bar L - \ell])$, and let the position ID of the first chunk begin at $\min\{J_0, J_1 \}$ and the position ID of the second chunk begin at $\max\{J_0, J_1 \} + \ell_1$, where $\ell_1$ is the length of the first chunk. This is essentially the same as the scheme used in \citet{zhu_pose_2024} with all $v_i = 0$. %

}

\section{Theoretical model}
\label{sec:model}
\arxiv{\paragraph{Overview.}} In this section, we formally define our theoretical model. Our primary inspiration is the class of decoder-only transformers, which are trained to predict each next token as a function of the preceding ones in a given sequence. %
Accordingly, our theoretical framework will focus on the \emph{next-token prediction task}, where we fix lengths $L < \bar L$, and attempt to show that models \emph{trained} on sequences of length $\leq L$ to predict a \emph{single token} (interpreted as the $(L+1)$th token in the sequence) also succeed at predicting a \emph{single token} (interpreted as the $(\bar L+1)$th token) when \emph{tested} on sequences of length $\bar L$. 
\arxiv{To do so, we will have to make two types of assumptions.  First, in \cref{sec:sparse-structure} we make the following assumption on the distributions of the sequences: roughly speaking, there is always a short subsequence which is sufficient for predicting the token in question. Second, in \cref{sec:hypothesis-classes}, we assume that the learning algorithm is performing risk minimization with respect to a certain hypothesis class generalizing the class of attention heads. }%

\arxiv{\paragraph{Basic setup.}} Fix a set $\MV$ which denotes the set of tokens. $\MV^\st = \bigcup_{\ell \geq 0} \MV^\ell$ denotes the set of all arbitrary-length sequences of tokens.  For technical reasons discussed below, we model the problem of predicting the \emph{embedding} (i.e., \emph{representation}) of the next token, as opposed to the token itself, given a sequence of previous tokens. Accordingly, we fix a label set $\MY$ which is a convex subset of $\BR^d$ for some $d \in \BN$ denoting the space of possible token embeddings, and a loss function $\Loss : \MY \times \MY \to [0,1]$ with $\Loss(Y, Y) = 0$ for all $Y \in \MY$. %
For simplicity, we fix an arbitrary norm $\| \cdot \|$ on $\BR^d$, and assume that  $\mathrm{diam}(\MY) \leq 1$ with respect to this norm and that $\Loss(Y, Y') = \| Y - Y'\|$. %
 For a set $\Omega$ and $k \in \BN$, we let $\Sets{\Omega}{k}$ be the set of size-$k$ subsets of $\Omega$.

A \emph{distribution ensemble} $\MP$ is a sequence of distributions $\MP_1, \MP_2, \ldots$, where for each $\ell \in \BN$, $\MP_\ell \in \Delta(\MV^\ell \times \MY)$ represents a distribution over labeled input sequences of length $\ell$.  %
Let $\MH \subset \MY^{\MV^\st}$ denote a class of functions $h : \MV^\st \to \MY$. The \emph{length \generalization} problem is as follows: we aim to choose a hypothesis $\hat h \in \MH$ which enjoys small square loss $\Loss(\cdot)$ for inputs drawn from $\MP_{\bar L}$ for some $\bar L \in \BN$, if we are only allowed to choose $\hat h$ based off of inputs of lengths less than $\bar L$, i.e., those drawn from $\MP_\ell$ for $\ell < \bar L$.

How can we use inputs drawn from such $\MP_\ell$ to choose $\hat h$? A classic paradigm in learning theory is \emph{empirical risk minimization} (see e.g., \citet{shalev_understanding_2014}), which stipulates to choose a hypothesis in $\MH$ to minimize the average loss over \emph{samples} drawn from $\MP_\ell$ for some values of $\ell$. However, as our goal is specifically to understand the \emph{out-of-distribution} generalization behavior from  inputs of length $\ell < \bar L$ to inputs of length $\bar L$, we instead assume that $\hat h$ is chosen so as to minimize the \emph{population risk} for inputs of length $\ell < \bar L$. This choice allows us to avoid having to consider the (in-distribution) generalization error for samples drawn from $\MP_\ell$ ($\ell < \bar L$). 
Formally, we define length \generalization as follows: 
\begin{definition}[Length \generalization]
  \label{def:length-extrap}
  For $L,\bar L \in \BN$, $ \ep \in (0,1)$, we say that the class $\MH$ has \emph{$(L,\bar L,\ep)$-length \generalization with respect to the ensemble $\MP$} %
  if the hypothesis %
  \begin{align}
\hat h := \argmin_{h \in \MH} \E_{\ell \sim \Unif([L/2, L])} \E_{(\bX, \bY) \sim \MP_\ell} [\Loss(h(\bX), \bY)]\label{eq:define-hath} 
  \end{align}
  satisfies $\E_{(\bX, \bY) \sim \MP_{\bar L}}[\Loss(\hat h(\bX), \bY)] \leq \ep$.\footnote{The lower bound of $L/2$ in the interval $[L/2,L]$ from which $\ell$ is sampled is unimportant; any constant factor of $L$ will suffice.}
\end{definition}
For a hypothesis class $\MH \subset \MY^{\MV^\st}$ and $\delta \in (0,1)$, we say that the ensemble $\MP$ is \emph{$\delta$-approximately $\MH$-realizable} if there is some $h^\st \in \MH$ so that $\E_{(\bX, \bY) \sim \MP_\ell}[\Loss(h^\st(\bX), \bY)] \leq \delta$ for each $\ell \in \BN$. We say that $\MP$ is \emph{$\MH$-realizable} \icml{in the case $\delta = 0$.}\arxiv{if it is 0-approximately $\MH$-realizable.} %

\subsection{Distributional assumptions: sparse structure}
\label{sec:sparse-structure}
Without distributional assumptions on $\MP_\ell$, achieving length \generalization per \cref{def:length-extrap} becomes degenerate in the following manner, even if we assume $\MH$-realizability. For any fixed (realizable) choice of the distributions $\MP_\ell$ for $\ell < \bar L$, unless $\hat h$ is \emph{identically equal} to $h^\st$, we can choose $\MP_{\bar L}$ to have all its mass on some sequence $\bX \in \MV^{\bar L}$ for which $\Loss(h^\st(\bX), \hat h(\bX)) \gg 0$. This choice prevents the loss of $\hat h$ defined in \cref{eq:define-hath} from being small, thus ruling out length \generalization, as formalized \icml{in \cref{prop:noasm-nolg}}\arxiv{below}. %
\arxiv{
\begin{proposition}
  \label{prop:noasm-nolg}
Fix any $L, \bar L \in \BN$ with $L < \bar L$, and a hypothesis class $\MH \subset \MY^{\MV^\st}$. Let $\MP_1, \ldots, \MP_{L}$ be $\MH$-realizable distributions, realized by $h^\st$. Suppose that $\ep > 0$ and $\hat h$ defined in \cref{eq:define-hath} satisfies $\sup_{\bX \in \MV^{\bar L}} \Loss(h^\st(\bX), \hat h(\bX)) > \ep$. Then there is an ensemble $\MP$ extending $\MP_1, \ldots, \MP_L$ so that $\MH$ does not have $(L, \bar L, \ep)$-length \generalization with respect to $\MP$. 
\end{proposition}}

While some works (e.g., \citet{ahuja_provable_2024,huang_formal_2024}) have offered explanations for length \generalization by showing that in fact, $h^\st$ is learned \emph{exactly} under appropriate assumptions, there is ample empirical evidence \cite{zou_universal_2023,wei_jailbroken_2023,andriushchenko_jailbreaking_2024} that transformers can err on worst-case inputs. Accordingly we ask: \emph{Are there distributional assumptions which enable us to establish length \generalization in settings where the ground truth hypothesis $h^\st$ may \emph{not} be learned (nearly) exactly in the sense that $\sup_{\bX \in \MV^\st} \ML(h^\st(\bX), \hat h(\bX))$ is small?} %
  
The sparse attention patterns in many transformers trained on natural language (e.g., \citet{child_generating_2019,tay_efficient_2022}) suggest that  for modeling a wide spectrum of natural language, one needs to attend to only a small number of previous tokens  to predict each successive token. Inspired by this observation, we define the following class of distribution ensembles, namely those with \emph{sparse planted correlations}. Roughly speaking, sequences drawn from such distributions have most of their tokens drawn independently from some distribution $\mu \in \Delta(\MV)$ (which we think of as a ``background distribution'', representing tokens not relevant for the task at hand), and a small number $k$ of tokens drawn from some ``planted'' correlated distribution over $k$-tuples of tokens, denoted by $\Qvoc$ in \cref{def:sparse-planted} below. One should interpret these planted $k$ tokens as having ``relevant information'' for the task of predicting the label $\bY$. The particular location of these tokens is drawn independently (denoted $\Qpos_\ell$ below). Formally, we have: %
    \begin{definition}[$k$-sparse planted correlations]
    \label{def:sparse-planted}
    Fix a positive integer $k \in \BN$. %
    We say that a distribution ensemble $\MP = (\MP_\ell)_{\ell \in \BN}$ has \emph{$k$-sparse planted correlations} if there are distributions $\mu \in \Delta(\MV)$, $\Qpos_\ell \in \Delta(\Sets{[\ell]}{k})$ %
    for $\ell \in \BN$, $\Qvoc \in \Delta(\MV^k)$, and a function $g^\st : \MV^k \to \MY$ so that the following holds. For each $\ell \in \BN$, a sample $(\bX, \bY) \sim \MP_\ell$ may be drawn as follows: first, we draw $S^\st \sim \Qpos_\ell, \bZ \sim \Qvoc$, and we set:%
  \begin{align}
\bX_{S^\star} = \bZ, \qquad \bX_i \sim \mu \ \forall i \not \in S^\star, \qquad \bY = g^\st(\bZ)\label{eq:y-gstar-x}
  \end{align}
\end{definition}
While the assumptions in \cref{def:sparse-planted} that (a) the remaining tokens $\bX_i$ for $i \not \in S^\st$ are i.i.d.~from $\mu$ and (b) the tuple $S^\st$ which indexes the correlated tokens that ``matter'' is drawn from a fixed distribution $\Qpos_\ell$ are not realistic, we emphasize that they are made to simplify the proofs and ensure a simple model which captures the salient features that enable length \generalization. We leave generalizations, such as that the $\bX_i$ are drawn from a Markov chain or Hidden Markov Model, for future work.

\arxiv{\paragraph{Simple example: $k$-gram retrieval.}  %
\arxiv{An example of a distribution ensemble satisfying \cref{def:sparse-planted} is the distribution of tokens in the \emph{$k$-gram retrieval task}, of outputting the token following a certain $k$-gram in a given sequence $\bX$.}
Suppose that $\MV = [N]$ for some $N \in \BN$, and we want to model the ``$k$-gram retrieval'' task of outputting the token following a certain $k$-gram in the sequence $\bX$, which is closely related to the notion of \emph{induction heads} \cite{jelassi_repeat_2024,olsson_incontext_2022}. Formally, for a fixed length $\ell$, the tuple $(\bX, \bY) \in (\MV^\ell, \MY)$ is drawn as follows: first, we draw $\bX_1, \ldots, \bX_{\ell-k} \sim \Unif(\MV)^{\ell-k}$ independently, then choose a random $\ell' \sim \Unif([\ell-2k-1])$, and set $\bX_{\ell-k+1:\ell} := \bX_{\ell'+1:\ell'+k}$, and  $\bY := \phi(\bX_{\ell'+k+1})$ where $\phi : \MV \to \MY$ is some embedding function. In particular, the tokens $\bX_{\ell-k+1:\ell}$ specify the particular $k$-gram we want to query. This distribution falls into the framework of \cref{def:sparse-planted} by letting $\mu = \Unif(\MV)$, letting $S^\st$ be uniform over length-$(2k+1)$ sequences $(\ell'+1, \ldots, \ell'+k+1, \ell-k+1, \ldots, \ell)$ where $\ell' \sim \Unif([\ell-2k-1])$, and $\Qvoc$ be uniform over sequences $(\bZ_1, \ldots, \bZ_k, \bZ_{k+1}, \bZ_1, \ldots, \bZ_k)$ where $\bZ_1, \ldots, \bZ_{k+1} \sim \Unif(\MV)$. Finally, $g^\st(\bZ_1, \ldots, \bZ_{2k+1}) = \phi(\bZ_{k+1})$. }
\icml{An example of a distribution ensemble satisfying \cref{def:sparse-planted} is the distribution of tokens in the \emph{$k$-gram retrieval task}, of outputting the token following a certain $k$-gram in a given sequence $\bX$. In particular, the $k$-gram in question together with the following token form the ``planted sequence''; see \cref{sec:basic-setup-appendix} for a formal description. }

\subsection{Defining the hypothesis classes}
\label{sec:hypothesis-classes}
In this section, we define a hypothesis class which allows us to effectively model the salient aspects of transformers while permitting us to obtain provable length \generalization. In particular, we aim to satisfy the following two desiderata:
\begin{enumerate}\icml{[itemsep=0.0em,topsep=0.0em,leftmargin=0.25cm,rightmargin=0cm]}
\item \label{it:sim-correlations}First, we would like there to be many ensembles $\MP$ with sparse planted correlations \arxiv{(per \cref{def:sparse-planted})} which are realizable with respect to $\MH$.%
\item \label{it:model-ahead}Second, we would like the class $\MH$ to capture simple transformers (i.e., a single attention head). 
\end{enumerate}
To motivate how to arrive at our function class starting from the above criteria, we recall the structure of a single attention head (\arxiv{we give a slightly simplified version here; }see \cref{sec:transformers-prelim} for a more complete presentation). It takes as input \emph{embedding vectors} $\bh_1, \ldots, \bh_L \in \BR^d$ which could be, for instance, the results of an embedding matrix multiplied by one-hot encodings of tokens $\bX_1, \ldots, \bX_L \in \MV$. For some matrices $\bK, \bV \in \BR^{d \times d}$ and a query vector\footnote{In the case of decoder-only transformers, $\bq$ is commonly taken to be equal to be a linear function applied to $\bh_L$, though in this discussion we take it to be fixed along with the other parameters of the attention head.} $\bq \in \BR^d$, the attention head computes
\begin{align}
&   \Aheadsimp_{\bK, \bV, \bq}(\bh_1, \ldots, \bh_L) = \sum_{i=1}^L \alpha_i \cdot \bV \bh_i\label{eq:simple-ahead}\arxiv{\\
&   \mbox{ where } \alpha = \softmax(\langle \bq, \bK \bh_1 \rangle, \ldots, \langle \bq, \bK \bh_L \rangle)\nonumber.}
\end{align}
\icml{where $\alpha = \softmax(\langle \bq, \bK \bh_1 \rangle, \ldots, \langle \bq, \bK \bh_L \rangle)$.} 
The softmax over linear functions of $\bh_1, \ldots, \bh_L$ allows $\Aheadsimp$ to attend to individual positions $i \in [L]$ whose tokens satisfy some property{ (e.g., the embedding vector points in a particular direction of $\BR^d$, which could represent a certain meaning of the token)}. %
Suppose we were to take $\MH$ to be the class of attention heads and try to satisfy \cref{it:sim-correlations}: we would need attention heads to be able to apply some function ($g^\st$ in the context of \cref{def:sparse-planted}) to a certain \emph{sub-sequence of $k$ tokens} ($\bX_{S^\st}$ in \cref{def:sparse-planted}).%
\footnote{
  Notice that since $S^\st$ is not given explicitly, there are certainly cases where it is information-theoretically impossible to determine $S^\st$ from $\bX$. However, we give several examples in \arxiv{\cref{sec:lower-bounds}}\icml{\cref{sec:basic-setup-appendix,sec:lower-bounds}} where the planted sequence $\bX_{S^\st}$ is sufficiently ``distinguishable'' from the remaining tokens $\bX_i,\ i \not \in S^\st$, and in fact $\bY$ can indeed be computed with high probability.
 }

Unfortunately, a single attention head as in \cref{eq:simple-ahead} is not able to do so for large classes of $g^\st$, when $k > 1$ and even if $\bX_{S^\st}$ is sufficiently ``distinguishable'': for instance, $\Aheadsimp_{\bK, \bV, \bq}(\bh_1, \ldots, \bh_L)$ is invariant to the order of $\bh_1, \ldots, \bh_L$ (and thus of the tokens, in the typical case that $\bh_i$ is an embedding of $\bX_i$), whereas $g^\st(\bX_{S^\st})$ may not be.\arxiv{

} Decoder-only transformers correct for this deficiency by (amongst other factors, such as positional embeddings) stacking multiple layers of attention heads. When doing so, embedding vectors $\bh_i$ at individual positions of higher layers can contain information about \emph{multiple} tokens of the input sequence. %
Inspired by this property, we introduce the following model of an ``idealized transformer'' -- instead of stacking multiple attention heads, it directly applies attention on \emph{subsets} of $k$ tokens: 

\begin{definition}[Sparse functional attention class]
  \label{def:sparse-group-attn}
  Fix a positive integer $k$ (the sparsity) together with hypothesis classes $\Gpred$ consisting of functions mapping $ {(\BN \times \MV)^k} \to (\BR \cup \{-\infty \})$ and $\Gval$ mapping  $\MV^k \to \MY$. %
  We define the \emph{$k$-sparse functional attention class} $\Hpred = \Hpred(\Gpred, \Gval)$ to be the class of all $h_{g_0, g_1} : \MV^\star \to \MY$, indexed by $g_0 \in \Gpred, g_1 \in \Gval$, where\footnote{If $g_0(S, \bx_S) = -\infty$ for all $S \in \Sets{[L]}{k}$, then we set $h_{g_0, g_1}(\bx_1, \ldots, \bx_L)$ to be $\frac{1}{|\Sets{[L]}{k}|} \sum_S g_1(\bx_S)$.}
  \begin{align}
h_{g_0, g_1}(\bx_1, \ldots, \bx_L) := \sum_{S \in \Sets{[L]}{k}} \frac{\exp(g_0(S, \bx_S)) \cdot g_1(\bx_S)}{\sum_{S' \in \Sets{[L]}{k}} \exp(g_0(S', \bx_{S'}))}\nonumber.
  \end{align}
\end{definition}

Intuitively, one should think of the ``attention function'' $g_0(S, \bx_S)$ as computing attention scores based off of \emph{sets} of $k$ tokens. Sparse functional attention with $k = 1$ captures a single attention head with fixed query vector as discussed above: in particular, for $S = \{ i \}$, we choose $g_0(S, \bx_S) = %
\langle \bq, \bK \bh_i \rangle$ where $\bh_i$ is the embedding of $\bx_i$, and %
$g_1(\bx_i) = \bV \cdot \bh_i$. The fact that $g_0(S, \bx_S)$ can also depend on the position indices $S$ should be interpreted as allowing us to model positional embeddings (see \cref{sec:theory-pc} for more discussion). 
The following proposition formalizes these observations, verifying that the sparse group attention class indeed generalizes attention heads.
\begin{proposition}[Informal version of \cref{prop:model-attn-head-formal}]
  \label{prop:model-attn-head}
  The class of attention heads corresponding to a \arxiv{fixed} vocabulary $\MV$ and embedding dimension $d$ is equal to the 1-sparse functional attention class $\Hpred(\Gpred, \Gval)$ for appropriate choices of $\Gpred, \Gval$. 
\end{proposition}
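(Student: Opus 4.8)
The plan is to exhibit an explicit bijection between the parameter space of a single attention head and the parameter space (i.e., the choices of $\Gpred, \Gval$) of a $1$-sparse functional attention class, and then check that the two hypotheses computed under corresponding parameters are the same function on $\MV^\st$. The first task is purely definitional: I need to extract from the (full, not simplified) attention head definition in \cref{sec:transformers-prelim} the precise list of parameters — presumably an embedding matrix $\Wembed$ together with key/value matrices $\bK, \bV$ and a query vector $\bq$ (or whatever the formal definition fixes) — and write down what function $h : \MV^\st \to \MY$ such a head computes: on input $\bx_1, \ldots, \bx_L \in \MV$, it first forms $\bh_i = \Wembed e_{\bx_i}$ (one-hot encoding), then outputs $\sum_{i=1}^L \alpha_i \bV \bh_i$ with $\alpha = \softmax_i \langle \bq, \bK \bh_i\rangle$, matching \cref{eq:simple-ahead}.

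Next I would make the choice of $\Gpred$ and $\Gval$ concrete. Since $k = 1$, a subset $S \in \Sets{[L]}{1}$ is a singleton $\{i\}$ and $\bx_S$ is a single token, so $\Gpred$ consists of functions $(\BN \times \MV) \to \BR \cup \{-\infty\}$ and $\Gval$ of functions $\MV \to \MY$. Given head parameters $(\Wembed, \bK, \bV, \bq)$, I define $g_0(\{i\}, x) := \langle \bq, \bK \Wembed e_x\rangle$ (note this happens to be independent of the position index $i$, which is fine — $\Gpred$ is allowed to ignore positions) and $g_1(x) := \bV \Wembed e_x$. One must check the codomain conditions: $g_1(x) \in \MY$ requires that the attention-head class is defined so that $\bV \Wembed e_x$ (or more precisely any convex combination thereof, since $h_{g_0,g_1}$ outputs a convex combination) lands in $\MY$; this should follow from whatever normalization the formal attention-head definition imposes, and is exactly the reason \cref{prop:model-attn-head} is stated ``for appropriate choices of $\Gpred, \Gval$'' — the classes $\Gpred, \Gval$ are taken to be precisely the images of the head parameters under the above maps. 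With these definitions, plugging into \cref{def:sparse-group-attn} gives
\[
h_{g_0,g_1}(\bx_1,\ldots,\bx_L) = \sum_{i=1}^L \frac{\exp(\langle \bq, \bK \bh_i\rangle)\, \bV \bh_i}{\sum_{j=1}^L \exp(\langle \bq, \bK\bh_j\rangle)} = \Aheadsimp_{\bK,\bV,\bq}(\bh_1,\ldots,\bh_L),
\]
which is the head's output; and conversely, every $h_{g_0,g_1}$ with $(g_0,g_1)$ in these classes arises from a head. This gives equality of the two function classes as subsets of $\MY^{\MV^\st}$.

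For the reverse inclusion and to genuinely get \emph{equality} of classes, I would argue that any pair $(g_0, g_1) \in \Gpred \times \Gval$ — where $\Gpred, \Gval$ are defined as the parameter images above — by construction comes from some head parameter tuple, so the induced $h_{g_0,g_1}$ is a head; this is immediate once $\Gpred, \Gval$ are defined as images. The only subtlety is the degenerate case in the footnote of \cref{def:sparse-group-attn}: if $g_0(\{i\}, \bx_i) = -\infty$ for all $i$, the convention sets the output to the uniform average $\frac{1}{L}\sum_i g_1(\bx_i)$. But with $g_0(\{i\},x) = \langle \bq, \bK\Wembed e_x\rangle$ this value is always finite (a real inner product), so this case never arises for parameters coming from an actual head — hence the convention is vacuous here and doesn't disturb the correspondence. (If the formal head definition does allow $\bK$ to push things to $-\infty$, e.g. via a causal mask, I would note the uniform-average convention is exactly the softmax-of-all-$-\infty$ limit and still matches.)

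I expect the main obstacle to be bookkeeping around the \emph{formal} definition of the attention-head class in \cref{sec:transformers-prelim}: getting the query vector treated as a fixed parameter (as flagged in the footnote after \cref{eq:simple-ahead}), handling whatever output normalization guarantees membership in $\MY$, and correctly incorporating the embedding matrix $\Wembed$ so that the domain is $\MV^\st$ rather than $(\BR^d)^\st$. None of this is mathematically deep, but stating \cref{prop:model-attn-head-formal} precisely requires pinning down these conventions; once they are fixed, the proof is the one-line softmax identity above plus the observation that $\Gpred, \Gval$ are \emph{defined} as the images of the head parameter maps.
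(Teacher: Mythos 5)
Your proposal is correct and follows essentially the same approach as the paper's proof of \cref{prop:model-attn-head-formal}: both exhibit the explicit correspondence between attention-head parameters $(\theta, \hquery, \Wembed)$ and pairs $(g_0, g_1)$, via $g_0(i, \bx) = \langle \bQ\hquery, \bK\Wembed \bx\rangle$ and $g_1(\bx) = \bO\bV\Wembed \bx$, and then match the two softmax-weighted averages term by term. The one cosmetic difference is that the paper takes $\Gpred$ to be \emph{all} position-independent maps $\BN \times \MV \to \BR$ and $\Gval$ to be \emph{all} maps $\MV \to \BR^d$, and then uses $h \geq d$ and the arbitrariness of $\Wembed$ to get the reverse inclusion, whereas you propose defining $\Gpred, \Gval$ as the images of the parameter map (making the reverse inclusion definitional); since $|\MV| < \infty$ and $\Wembed$ ranges freely, these yield the same classes.
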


\paragraph{Additional properties.} We believe that the abstraction of sparse functional attention defined in \cref{def:sparse-group-attn} may be of broader interest in obtaining a theoretical understanding of various properties of transformers. Nevertheless, in order to analyze length \generalization, we need to further restrict the class $\Hpred(\Gpred, \Gval)$ in the following ways: first, we assume that the class $\Gpred$ is \emph{local}, meaning that it only outputs finite (i.e., not negative infinity) scores on subsets $S \subset \BN$ for which $\max(S) - \min(S)$ is bounded. Second, we assume that $\Gpred$ only uses \emph{relative positional information}, meaning that shifting $S$ does not change $g_0(S, \bx)$ for all $g_0 \in \Gpred, \bx \in \MV^k$.

\begin{assumption}[$\Gpred$ is local and relative]
  \label{asm:local-relative}
  We introduce the following properties of the class $\Gpred$: 
  \begin{enumerate}\icml{[itemsep=0.0em,topsep=0.0em,leftmargin=0.25cm,rightmargin=0cm]}
  \item \label{it:local} Fix $\Llocal \in \BN$. We say that $\Gpred$ is \emph{$\Llocal$-local} if for all $S \in \Sets{\BN}{k}$ for which $\max_{i \in S} i - \min_{i \in S} i > \Llocal$, we have $g_0(S, \bx) = -\infty$ for all $g_0 \in \Gpred, \bx \in \MV^k$. %
  \item \label{it:relative} We say that $\Gpred$ is \emph{relative} if for any $i \in \BN$ and $S \in \Sets{\BN}{k}$, $\bx \in \MV^k$, it holds that $g_0(S, \bx) = g_0(S + i, \bx)$ for all $g_0 \in \Gpred$.\footnote{For $S = \{i_1, \ldots, i_k\}$, $S + i := \{i_1 + i, \ldots, i_k + i\}$\arxiv{ denotes the shift of $S$ by $i$}.}
  \end{enumerate}
\end{assumption}
The assumption that $\Gpred$ is relative is inspired by the fact that many common positional embeddings only use \emph{relative} positional information. While the assumption of locality is strong, we remark that it is provably necessary (\cref{sec:lower-bounds}) and can be relaxed using techniques such as positional coupling (\cref{sec:theory-pc}). Throughout the paper, we will assume that the parameters $k, \Llocal$ are chosen so that $\Llocal \geq k$\arxiv{: this allows us to have $g_0(S, \bx) > -\infty$ for $g_0 \in \Gpred$ and sets $S$ of size $k$}. 

\section{Theoretical results\arxiv{: provable length generalization for the sparse group attention class}}
\label{sec:theory}
In this section, we establish formal length generalization guarantees for sparse functional attention classes $\Hpred$  (\cref{def:sparse-group-attn}) for distribution ensembles $\MP$ with sparse planted correlation (\cref{def:sparse-planted}). To do so, we need to make a few assumptions on the ensemble $\MP$. 
The first assumption ensures that the ensemble $\MP$ is approximately $\Hpred$-realizable. %
\begin{assumption}[Realizability]
  \label{asm:unique-selection}
  We assume that $\MP$ is $\delta$-approximately $\Hpred(\Gpred, \Gval)$-realizable, i.e., there is $h^\st \in \Hpred(\Gpred, \Gval)$ so that $\E_{(\bX, \bY) \sim \MP_\ell}[\Loss(h^\st(\bX), \bY)] \leq \delta$ for all $\ell\arxiv{ \in \BN}$.
  \end{assumption}

Our second assumption states that the distributions $\Qpos_\ell$ in the context of \cref{def:sparse-planted} have bounded coverage at all locations in the sense that changing $\ell$ or shifting $S$ by $\Delta$ units does not significantly change $\Qpos_\ell(S)$. 
\begin{assumption}[Coverage of $\Qpos_\ell$]
  \label{asm:density-ratio-position}
  Fix a $k$-sparse distribution ensemble $\MP$ specified by $\mu, (\Qpos_\ell)_{\ell \in \BN}, \Qvoc$, as well as $\Llocal \in \BN$. 
  We assume that for each $\ell \in \BN$ with $\ell \geq \Llocal$, there is some positive value $\eta_\ell \in \BR$ so that for all $\ell' \in [\Llocal, \ell]$, $\Delta \geq 0$ and $S^\st \in \Sets{[\ell]}{k}$, %
  \arxiv{
  \begin{align}
 \frac{\Qpos_\ell(S^\st)}{\Qpos_{\ell'}(S^\st - \Delta)} \leq \eta_\ell, \qquad  \frac{\Qpos_{\ell'}(S^\st)}{\Qpos_\ell(S^\st)}\leq \eta_\ell\nonumber,
  \end{align}
}
\icml{$\frac{\Qpos_\ell(S^\st)}{\Qpos_{\ell'}(S^\st - \Delta)} \leq \eta_\ell$ and $\frac{\Qpos_{\ell'}(S^\st)}{\Qpos_\ell(S^\st)}\leq \eta_\ell$, } 
  where the first inequality is only required if $S^\st -\Delta \in \Sets{[\ell']}{k}$ and the second inequality is only required if  $S^\st \in \Sets{[\ell']}{k}$. 
\end{assumption}
\arxiv{To help interpret \cref{asm:density-ratio-position}, note first that in order for the distribution ensemble $\MP$ to be  realizable (\cref{asm:unique-selection}) by a class $\Hpred(\Gpred, \Gval)$ satisfying $\Llocal$-locality (\cref{asm:local-relative}), it will typically be the case that $\max_{i \in S^\st} i - \min_{i \in S^\st} i \leq \Llocal$ with probability at least  $1-\delta$ for $S^\st \in \Qpos_\ell$, for any $\ell \in \BN$. Thus, a natural choice for $\Qpos_\ell$ is to fix some distribution $\Qpos_{\Llocal}$ over sets in $\Sets{[\Llocal]}{k}$ and let $\Qpos_\ell$ be the distribution of a random shift of a sample from $\Qpos_{\Llocal}$. Formally, $\Qpos_\ell$ is the distribution of the shift $S + \Delta$, where $S \sim \Qpos_{\Llocal}$ and $\Delta \sim \Unif(\{ 0, 1, \ldots, \ell - \Llocal\})$. It is straightforward to see that such a construction ensures that \cref{asm:density-ratio-position} is satisfied with $\eta_\ell = \ell$. More broadly, any value $\eta_\ell \leq \poly(\ell)$ leads to interesting conclusions in the context of our results, so we interpret \cref{asm:density-ratio-position} as being fairly mild.
  }

Our main result states that for any classes $\Gpred, \Gval$ and distribution ensemble $\MP$ satisfying \cref{asm:density-ratio-position,asm:unique-selection,asm:local-relative}, the class $\Hpred(\Gpred, \Gval)$ enjoys length \generalization with respect to the ensemble $\MP$. 
\begin{theorem}[Provable length \generalization]
  \label{thm:length-extrap}
  Fix any $k \in \BN$, and consider any $k$-sparse functional attention class $\Hpred = \Hpred(\Gpred, \Gval)$ which is $\Llocal$-local and relative (\cref{asm:local-relative}) for some $\Llocal\in \BN$. 
  Then for any $k$-sparse planted correlations distribution ensemble $\MP$ (\cref{def:sparse-planted}) satisfying \cref{asm:density-ratio-position,asm:unique-selection}, and any integers $L, \bar L$ for which $\Llocal \mid \bar L - L$ and $L \geq 4\Llocal$, $\Hpred$ achieves $(L, \bar L, \eta_L \eta_{\bar L} \cdot L \bar L^2 \cdot \delta)$-length \generalization with respect to the ensemble $\MP$. %
\end{theorem}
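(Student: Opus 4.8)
The overall plan is to compare the population‑risk minimizer $\hat h$ of \cref{eq:define-hath} with the approximate realizer $h^\st \in \Hpred$ from \cref{asm:unique-selection}, and then conclude via $\E_{(\bX,\bY)\sim\MP_{\bar L}}[\Loss(h^\st(\bX),\bY)] \le \delta$. The starting observations are cheap. Since $h^\st$ is a legal competitor in \cref{eq:define-hath}, $\hat h$ has average loss at most $\delta$ over $\ell \sim \Unif([L/2,L])$, so de‑averaging over the $\Theta(L)$ admissible lengths gives $\E_{(\bX,\bY)\sim\MP_\ell}[\Loss(\hat h(\bX),\bY)] = O(L\delta)$ for \emph{each} $\ell \in [L/2,L]$; the triangle inequality for $\Loss = \|\cdot - \cdot\|$ together with \cref{asm:unique-selection} then yields $\E_{\MP_\ell}[\|\hat h(\bX)-h^\st(\bX)\|] = O(L\delta)$ for each such $\ell$. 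Applying the triangle inequality once more at length $\bar L$ (using that $h^\st$ is realizable at all lengths), it remains to prove $\E_{(\bX,\bY)\sim\MP_{\bar L}}[\|\hat h(\bX)-h^\st(\bX)\|] = O(\eta_L\eta_{\bar L} L\bar L^2\delta)$.

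The heart of the argument is a covering/stitching step that transports the length‑$L$ control of $\|\hat h - h^\st\|$ up to length $\bar L$ using \cref{asm:local-relative}. Since both $\hat h$ and $h^\st$ are $\Llocal$‑local and relative, on any input each is a softmax‑weighted average of its value function over windows $S$ with $\max(S)-\min(S)\le \Llocal$, and by relativity the score and value of a window depend only on the tokens it reads and on the relative offsets within the window — in particular not on the total length. Taking $\ell = L$ and using $\Llocal \mid \bar L - L$, I would cover $[\bar L]$ by the $(\bar L - L)/\Llocal + 1 = O(\bar L/\Llocal)$ windows $W_j = [1+j\Llocal,\, L+j\Llocal]$; since $L \ge 4\Llocal$, consecutive windows overlap in $\ge 3\Llocal$ positions, so every local window lies inside some $W_j$, and each local window $S$ has a canonical $W_{j(S)}$ with $S' := S - j(S)\Llocal \subseteq [L]$. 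For $(\bX,\bY)\sim\MP_{\bar L}$, conditioning on the planted set $S^\st = S$ gives $\bX_S = \bZ \sim \Qvoc$, other coordinates i.i.d.\ $\mu$, and $\bY = g^\st(\bZ)$ by \cref{eq:y-gstar-x}; restricting $\bX$ to $W_{j(S)}$ and reindexing to $[L]$ produces a length‑$L$ sequence whose planted set sits at $S'$ with all other coordinates i.i.d.\ $\mu$ — exactly the structure of $\MP_L$ conditioned on $S^\st = S'$. Bounding $\|\hat h(\bX) - h^\st(\bX)\|$ by the corresponding length‑$L$ quantity on $W_{j(S)}$ plus a ``truncation error'', summing over $S$ against $\Qpos_{\bar L}$, and using \cref{asm:density-ratio-position} in the form $\Qpos_{\bar L}(S)\le \eta_{\bar L}\,\Qpos_L(S')$ together with the fact that $S\mapsto S'$ is at most $O(\bar L/\Llocal)$‑to‑one, collapses the length‑$\bar L$ expectation into $O(\bar L/\Llocal)\cdot\eta_{\bar L}$ times $\E_{\MP_L}[\|\hat h - h^\st\|] = O(L\delta)$, plus the aggregated truncation error.

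The truncation error is the main obstacle. Restricting a length‑$\bar L$ input to a window $W_j$ drops the $\Theta(\bar L - L)$ local windows not contained in $W_j$ from both the numerator and the denominator of the softmax defining $\hat h$ (and likewise for $h^\st$); writing $\hat h(\bX) = (a+b)/(A+B)$ and $\hat h(\bX_{W_j}) = a/A$, with $(A,a)$ the contribution of windows inside $W_j$ and $(B,b)$ that of the dropped windows, one gets (after recentering the value function, which takes values in the diameter‑$\le 1$ set $\MY$) $\|\hat h(\bX)-\hat h(\bX_{W_j})\| \le 2B/(A+B)$, so everything reduces to controlling the \emph{relative attention mass} $B/A$ of the dropped windows. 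This is where realizability (\cref{asm:unique-selection}) is used essentially, not superficially: for $h^\st$ to achieve loss $\le\delta$ on $\MP_\ell$ its softmax must, in expectation, concentrate on windows carrying the planted tokens $\bX_{S^\st}$ — in particular $S^\st$ must itself be local with probability $1-O(\delta)$, since an $\Llocal$‑local value function cannot reproduce $g^\st(\bX_{S^\st})$ otherwise — so the expected relative mass of genuinely spurious windows (those disjoint from $S^\st$, whose scores, by relativity, are distributed exactly as background windows at length $L$) is $O(\delta)$‑small; and the length‑$L$ closeness $\E_{\MP_L}[\|\hat h - h^\st\|] = O(L\delta)$ transfers this concentration from $h^\st$ to $\hat h$. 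Tracking how these pointwise bounds compound across the $O(\bar L)$ windows and over the random token content, and handling the low‑probability event that $S^\st$ is not local, is the technical core and is responsible for the remaining $\eta_L\cdot\bar L$ factors; the rest is bookkeeping with the triangle inequality and the density ratios of \cref{asm:density-ratio-position}.
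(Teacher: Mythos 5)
Your opening steps match the paper exactly: since $h^\st$ is a feasible competitor in \cref{eq:define-hath}, de-averaging gives $\E_{(\bX,\bY)\sim\MP_\ell}[\Loss(\hat h(\bX),\bY)] \leq L\delta$ for each $\ell \in [L/2,L]$. After that, your route diverges from the paper's and develops a genuine gap.

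Your covering/stitching plan reduces everything to a ``truncation error'' bound of the form $\|\hat h(\bX) - \hat h(\bX_{W_j})\| \lesssim B/(A+B)$, where $B$ is the attention mass on windows dropped when restricting to $W_j$. You then argue this mass is $O(\delta)$-small because (i) realizability forces the softmax of $h^\st$ to concentrate on windows containing the planted tokens, and (ii) output-level closeness $\E_{\MP_L}[\|\hat h - h^\st\|] = O(L\delta)$ transfers this concentration to $\hat h$. Neither inference is sound. Realizability only controls the softmax-\emph{weighted average} of $g_1^\st$; it does not force the attention distribution to concentrate on the planted set — $h^\st$ could place most of its mass on non-planted windows whose values happen to land near $\bY$, or spread mass diffusely in a way that still averages to $\bY$. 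Similarly, closeness of the \emph{outputs} $\hat h(\bX)$ and $h^\st(\bX)$ does not constrain the \emph{attention patterns} of $\hat h$ to resemble those of $h^\st$; the map from attention weights to outputs is highly non-injective. So the relative-mass bound you need in the truncation step is not established, and this is the technical core of your argument.

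The paper sidesteps attention concentration entirely. It never tries to bound $B/(A+B)$. Instead, it exploits that predictions at \emph{consecutive intermediate lengths} are each close to the common target $g^\st(\bX_{S^\st})$ by realizability plus coverage (\cref{asm:density-ratio-position}), hence close to each other by the triangle inequality; this gives the two base-case bounds in \cref{eq:ep-asm}. The hybrid argument of \cref{lem:hybrid}, combined with the algebraic lemma \cref{lem:ab-pi-vi} for ratios of partial sums, then propagates these pairwise bounds out to length $\bar L$, using only the i.i.d.\ structure of the background tokens (under $\mu$) and the locality/relativity of $\Gpred$ to make the successive increments $A_i, B_i$ exchangeable. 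The accumulation is linear in the number of $\Llocal$-sized increments, which is where the $\bar L$ factors come from; a final application of coverage at length $\bar L$ (\cref{eq:eta-barl-ub}) and a sum over shifts closes the argument. If you want to salvage your covering approach, you would need to replace your attention-mass reasoning with a chain of triangle inequalities through intermediate lengths in the spirit of \cref{lem:hybrid} — at which point you have essentially reconstructed the paper's proof.
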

\arxiv{Notice that there is no explicit dependence on $k$ in the error bound $\eta_L \eta_{\bar L} \cdot L\bar L^2 \cdot \delta$; however, we have assumed $k \leq \Llocal \leq L/4$, meaning that one should only expect \cref{thm:length-extrap} to ``kick in'' once the training maximum length $L$ is sufficiently large as as function of $L$ (and thus of $k$). 
The requirement that $\Llocal \mid \bar L - L$ is unimportant and is made solely for convenience in the proof. Moreover, we make no attempt to optimize the error term $\eta_L \eta_{\bar L} \cdot L\bar L^2 \cdot \delta$. }

\arxiv{\paragraph{On the necessity of the assumptions.}}
In \cref{sec:lower-bounds}, we show that if we remove any one of the main assumptions of \cref{thm:length-extrap}, then length \generalization can fail to hold. %

\subsection{Improving length generalization: positional coupling}
\label{sec:theory-pc}
As discussed above, one limitation of \cref{thm:length-extrap} is its reliance on \cref{it:local} of \cref{asm:local-relative}, \icml{which is not satisfied in actual transformers: indeed, transformers may attend to tokens which are very far apart, as such dependencies can occur in natural language. Is there a way to mitigate this limitation?}\arxiv{which leads to the following restriction on the $k$-sparse distribution ensemble $\MP$: in typical examples (modulo some degenerate ones where, e.g., all functions in $\Gval$ are constant), in order to satisfy realizability (\cref{asm:unique-selection}), we will need the low-loss hypothesis $h^\st \in \Hpred(\Gpred, \Gval)$ to be of the form $h^\st = h_{g_0^\st, g_1^\st}$ for some $g_0^\st \in \Gpred$ which ``selects out'' the planted set $S^\st$ and $g_1^\st \in \Gval$ which correctly evaluates the label $\bY$ given $\bX_{S^\st}$; formally, for each $\ell \in \BN$, with high probability under $(\bX, S^\st, \bY) \sim \MP_\ell$,
\begin{align}
  g_0^\st(S^\st, \bX_{S^\st}) > -\infty, \quad g^\st(S, \bX_S) = -\infty \ \  \forall S \neq S^\st,\nonumber
\end{align}
and $g_1^\st(\bX_{S^\st}) = \bY$. (We formally call this property \emph{strong realizability} in \cref{def:strong-realizability}.) But by \cref{it:local} of \cref{asm:local-relative}, this means that $\max\{ S^\st \} - \min\{S^\st\} \leq \Llocal$ with high probability over the draw from $\MP_\ell$. Ideally, we would like to establish results for planted $k$-sparse ensembles $\MP$ for which the planted set $S^\st$ is \emph{not} local in this sense.
}

We now show how a theoretical abstraction of position coupling as discussed in \cref{sec:prelim-pc} can allow us to remove this locality requirement.
Roughly speaking, this abstraction of position coupling states that there is a joint distribution over $(S^\st, \psi_\ell)$, where $\psi_\ell : [\ell]\to [\ell]$ gives a way of ``rewriting'' position indices, so that the ``rewritten'' set $\psi_\ell(S^\st)$ of indices in the planted set $S^\st$ satisfies the locality condition of \cref{asm:local-relative}. In particular, for each position ID $i \in [\ell]$, the value of $\psi_\ell(i)$ should be interpreted as its ``coupled position ID'' as discussed in \cref{sec:prelim-pc}. %
More precisely, we have\icml{ (see \cref{sec:appendix-pc} for the full definition)}: %
\icml{
\begin{definition}[Informal version of \cref{def:amenability}]
  \label{def:amenability-informal}
  Fix a distribution ensemble $\MP$  with $k$-sparse planted correlations per \cref{def:sparse-planted} (defined by $\mu, \Qpos, \Qvoc$). A \emph{$\Llocal$-local position coupling} of $\MP$ is defined by,  for each $\ell \in \BN$, %
  a joint distribution $\Qposc$ over tuples $(S^\st, \psi_\ell)$ with $S^\st \in \Sets{[\ell]}{k}$ and $\psi_{\ell} : [\ell] \to [\ell]$ so that the marginal of $S^\st$ under $\Qposc$ is $ \Qpos_\ell$ %
  and with probability 1 under the draw of $(S^\st, \psi_\ell) \sim \Qposc$: %
 \begin{enumerate}\icml{[itemsep=0.0em,topsep=0.0em,leftmargin=0.35cm,rightmargin=0cm]}
 \item \label{it:local-coupling} $\max\{ \psi_\ell(S^\st) \} - \min \{ \psi_\ell(S^\st)\} \leq \Llocal$. %
 \item $\psi_\ell, S^\st$ satisfy some additional \arxiv{technical} conditions which, e.g., prevent coupling of non-planted tokens (i.e., outside of $S^\st$; %
\cref{def:amenability}). 
 \end{enumerate}
\end{definition}
}
\arxiv{\begin{definition}[Local position coupling\icml{; Formal version of \cref{def:amenability-informal}}]
  \label{def:amenability}
  Fix a distribution ensemble $\MP$  with $k$-sparse planted correlations per \cref{def:sparse-planted} (defined by $\mu, \Qpos, \Qvoc$). A \emph{$\Llocal$-local position coupling} of $\MP$ is defined by,  for each $\ell \in \BN$, %
  a joint distribution $\Qposc$ over $S^\st \in \Sets{[\ell]}{k}$ and a mapping $\psi_{\ell} : [\ell] \to [\ell]$ so that the marginal of $S^\st$ under $\Qposc$ is $ \Qpos_\ell$ %
  and with probability 1 under the draw of $(S^\st, \psi_\ell) \sim \Qposc$: %
 \begin{enumerate}
 \item \label{it:local-coupling} $\max\{ \psi_\ell(S^\st) \} - \min \{ \psi_\ell(S^\st)\} \leq \Llocal$. %
 \item \label{it:collisions} For each $i \not \in S^\st$, $|\psi_\ell^{-1}(\psi_\ell(i))| = 1$. (I.e., \emph{Indices not in $S^\st$ are not coupled.})
 \item \label{it:rank} For some fixed $k'$ (independent of $\ell$), $|\psi_\ell(S^\st)| = k'$ and the distribution of the tuple $(\{\rank_{\psi_\ell^{-1}(S^\st)}(j) \ : \ j \in \psi_\ell^{-1}(i))\}_{i \in \psi_\ell(S^\st)}$ does not depend on $\ell$.\footnote{For a set $\Omega \subset \BN$ and $i \in \Omega$, $\rank_\Omega(i)$ is the position of $i$ in $\Omega$ when the elements of $\Omega$ are sorted in increasing order; see \cref{sec:appendix-pc}. \cref{it:rank} is a technical condition which is satisfied by most practical position coupling schemes which have been proposed.}
 \end{enumerate}
\end{definition}
}

With \cref{def:amenability} in hand, we can now describe how hypotheses in a sparse functional attention class $\Hpred$, which \emph{may not satisfy locality}, can be transformed into new hypotheses which will use information from the output of the ``position coupling'' $\psi_\ell$ and \emph{will also satisfy locality}. Roughly speaking, given a sample $(\bX, S^\st, \bY) \sim \MP_\ell$, we will join (i.e., ``couple'') all tokens $\bX_i$ for which $\psi_\ell(i)$ is identical. We will denote the resulting distribution over sequences of ``coupled tokens'' by $\PC[\MP_\ell]$. Moreover, we define a ``position-coupled'' hypothesis class $\PC[\Hpred]$ whose hypotheses, when given sequences of coupled tokens as above, ``unpacks'' them and applies the corresponding hypothesis in $\Hpred$. In \cref{sec:appendix-pc}, we formally define these notions and prove the following proposition, which shows that this procedure of position coupling can remove the locality requirement from \cref{thm:length-extrap}:
\begin{proposition}[Informal version of \cref{prop:pc-length-extrap}]
  \label{prop:pc-length-extrap-informal}
If $\Hpred$ is a sparse functional attention class and $\MP$ is an ensemble with $k$-sparse planted correlations  which satisfies $\delta$-approximate strong realizability (\cref{def:strong-realizability}) and \cref{asm:density-ratio-position}, then for $L, \bar L$ satisfying the conditions of \cref{thm:length-extrap}, $\PC[\Hpred]$ achieves $(L, \bar L, \eta_L \eta_{\bar L} \cdot L\bar L^2 \cdot \delta)$-length \generalization with respect to the ensemble $\PC[\MP]$. 
\end{proposition}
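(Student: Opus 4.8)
The plan is to deduce the result from \cref{thm:length-extrap} by a reduction: position coupling converts the (possibly non-local) pair $(\Hpred, \MP)$ into a pair $(\PC[\Hpred], \PC[\MP])$ for which $\Llocal$-locality holds automatically, and which still satisfies every hypothesis of \cref{thm:length-extrap}, so that theorem can be applied as a black box. Concretely, I would first spell out $\PC[\MP]$ and $\PC[\Hpred]$: given $(\bX, S^\st, \bY) \sim \MP_\ell$ and a coupling $(S^\st, \psi_\ell) \sim \Qposc$, form a new sequence by bundling all tokens $\{ \bX_i : \psi_\ell(i) = j \}$ into a single ``coupled token'' at each index $j$ in the image of $\psi_\ell$. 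By \cref{it:collisions} only indices of $S^\st$ are ever bundled with others, so the coupled sequence has length $\ell - (k - k')$, and by \cref{it:rank} the combinatorial shape of the bundles (which original ranks merge at which coupled index) does not depend on $\ell$. Reindexing so the $\ell$-th distribution outputs length-$\ell$ sequences, I would check that $\PC[\MP]$ is a $k'$-sparse planted correlations ensemble (\cref{def:sparse-planted}): background distribution $\mu$ on singleton bundles, planted set $\psi_\ell(S^\st)$ of size $k'$, planted-location distribution the pushforward $\psi_\ell[\Qpos_\ell]$, planted-vocabulary distribution the (length-independent) pushforward of $\Qvoc$ under the bundling map, and the non-planted coordinates remaining i.i.d.\ $\mu$. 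The class $\PC[\Hpred]$ is then the $k'$-sparse functional attention class whose hypotheses ``unpack'' each bundle and evaluate a hypothesis of $\Hpred$; by construction it is relative, and---crucially---by \cref{it:local-coupling} it is $\Llocal$-local, since $\max \psi_\ell(S^\st) - \min \psi_\ell(S^\st) \leq \Llocal$ forces its attention-scoring functions to be $-\infty$ except on subsets of spread at most $\Llocal$.

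Next I would discharge the remaining hypotheses of \cref{thm:length-extrap} for $(\PC[\Hpred], \PC[\MP])$. For realizability (\cref{asm:unique-selection}), I use $\delta$-approximate \emph{strong} realizability of $\MP$ (\cref{def:strong-realizability}): there is $h^\st \in \Hpred$ which, up to $\delta$ loss, ``selects out'' the planted set $S^\st$ (assigns it finite attention score, all other size-$k$ sets $-\infty$) and outputs $g_1^\st(\bX_{S^\st}) = \bY$. Pushing $h^\st$ through the unpacking map of $\PC[\Hpred]$ produces a hypothesis whose attention concentrates on the coupled set $\psi_\ell(S^\st)$---of spread $\leq \Llocal$, so it genuinely lies in the \emph{local} class $\PC[\Hpred]$---and outputs the same label, hence has loss $\leq \delta$ on $\PC[\MP_\ell]$ for every $\ell$. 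For the coverage condition (\cref{asm:density-ratio-position}), since the planted-location distribution of $\PC[\MP]$ is $\psi_\ell[\Qpos_\ell]$, non-planted indices are never coupled (\cref{it:collisions}), and the bundle shape is length-independent (\cref{it:rank}), the density-ratio bounds for $\Qpos_\ell$ transfer to those for $\psi_\ell[\Qpos_\ell]$ with the same constants $\eta_\ell$ (up to the harmless relabelling of the length index by the constant $k - k'$).

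Having verified all hypotheses, \cref{thm:length-extrap} applies to the $\Llocal$-local, relative $k'$-sparse functional attention class $\PC[\Hpred]$ and the $k'$-sparse planted correlations ensemble $\PC[\MP]$, and yields $(L, \bar L, \eta_L \eta_{\bar L} \cdot L \bar L^2 \cdot \delta)$-length \generalization of $\PC[\Hpred]$ with respect to $\PC[\MP]$ for any $L, \bar L$ meeting the divisibility and size requirements of \cref{thm:length-extrap} (the constant shift $k - k'$ in the coupled lengths is absorbed into these requirements, using that the $L/2$ lower endpoint in \cref{def:length-extrap} may be perturbed by constants). This is exactly the claimed conclusion.

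The step I expect to be the main obstacle is making the bundling construction in the first paragraph rigorous: because $\psi_\ell$ is \emph{random} and jointly distributed with $S^\st$, one has to confirm that the coupled-vocabulary distribution and the i.i.d.\ structure of the non-planted coordinates really do factorize correctly and are genuinely independent of $\ell$---this is precisely the content that conditions \cref{it:collisions} and \cref{it:rank} of \cref{def:amenability} are engineered to guarantee---and, symmetrically, that it is \emph{strong} realizability rather than plain realizability which is the property that survives the unpacking map and lands inside the \emph{local} subclass of $\PC[\Hpred]$. Everything else (checking the coverage inequalities, tracking the constant shift, the final invocation) is routine.
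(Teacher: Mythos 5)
Your proposal follows essentially the same reduction as the paper: show that $\PC[\Hpred]$ and $\PC[\MP]$ satisfy all the hypotheses of \cref{thm:length-extrap} (locality and relativity of $\PC[\Hpred]$ via \cref{it:local-coupling} and \cref{eq:define-pc-g0}, the $k$-sparse planted structure and length-independence of the coupled vocabulary distribution via \cref{it:collisions,it:rank}, and realizability via strong realizability, which is indeed the property that survives the unpacking map) and then invoke \cref{thm:length-extrap} as a black box. Two small points where you deviate from the paper's construction: the paper keeps coupled sequences of length $\ell$ by filling in positions $i \notin \mathrm{Im}(\psi_\ell)$ with fresh samples from $\Unif([\ell]) \times \mu$ (see \cref{eq:pc-sequence}) rather than shrinking the length to $\ell - (k-k')$ and reindexing, which avoids your constant-shift bookkeeping; and, more substantively, the paper's formal statement (\cref{prop:pc-length-extrap}) does not attempt to derive the coverage condition for the pushforward $\psi_\ell[\Qpos_\ell]$ from coverage of $\Qpos_\ell$---it simply assumes coverage of $\psi_\ell[\Qpos_\ell]$ as a separate hypothesis, since the density-ratio bounds of \cref{asm:density-ratio-position} need not transfer through an arbitrary coupling map (your claim that they do ``with the same constants'' is the one step in your outline that I do not think goes through without further assumptions, and the paper sidesteps it).
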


\arxiv{
\begin{remark}[Theoretical justification for PoSE]
  \label{rmk:pose}
  It is natural to wonder if our theoretical framework allows us to justify other techniques used to induce length \generalization, such as PoSE (\cref{sec:prelim-pose}), in a sense akin to \cref{prop:pc-length-extrap-informal}. At a high level, PoSE is adjusting the {distribution} of position IDs during training, so that IDs always seen farther apart than the training context window at test time may nevertheless be observed in the same training instance. In other words, a model trained as such should ``interpret'' a greater range of sets $S$ as satisfying the locality requirement of \cref{asm:local-relative}. Thus, we conjecture that this adjustment allows us to remove the locality requirement in \cref{asm:local-relative} as well; we leave a formal proof of this fact as an intriguing direction for future work. 
\end{remark}
}

\section{Experiments}
\label{sec:experiments}
Conceptually, we view the main takeaways of \cref{thm:length-extrap} and \cref{prop:pc-length-extrap-informal} to be the following:
\begin{enumerate}\icml{[itemsep=0.0em,topsep=0.0em,leftmargin=0.35cm,rightmargin=0cm]}
\customitem{(T1)}\label{it:ta-sparsity} First, an important factor enabling length generalization is that the label $\bY$ depends on only $k$ tokens of the input (in the sense of \cref{def:sparse-planted} and in particular the relation $\bY = g^\st(\bX_{S^\st})$ in \cref{eq:y-gstar-x}). In particular, the parameter $k$ (which we refer to informally as the \emph{sparsity}) must be the same for both the lengths on which we train (namely, lengths $\ell \leq L$) and the length $\bar L$ to which we attempt to extrapolate, and sufficiently small compared to the maximum training length $L$. 
\customitem{(T2)}\label{it:ta-locality} Second, \emph{locality} of the hypothesis class (per \cref{it:local} of \cref{asm:local-relative}) plays an important role as well: the maximum distance $\Llocal$ between tokens which ``matter'' in predicting the label $\bY$ must be the same for lengths $\ell \leq L$ on which we train and the length $\bar L$ to which we extrapolate. Moreover, as this requirement is quite strong (and unrealistic for many problems of interest), one way to mitigate it is the technique of position coupling (per \cref{prop:pc-length-extrap-informal}). 
\end{enumerate}

In this section, we evaluate these conclusions for synthetic and natural language modeling data. %

\arxiv{
  \begin{remark}
  \label{rmk:local-relative}
  One might wonder why we emphasize \cref{it:local} but not \cref{it:relative} of \cref{asm:local-relative} in takeaway \ref{it:ta-locality}. In fact, the conceptual message of \cref{it:relative}, namely that position IDs only influence attention scores by their \emph{relative} information (i.e., the difference between different positions) is already captured by the fact that it is common to use \emph{relative positional embeddings} and variants (e.g., RoPE \cite{su_roformer_2023}, FIRE \cite{li_functional_2024}) in many open-source transformer architectures. Due to the success of such embeddings, in this sense the constraint imposed by \cref{it:relative} can ``come for free''.
\end{remark}}

\subsection{Length \generalization for sparse parity}
\label{sec:sparse-parity}

First, we discuss a simple setting which measures the degree to which length \generalization in transformers reflects the requirement from \ref{it:ta-sparsity} that the sparsity $k$ be sufficiently small and not grow as a function of the sequence's length. In particular, consider the following \emph{sparse parity} task:\footnote{This task is slightly different from standard formulations of sparse parity, where the sparsely chosen positions are not identified as part of the input. We use this version so as to allow a different set of $k$ positions to be chosen for each example.} given $k ,\ell\in \BN$, the input is drawn from a distribution $\Dsp_{k,\ell}$ over length-$2\ell$ sequences, where tokens at even-numbered positions are bits, and tokens at odd-numbered positions belong to some set $\Omega$. Exactly $k$ tokens at odd-numbered positions belong to some ``special subset'' $\Omega' \subset \Omega$, and the goal is to find the parity of the $k$ tokens immediately following them. See \cref{sec:data-format-sparse-parity} for precise details.

\paragraph{Experimental setup: data.} For each value of ${\Ktrain} \in {\{4,6,8,10,12\}}$, we train a transformer $\hat h_{\Ktrain}$ to predict the last token of samples drawn from $\Dsp_{\ell, k}$, where $\ell$ is sampled uniformly subject to the length of the sample satisfying {$2\ell \in [20, 50]$} and $k \sim \Unif([{\Ktrain}])$. We then evaluate the performance of each of the trained transformers $\hat h_{\Ktrain}$ on samples drawn from $\MD_{\ell, \ktest}$ of length {$2\ell \in [20, 500]$} and with sparsities $\ktest \in \{4,6,8,10,12,14,16\}$.

\paragraph{Experimental setup: model.} Our model is based off of the GPT-NeoX (decoder-only) transformer \cite{gpt-neox-library}, and uses rotary positional embeddings (RoPE). %
To ensure nontrivial length generalization performance, we combined RoPE with PoSE (\cref{sec:prelim-pose}).\footnote{This task is not well-suited for position coupling, since the output token depends on $2k$ input tokens, all of which have different position IDs.} Full training and evaluation details may be found in \cref{sec:synthetic-experimental-details}.  %

\begin{remark}
   Numerous other modifications to position IDs have been proposed for length generalization, such as \emph{position interpolation} and various enhancements, which typically modify the way the transformer uses the position IDs at \emph{inference time}, often after a small amount of fine-tuning (see \cref{sec:related-posids}). We stick with PoSE in this paper (when position coupling is not applicable) because of its simplicity and since (a) it does not require modifying the transformer's computations at inference time; and (b) the fine-tuning for position interpolation requires sequences of length given by the \emph{testing} context length, which fails to lie in our framework where we assume that any amount of training on such sequences is not allowed. Understanding the role of sparsity and locality for  length generalization in transformers which make these inference-time modifications to position IDs is left for future work. 
  \end{remark}

\paragraph{Results \& discussion.} The accuracy of the trained transformers $\hat h_{{\Ktrain}}$ for predicting the last (parity) token on samples of various lengths $\ell$ and sparsity values $k$ is shown in \cref{fig:sparse-parity}{ for $\Ktrain =\arxiv{8,}10$ and in \cref{fig:sparse-parity-extra} for the remaining values of $\Ktrain$}.
Two observations are notable: first, when the training sparsity ${\Ktrain}$ is small enough (i.e., $\Ktrain \leq 8$\icml{; see \cref{fig:sparse-parity-extra}}), then $\hat h_{\Ktrain}$ experiences near-perfect length \generalization up to lengths of 500, for all values of the test sparsity $\ktest\leq {\Ktrain}$. However, for test sparsity values $\ktest > {\Ktrain}$, the performance of $\hat h_{\Ktrain}$ deteriorates rapidly, for both in-distribution and out-of-distribution lengths. This behavior is consistent with \cref{thm:length-extrap}, which implies that good length \generalization occurs as long as the sparsity  does not change between the train and test distributions and is sufficiently small as a function of the training length. 

A corollary of this reasoning is that for a fixed maximum training length $L$, the length \generalization behavior with respect to the distribution ensemble $(\Dsp_{\ell,k})_{\ell \in \BN}$ should degrade at some point if the sparsity $k$ is allowed to increase enough. 
(In particular, our theoretical guarantee on length \generalization only holds for sufficiently small training sparsity, i.e., $\Ktrain \leq L/4$ in the case of \cref{thm:length-extrap}, though the particular threshold is likely different in the case of actual transformers). This behavior is mirrored in \cref{fig:sparse-parity}\icml{ as well as \cref{fig:sparse-parity-extra} in \cref{sec:sparse-parity-extra-results}}, where, for each fixed value of $\Ktrain \in \{10,12\}$, larger values of $k \in [\Ktrain]$ have worse length \generalization despite all having near-perfect in-distribution performance.\footnote{Recall that $\hat h_{\Ktrain}$ was trained by placing equal weights on all sparsities $k \leq \Ktrain$, so in this respect the distributions $\Dsp_{\ell,k}$ for $k \leq \Ktrain$ were treated ``on equal footing.''}

\icml{
  \begin{figure*}[t]
    \centering
    \begin{subfigure}[b]{0.3\textwidth}
      \centering
\includegraphics[width=\textwidth]{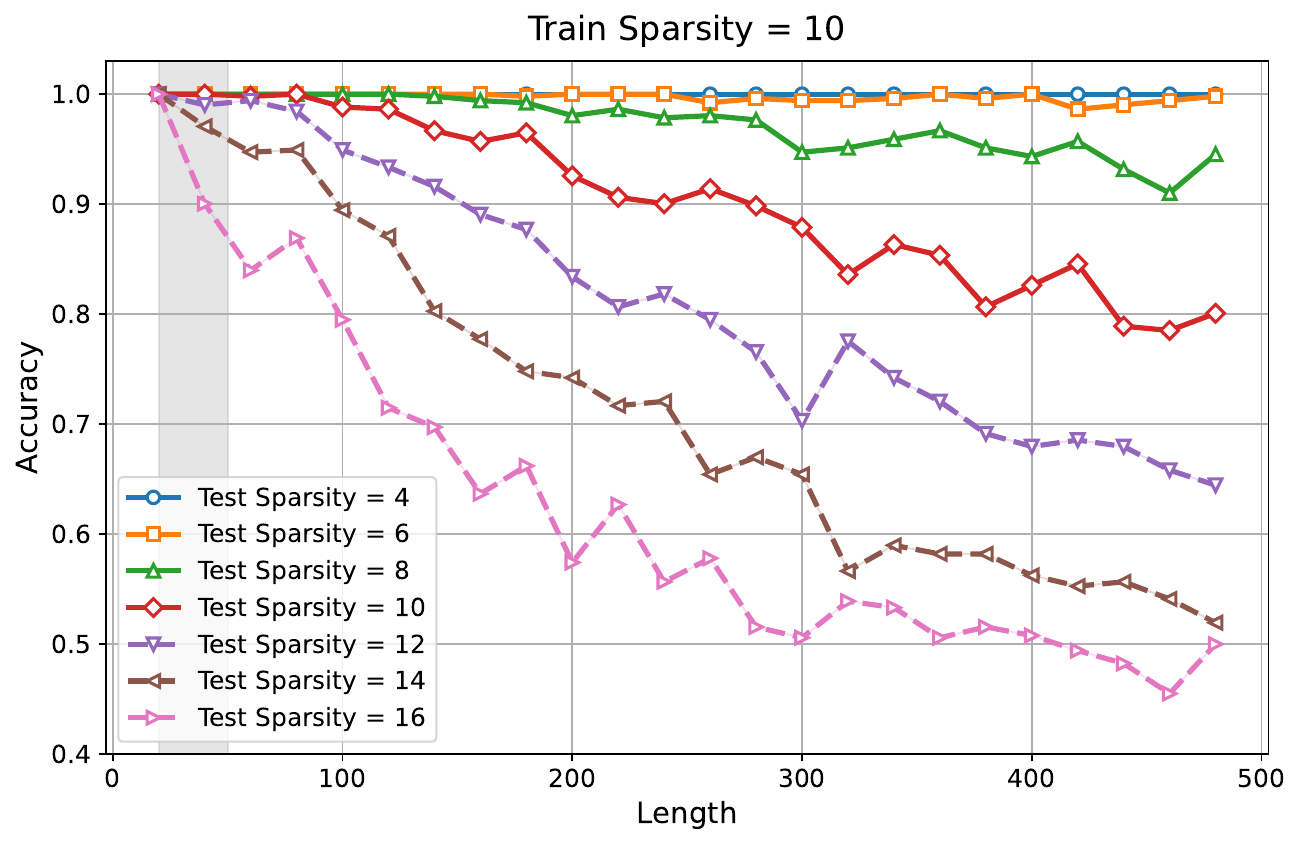}   
      \caption{Sparse parity}
      \label{fig:sparse-parity}
    \end{subfigure}
  \begin{subfigure}[b]{0.3\textwidth}
    \centering
    \includegraphics[width=\textwidth]{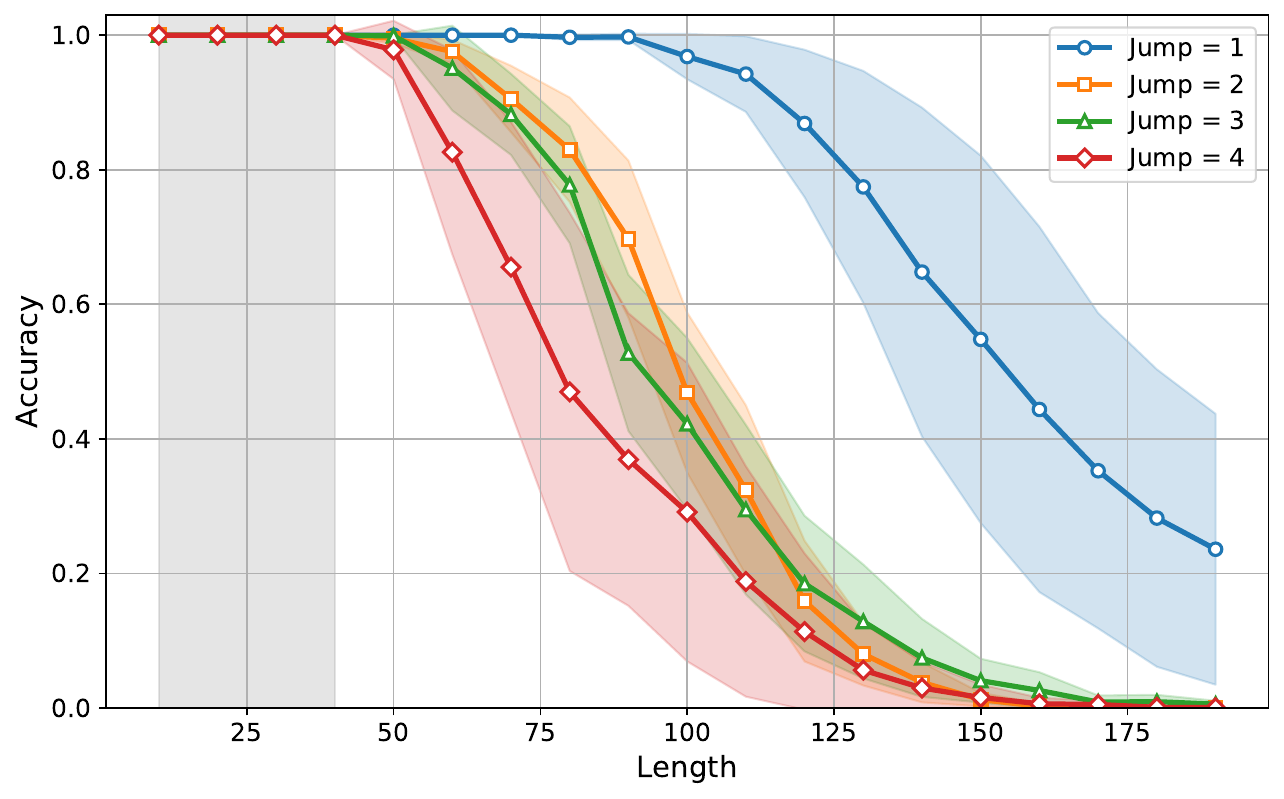}
    \caption{Parity with scratchpad \& \PPC}
    \label{fig:parity-cot}
  \end{subfigure}
\begin{subfigure}[b]{0.3\textwidth}
    \centering
    \includegraphics[width=\textwidth]{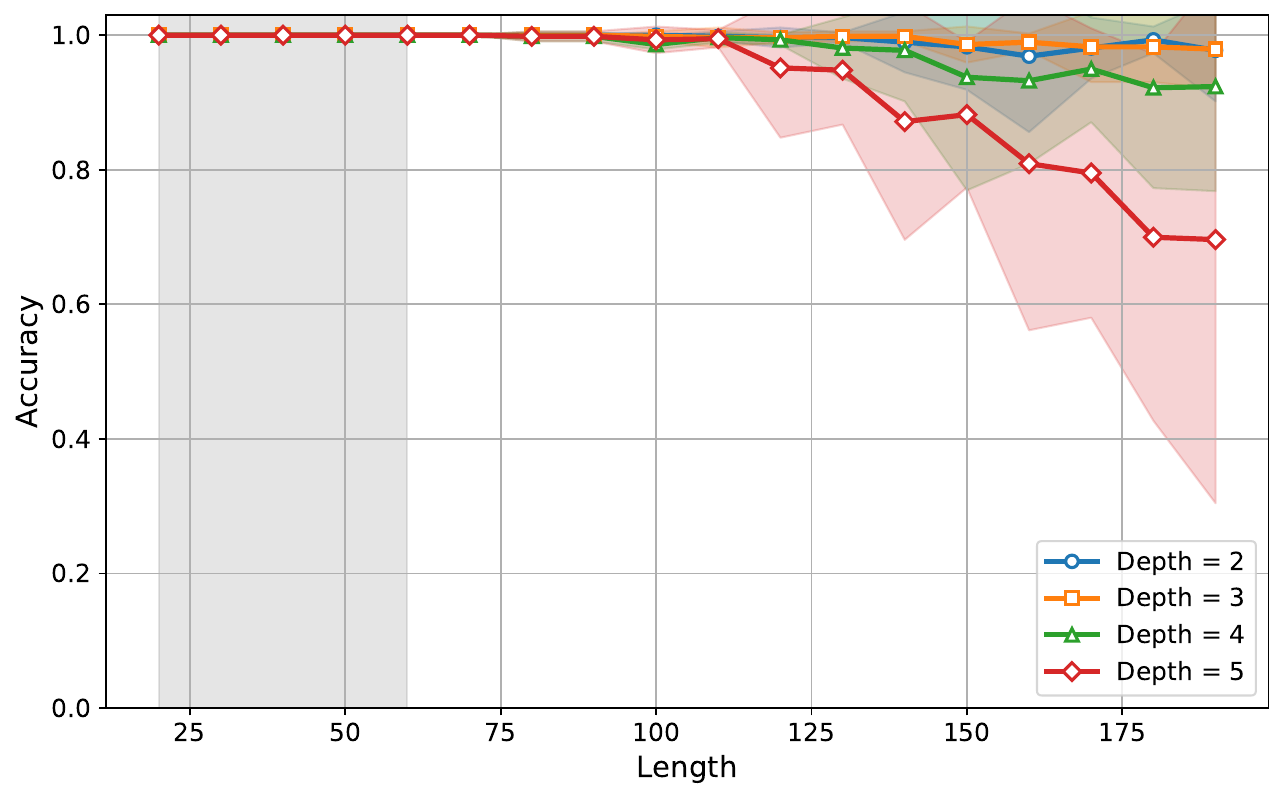}
    \caption{Variable asst.~with scratchpad \& \PPC}
    \label{fig:pointer-cot}
  \end{subfigure}
  \caption{Length generalization for experiments in \cref{sec:sparse-parity,sec:cot-pc}. Shaded areas in {(b), (c)} represent $95\%$ CIs over multiple training runs.}
  \label{fig:icml-synthetic-main}
  \end{figure*}
}

  \arxiv{
\begin{figure}[h!]
    \centering
    \foreach \i in {8,10} {
        \includegraphics[width=0.45\textwidth]{plots/output_dir.model_sparse_parity-0-0.5_ngram\i vtrain_FINALlgplots.pdf}
    }
    \caption{Length generalization for the sparse parity task with $\Ktrain \in \{4,6,8,10,12\}$ for various values of $\ktest$. \db{Results for 12 seem worse than 10 across all sparsity levels. Should we just exclude these or try to train them better?}\noah{yes, good point. It's not an issue of not training long enough (test loss bounces around the same values after roughly 10000-20000 iterations, and I trained for 50000). My guess is that forcing it to fit the 12-sparse parities somehow makes it do something funny which means it length-generalizes worse. That said, I don't think this is necessarily a problem since the only comparison that really makes sense is amongst the different lines \emph{within} each plot: i.e., for a given model which performs well at multiple sparsities in-distribution, its length generalization behavior is worse for higher sparsities. (Or if that's too messy to explain we can just delete the sparsity=12 plot.)} \db{I also wonder if focusing on in lengths <150 might make things more clear for 8 and 10?}\noah{What do you mean? Which thing in particular do you want to make clear?}}
    \label{fig:sparse-parity}
  \end{figure}
  }

\subsection{Scratchpad with \predpc}
\label{sec:cot-pc}
Many tasks, unlike sparse parity, have the property that 
the sparsity of the next-token prediction task (as formalized by, e.g., \cref{def:sparse-planted}) can be quite large and in fact \emph{increase} from shorter to longer lengths. To help achieve length \generalization, one approach which has been suggested in numerous existing works \cite{anil_exploring_2022,hou_universal_2024} is the \emph{chain-of-thought/scratchpad} technique \cite{nye_show_2022,wei_chain_2022,lewkowycz_solving_2022}. It proceeds by computing tokens representing intermediate steps, such as intermediate parities when the task is to compute the parity of a sequence of $\ell$ bits. It has been observed that the scratchpad technique alone is insufficient to ensure length generalization \cite{anil_exploring_2022,dziri_faith_2023,hu_casebased_2024,kazemnejad_impact_2023,lanchantin_learning_2023}. However, a number of recent works \cite{cho_position_2024,cho_arithmetic_2024,mcleish_transformers_2024} have shown that the technique of \emph{position coupling} (see \cref{sec:prelim-pc}), when combined with a scratchpad (and even when used on its own, when appropriate), can allow significant length \generalization on arithmetic tasks such as parity, addition, and multiplication. This development parallels our theoretical results in \cref{sec:theory-pc}, where we showed that position coupling removes the stringent requirement of locality, which is not satisfied in practical settings. 

One downside of position coupling is that it requires significant structure in the format of the data sequence, including the scratchpad: in particular, once the length is fixed, the \emph{coupled position IDs} (see \cref{sec:prelim-pc}) in all tokens the transformer is trained to predict  must be determined, as they must be fed into the model to predict each successive token. (For instance, for the string reversal example in \cref{sec:prelim-pc}, when $L$ denotes the length of the string to be reversed, the sequence of position IDs is given \arxiv{by \cref{eq:string-reversal-posids}}\icml{as in \cref{sec:prelim-pc}}.) %
As a consequence, the applications of position coupling are currently limited to a small number of synthetic tasks where the computation of each token in the sequence (including the scratchpad) can be coupled with a fixed position of the input.

Towards relaxing this limitation to allow sequences where the coupled position IDs can be \emph{input-dependent}, we propose \emph{\predpc} (\PPC), which modifies the transformer architecture to instead \emph{predict} the coupled position ID for each next token. In particular, we add an additional output embedding module so that, at each step, the transformer predicts two IDs: the next token ID, as well as the coupled position ID for that token. At generation time, those two predicted tokens are fed in as the position ID and token ID at the following position. 
In our results, we report the fraction of examples on which the model correctly predicts \emph{all tokens and coupled position IDs}. %

\arxiv{\paragraph{Experimental goals.} In the experiments discussed below, we aim to: (a) show that \predpc can successfully be applied to improve length generalization on instances where position IDs in the scratchpad must be predicted; and (b) validate our theoretical takeaways emphasizing the importance of \emph{sparsity} \ref{it:ta-sparsity} and \emph{locality}  \ref{it:ta-locality} in controlling length \generalization for instances with a scratchpad. Towards the latter goal, %
we will evaluate the impact of (i) removing position coupling, which as discussed in \cref{sec:theory-pc} corresponds to decreasing locality; or (ii) (partially) removing the scratchpad, which for our scratchpad formats will decrease the sparsity. %
}

\paragraph{Experimental setup: model.} 
We use the same NeoX model as described in \cref{sec:sparse-parity}, with the following modifications to implement position coupling. Following \cite{cho_position_2024,cho_arithmetic_2024,mcleish_transformers_2024}, we use absolute position embeddings %
and fix some integer $\Lmax$ denoting the maximum possible length of test-time sequences. During training, we shift each sequence of coupled position IDs by a uniformly random offset subject to the constraint that all shifted position IDs be at most $\Lmax$. In particular, if the sequence of coupled position IDs ranges from $1$ to $\ell$, then we would shift by a uniform element of $\{0, 1, \ldots, \Lmax - \ell \}$. We remark that using RoPE with PoSE (as oposed to \PPC) performs significantly worse; see \cref{sec:synthetic-experimental-details}.

\subsubsection{Warmup: parity with scratchpad.}
\label{sec:parity-scratchpad}

As a warmup, we first consider the task of evaluating the parity of a sequence of $\ell$ bits with use of a scratchpad (and \PPC) to compute intermediate parities (as in, e.g., \citet{anil_exploring_2022,cho_arithmetic_2024}).
The scratchpad is structured as follows: it has length $\ell$, and the $i$th token of it is the parity of the first $i$ tokens of the input; thus, the final ($\ell$th) token of the scratchpad is the desired answer, i.e., the parity of all $\ell$ input tokens. 
To measure the impact of sparsity, we consider the following modification of the standard scratchpad format, which we refer to as having a scratchpad with ``jumps''. For values of $k \in \{1, 2, 3, 4 \}$, we only write every $k$th bit in the scratchpad, meaning that the $i$th bit of the scratchpad is the parity of the first $ki$ tokens of the input. In particular, the case $k=1$ corresponds to the standard scratchpad format. %
We also modify the position coupling IDs so that each position ID is repeated $k$ times in the input sequence. Full details of the data and position coupling formats are in \cref{sec:parity-scratchpad-data-format}. 

\paragraph{Results \& discussion.} \cref{fig:parity-cot} shows the length \generalization behavior for our transformer model trained as discussed above, for ``jump lengths'' $k \in \{1,2,3,4\}$. Note that the jump length $k$ parametrizes the sparsity of the next-token prediction task: each token in the scratchpad depends on $k+1$ previous tokens (namely, the immediately preceding one together with the $k$ tokens in the input which correspond to that step of computation).
While all $4$ jump lengths attain near-perfect accuracy for in-distribution lengths, smaller values of $k$ have superior performance on longer out-of-distribution lengths, consistent with  our takeaway \ref{it:ta-sparsity}. This fact is particularly notable in light of the fact that for jump length $k$, the scratchpad is \emph{shorter} by a factor of $k$, which a priori could presumably have made it easier to correctly predict all tokens and position IDs correctly (the metric measured in \cref{fig:parity-cot}). Thus, at least in this task, \emph{sparsity (as measured here by $k$) is a more important determinant of length \generalization than the length of the output sequence}. 
\arxiv{
  \begin{figure}[h!]
    \centering
    \includegraphics[width=0.4\textwidth]{plots/output_dir.model_parity_cot-0-_-0.5__trial_v2_position_options-poscoupling-210FINALlgplots.pdf}
    \caption{Parity with scratchpad and \predpc}
    \label{fig:parity-cot}
  \end{figure}
  }

\subsubsection{Variable assignment with scratchpad.}
\label{sec:va-scratchpad}
Next, we consider the more challenging task of \emph{variable assignment} with a scratchpad, of which several slight variants have been studied in numerous prior works \arxiv{(sometimes under different names, like ``pointer chasing'')} \arxiv{\cite{zhang_unveiling_2023,zhang_pointer_2022,lanchantin_learning_2023,anil_exploring_2022,hsieh_ruler_2024,peng_limitations_2024}.}
\icml{\cite{zhang_unveiling_2023,zhang_pointer_2022,lanchantin_learning_2023,peng_limitations_2024}.}

\paragraph{Experimental setup: data overview.} In the variable assignment problem, we fix a \emph{depth} $d$, denoting the number of ``hops'' it takes to compute the final value. Roughly speaking, the goal is as follows: given a sequence of ``variable assignments'' of the form $v \gets v'$ (where $v,v'$ are token IDs), together with a \emph{starting variable} $\vinit$, follow a sequence of $d$ assignments starting at $\vinit$.\arxiv{

}In more detail, a problem instance consists of some number of ``chains'', each of the form $v_1 \gets v_2; v_2 \gets v_3 ; \cdots; v_d \gets v_{d+1}$ for some \emph{depth parameter} $d$. The variable assignments in each of these chains are interleaved randomly; exactly one of the chains has $v_1 = \vinit$. The goal of the task is to find the final variable $v_d$ on the chain for which $v_1 = \vinit$. To do so, we use a scratchpad which writes out in full the chain starting with $\vinit$, together with position coupling, which matches each element of this chain on the scratchpad with its corresponding position in the input. The data format and position coupling scheme are presented in detail in \cref{sec:var-assign}. We mention here that, because the chains are interleaved randomly, the position IDs used in the position coupling scheme \emph{depend on the particular input instance}. Thus, the use of \predpc is crucial in order to be able to feed in the correct coupled position IDs in the scratchpad at inference time.\footnote{Simply feeding in the correct value as the next position ID would be ``cheating'' as it would be performing part of the relevant computation of the scratchpad.} We train on input sequences of length $\ell \in [20,60]$, and test on lengths $\bar L \in [20, 200]$.

\paragraph{Results.} \cref{fig:pointer-cot} shows our results for the variable assignment problem with position coupling: the technique of \predpc allows good length \generalization for test lengths up to 3 times the training sequence length, for depths $d \in \{2,3,4,5\}$. In \cref{fig:pointer-cot-absshift,fig:pointer-rope-pose,fig:pointer-cot-rope-pose}\icml{ (\cref{sec:var-assign-results})}, we present baselines that remove either just the position coupling or the scratchpad altogether; as predicted by takeaways \ref{it:ta-locality} and \ref{it:ta-sparsity}, respectively, these modifications significantly harm length \generalization. \emph{In particular, using RoPE with PoSE (with or without scratchpad) performs significantly worse than \PPC.}

\arxiv{While in this paper, we have discussed two examples in which \predpc can be effectively applied as a proof of concept, we believe that \PPC can be (a) combined with other approaches, including improved positional embeddings such as FIRE \cite{li_functional_2024} and (b) be applied to several other synthetic (and potentially natural language) tasks. We leave these investigations as exciting directions for future work.}

\arxiv{\begin{figure}[h]
  \centering
    \arxiv{
\begin{subfigure}[b]{0.45\textwidth}
    \centering
    \includegraphics[width=\textwidth]{plots/output_dir.model_mon_pointer_chasing_cot_posemb_trial_v_position_options-poscoupling-230FINALlgplots.pdf}
    \caption{Variable assignment with scratchpad and \PPC}
    \label{fig:pointer-cot}
  \end{subfigure}
}  
    \begin{subfigure}[b]{0.45\textwidth}  \includegraphics[width=\textwidth]{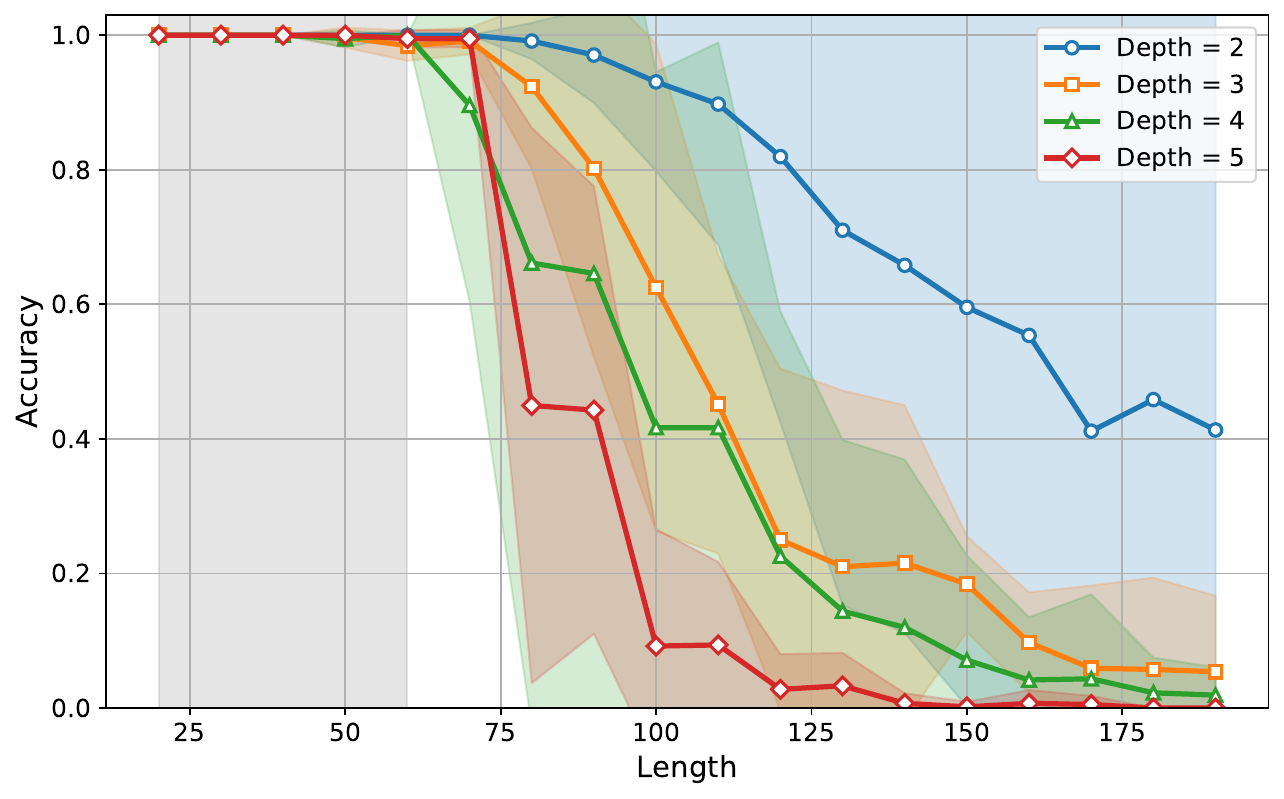}
      \caption{Absolute positional embeddings with random shift}
      \label{fig:pointer-cot-absshift}
    \end{subfigure}
    \begin{subfigure}[b]{0.45\textwidth}
      \includegraphics[width=\textwidth]{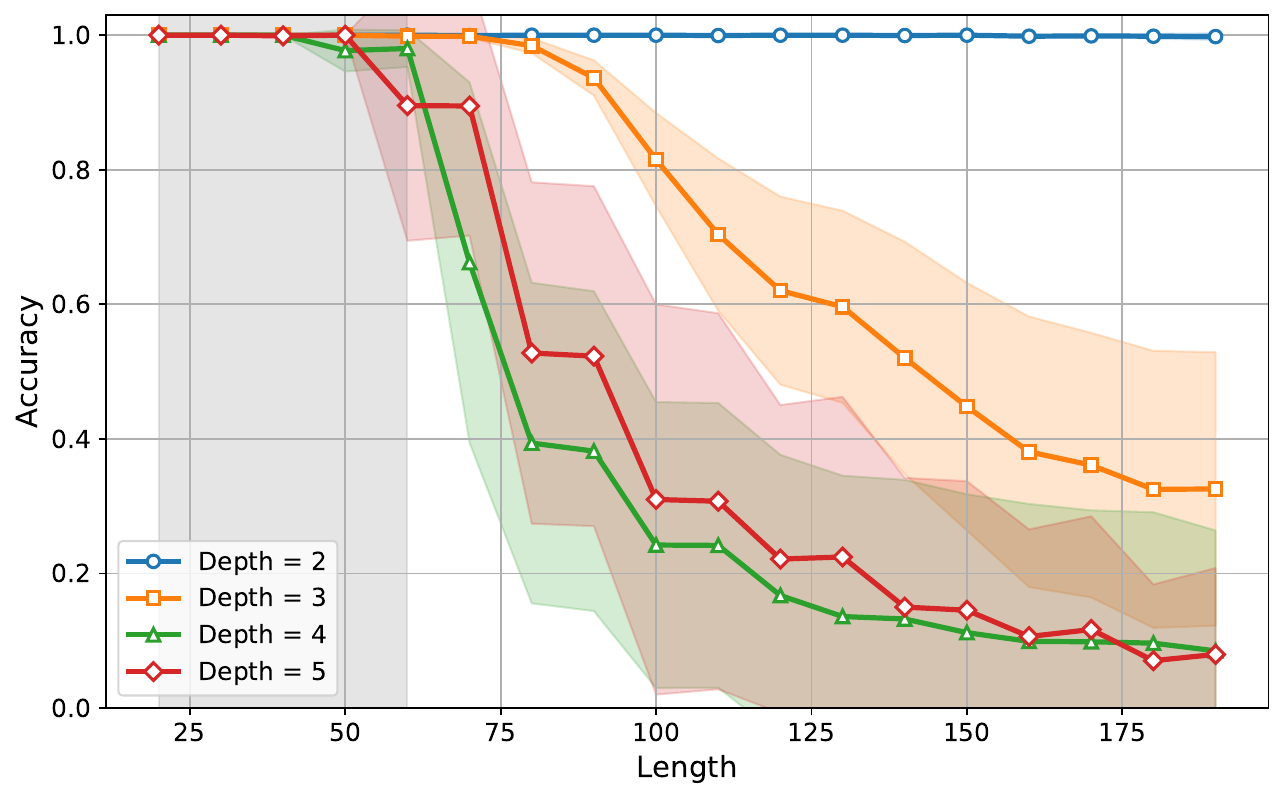}
      \caption{RoPE with PoSE}
      \label{fig:pointer-rope-pose}
    \end{subfigure}
        \begin{subfigure}[b]{0.45\textwidth}
      \includegraphics[width=\textwidth]{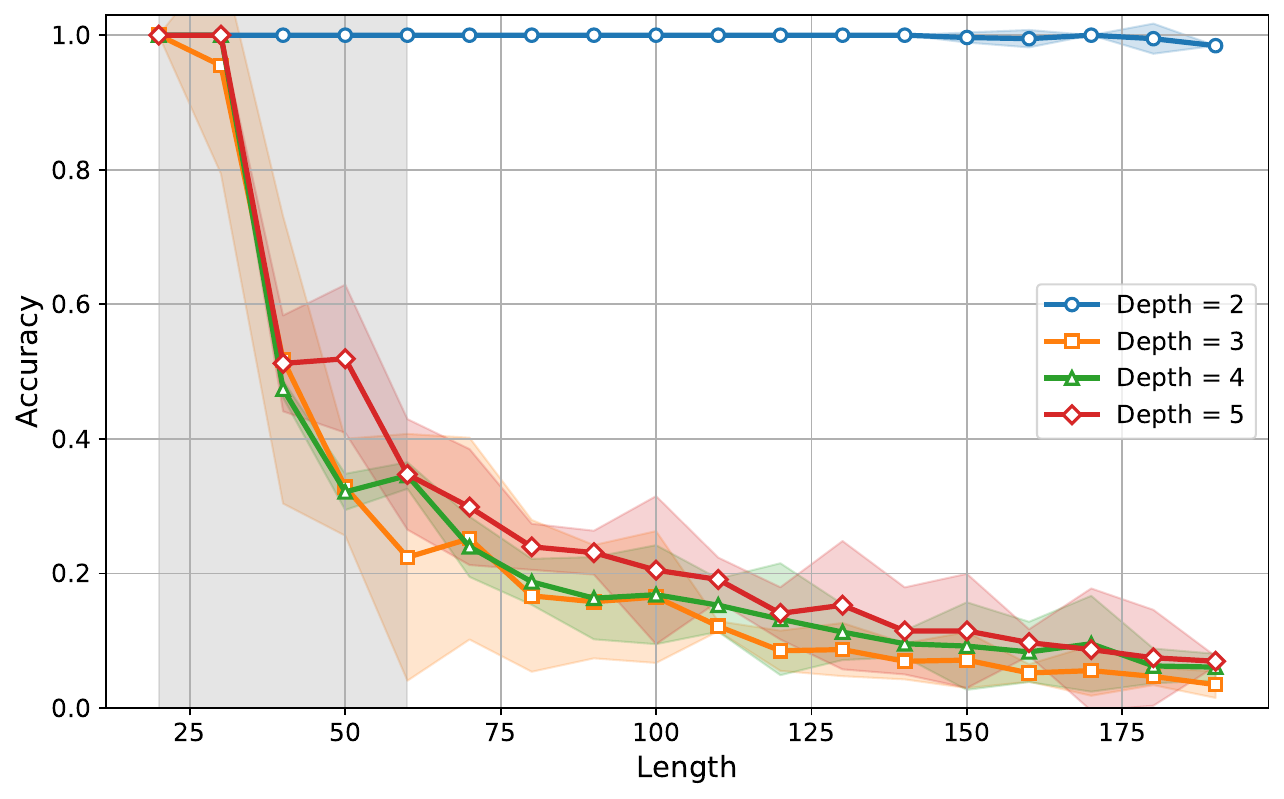}
      \caption{RoPE with PoSE \& No Scratchpad}
      \label{fig:pointer-cot-rope-pose}
    \end{subfigure}

    \caption{Length generalization figure for variable assignment experiments (\cref{sec:parity-scratchpad}). \cref{fig:pointer-cot-absshift,fig:pointer-rope-pose,fig:pointer-cot-rope-pose} show length generalization behavior with modifications that replace \predpc; length generalization is significantly worse than that in \cref{fig:pointer-cot}. \cref{fig:pointer-cot-absshift,fig:pointer-cot-rope-pose} report the full-string accuracy on the scratchpad; \cref{fig:pointer-rope-pose} (which does not use a scratchpad) reports the accuracy of the model at predicting the answer token. Moreover, consistent with takeaway \ref{it:ta-sparsity} it appears that, even when restricting to RoPE with PoSE, having a scratchpad is superior to not having one (even when accuracy in the scratchpad case is measured by full-string correctness, as in \cref{fig:pointer-cot-rope-pose}), though there is significant variance between different training runs. }
    \label{fig:pointer-cot-appendix}
  \end{figure}
}

  \subsection{Sparsity \& length generalization in natural language}
  \label{sec:experiments-natural-language}
Finally, we study the interplay between sparsity and length generalization in natural language. \arxiv{As is typical in experiments involving natural language, we use the \emph{perplexity} %
of a model on test sequences\footnote{The perplexity of a language model on a sequence is the exponential of the mean of the negative log likelihoods of the model's predictions of each token in the sequence.} to evaluate its performance.  Two challenges that arise in this setting are: 
(1) a trivial way to achieve reasonably good length \generalization from sequences of length $\leq L$ to those of length $\bar L > L$ is to simply ignore tokens more than $L$ units in the past when given a sequence of length $\bar L $, and 
(2) since there is not a ``correct'' functional form for the value of each token as a function of previous ones, it is unclear how to define ``sparsity'' of the next-token prediction task. %

In order to understand length \generalization in models beyond the trivial baseline suggested in (1) above, we ask:  \emph{given $L < \bar L$, can a transformer trained on natural language sequences of length $L$ achieve smaller perplexity on sequences of length $\bar L$ than on sequences of length $L$?} Previous works have shown a positive answer to this question (as will our experiments) when one uses appropriate modifications to the positional embeddings, namely PoSE \cite{zhu_pose_2024} or position interpolation \cite{chen_extending_2023,emozilla_dynamically_2023,peng_yarn_2023,chen_clex_2024}. In light of this, a natural way to address challenge (2) above is as follows: \emph{is there a way to select a small number $k$ of the tokens more than $L$ units in the past from the present token, so that, using those $k$ tokens together with the most recent $L$ tokens, we can nearly recover the perplexity of the model on \emph{full sequences} of length $\bar L$?} Informally, we are asking if the dependence on tokens more than $L$ units in the past is \emph{sparse}, in terms of beating the trivial baseline for length \generalization.
}\icml{
  A trivial way to obtain generalization from sequences of length $\leq L$ to those of length $\bar L > L$ is to simply ignore tokens more than $L$ units in the past. 
  To understand length \generalization in models beyond this baseline, we ask:  \emph{given $L < \bar L$, can a transformer trained on natural language sequences of length $L$ achieve smaller perplexity on sequences of length $\bar L$ than on sequences of length $L$?} Previous works have shown a positive answer to this question (as will our experiments) when one uses appropriate modifications to the positional embeddings, namely PoSE \cite{zhu_pose_2024} or position interpolation \cite{chen_extending_2023,emozilla_dynamically_2023,peng_yarn_2023,chen_clex_2024}. In light of this, a natural way to measure sparsity is as follows: \emph{is there a way to select a small number $k$ of the tokens more than $L$ units in the past from the present token, so that, using \emph{just} those $k$ tokens together with the most recent $L$ tokens, we can nearly recover the perplexity of the model on \emph{full sequences} of length $\bar L$?} Informally, we are asking if the dependence on tokens more than $L$ units in the past is \emph{sparse}, in terms of beating the trivial baseline for length \generalization.
}

\paragraph{Experimental setup.} We trained our models using the OLMo codebase \cite{groeneveld_olmo_2024} on the C4 dataset \cite{raffel_exploring_2019}. We trained a transformer model $\hshort$ %
using context length $L = 64$.\footnote{We chose the context length to be so short in order to decrease the performance of the trivial baseline for length \generalization which ignores all but the $L$ most recent tokens.} Our aim is to length extrapolate to a context length of $\bar L = 2L = 128$. 
We used the PoSE technique with $\Lmax = \bar L$ and $c = 2$ chunks. We found that training from scratch using PoSE did not yield good performance at length $\bar L$, so instead we first trained the model without PoSE for roughly 70\% of the iterations and on the final 30\% of the iterations used PoSE. Precise hyperparameters may be found in \cref{sec:additional-nlp}.

\begin{figure}[h]
    \centering    \includegraphics[width=0.4\textwidth]{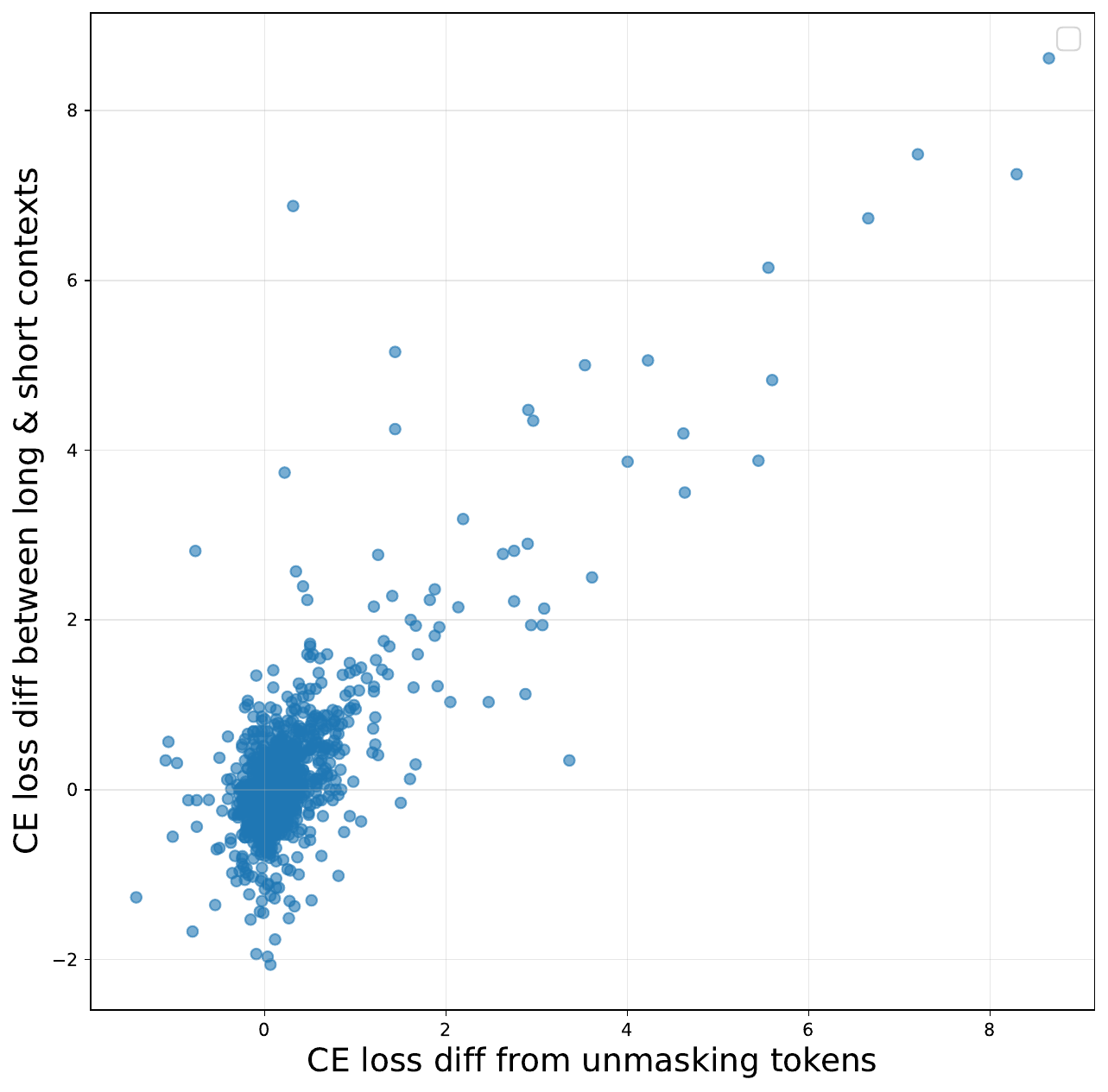}
    \caption{Cross-entropy loss differences obtained by unmasking $k$ influential tokens ($x$-axis) or all of the first 64 tokens ($y$-axis).}
    \label{fig:olmo-sparse}
  \end{figure}

\arxiv{\paragraph{Formal setup for results.} Let $\hshort(\bX_i \mid \bX_{1:i-1})$ denote the probability $\hshort$ assigns to token $\bX_i$ given the context $\bX_{1:i-1}$. For each test example $\bX = (\bX_1, \ldots, \bX_{\bar L})$, we first computed a set of $k=5$ ``influential'' tokens for predicting $\bX_{\bar L}$, chosen amongst the first $\bar L - L$ tokens, by finding the $k$ values of $j \in [\bar L - L-1]$ which minimize
\begin{align}
\hshort(\bX_{\bar L} \mid \bX_{1:j-1}, \bX_{j+1:\bar L - 1})\label{eq:choose-j},
\end{align}
i.e., where token $\bX_j$ is masked out in all attention computations. Let $J_1, \ldots, J_k$ denote these $k$ tokens.\footnote{This is essentially equivalent to the ``leave-one-out'' baseline in \citet{cohen_contextcite_2024}.}
We then compute the following three negative log-likelihoods:
\begin{align}
  \ML_{\mathsf{short}} :=& -\log \hshort(\bX_{\bar L} \mid \bX_{\bar L - L:\bar L - 1})\nonumber\\
  \ML_{\mathsf{long}} :=& -\log \hshort(\bX_{\bar L} \mid \bX_{1:\bar L - 1})\nonumber\\
  \ML_{\mathsf{short,sparse}} :=& -\log \hshort(\bX_{\bar L} \mid \bX_{J_{1:k}}, \bX_{\bar L - L: \bar L - 1})\nonumber,
\end{align}
where to compute $\ML_{\mathsf{short,sparse}}$ we mask out all tokens in the first $\bar L - L$ tokens except those at positions $J_1, \ldots, J_k$. In words, $\ML_{\mathsf{long}}, \ML_{\mathsf{short}}, \ML_{\mathsf{short,sparse}}$ denote the cross-entropy losses for predicting $\bX_{\bar L}$ when the (a) full context $\bX_{1:\bar L-1}$ is used, (b) the $L$ most recent tokens $\bX_{\bar L - L: \bar L - 1}$ are used, and (c) the $L$ most recent tokens as well as $\bX_{J_{1:k}}$ are used, respectively. We illustrate the quantities $\ML_{\mathsf{short}}, \ML_{\mathsf{long}}, \ML_{\mathsf{short,sparse}}$ for a particular choice of $L, \bar L, J_{1:k}$ in \cref{fig:lshortlong}.

\begin{figure}
  \centering
  \includegraphics[width=0.5\textwidth]{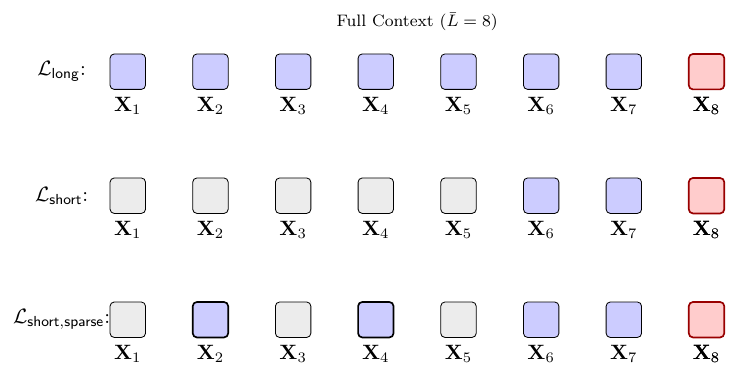}
  \caption{Illustration of the computation of $\ML_{\mathsf{short}}, \ML_{\mathsf{long}}, \ML_{\mathsf{short,sparse}}$ with $\bar L = 8, L = 2$, $k = 2$, $J_1 = 2, J_2 = 4$. Tokens shaded gray are \emph{masked} (i.e., \emph{not} attended to) while those shaded blue are \emph{not masked} (i.e., are attended to).}
  \label{fig:lshortlong}
\end{figure}

\begin{table}
\centering
\begin{tabular}{@{}lcc@{}}
\toprule
\textbf{Train Context Length (Model)} & \textbf{Eval Length = 64} & \textbf{Eval Length = 128} \\ \midrule
64 ($\hshort$)                      & 17.59                          & 14.74                            \\
128    ($\hlong$)                  & 17.62                          & 14.49                            \\ \bottomrule
\end{tabular}
\caption{Perplexities for (a) model trained on context length of 64 (top); and (b) model trained on context length of 128 (bottom).}
\label{tab:training-ppl}
\end{table}

\paragraph{Results \& discussion.} \arxiv{\cref{tab:training-ppl} confirms that the perplexity  of $\hshort$ on a context window of $128$ is less than that on a context window of $64$, verifying that $\hshort$ is able to utilize information beyond the shorter length-64 training context window.} In \cref{fig:olmo-sparse}, we plotted the pairs $(\ML_{\mathsf{short}} - \ML_{\mathsf{long}}, \ML_{\mathsf{short}} - \ML_{\mathsf{short,sparse}})$ for each example $\bX$. %
As can be seen, the quantity $\ML_{\mathsf{short}} - \ML_{\mathsf{long}}$, which may be interpreted as the \emph{amount the model $\hshort$ uses tokens $\bX_{1:\bar L - L - 1}$ to improve the prediction of $\bX_{\bar L}$}, is roughly equal (up to noise) to $\ML_{\mathsf{short}} - \ML_{\mathsf{short,sparse}}$, which may be interpreted as the \emph{amount the model uses tokens $\bX_{J_1}, \ldots, \bX_{J_k}$ to improve the prediction of $\bX_{\bar L}$}. This provides evidence for the hypothesis that the model's prediction of the $\bar L$th token is sparse in its dependence on tokens ``far in the past'', and is thus consistent with \ref{it:ta-sparsity} which predicts that such sparsity allows the model $\hshort$ to length-generalize. %

It is natural to wonder whether the behavior seen in \cref{fig:olmo-sparse} occurs even absent the use of methods such as PoSE to extend the context length. Accordingly, we also trained a model $\hlong$ in the same manner as $\hshort$ but with context length $\bar L = 128$ (and without PoSE). 
In \cref{fig:olmo-sparse-appendix}, we observe that a similar positive correlation is seen (a) for the model $\hlong$ trained on full contexts of length $\bar L$ (\cref{fig:hlong-hlong}), as well as for (b) the model $\hlong$ trained on contexts of length only $L$ but where the indices $J_{1:k}$ are chosen as in \cref{eq:choose-j} with respect to $\hshort$ (as opposed to $\hshort$; \cref{fig:hlong-hshort}), and conversely, the model $\hshort$ when the indices $J_{1:k}$ are chosen as in \cref{eq:choose-j} with respect to $\hlong$ (\cref{fig:hshort-hlong}). These observations suggest that the property that a small number $k$ of tokens at positions in $[\bar L - L - 1]$ can nearly recover the cross-entropy loss at position $\bar L$ obtained by training on \emph{all} of the tokens at positions in $[\bar L - L - 1]$ may be more of a property of the \emph{data distribution}, as opposed to a particularity of any particular language model variant such as $\hshort, \hlong$. In particular, it indicates that some property like \cref{def:sparse-planted} indeed governs the structure of the task of predicting the next token on long contexts.

  \begin{figure*}[t]
    \centering
    \begin{subfigure}[b]{0.3\textwidth}
      \centering
      \includegraphics[width=\textwidth]{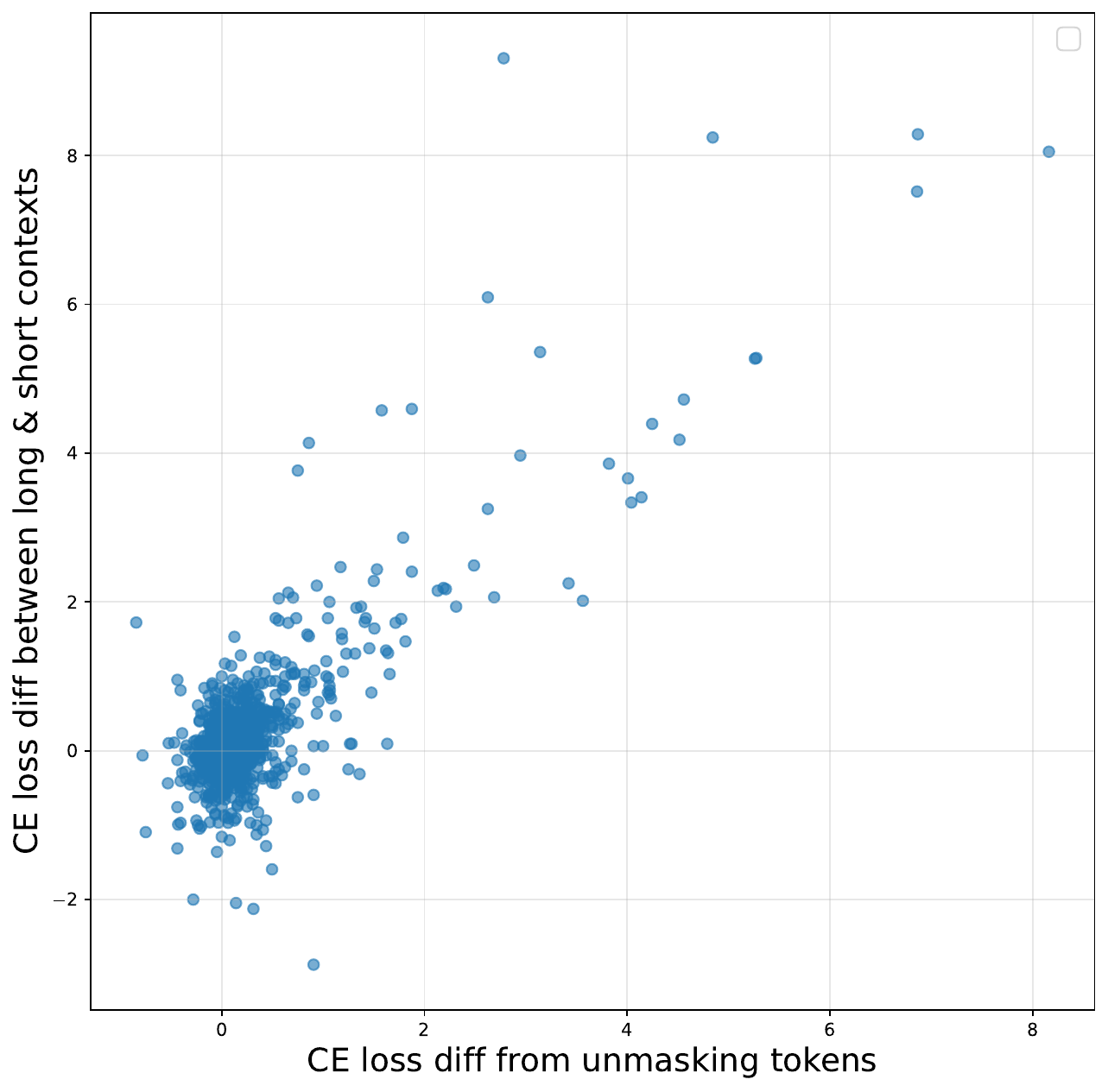}
      \caption{{$\hlong$ CE loss; $J_{1:k}$ chosen from $\hlong$.}}
      \label{fig:hlong-hlong}
      \end{subfigure}
      \begin{subfigure}[b]{0.3\textwidth}
        \centering
        \includegraphics[width=\textwidth]{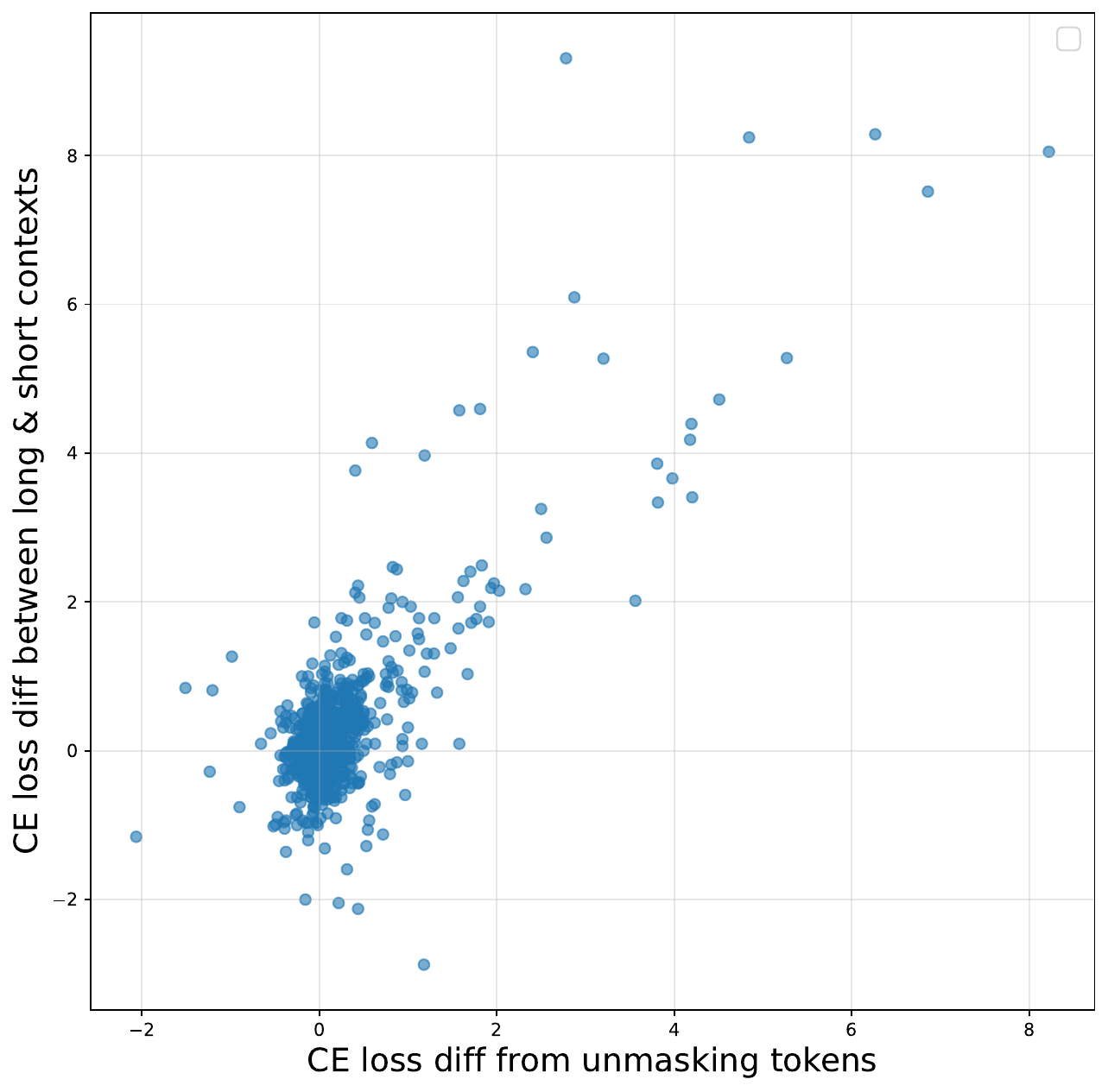}
        \caption{{$\hlong$ CE loss;  $J_{1:k}$ chosen from $\hshort$.}}
        \label{fig:hlong-hshort}
        \end{subfigure}
        \begin{subfigure}[b]{0.3\textwidth}
          \centering
          \includegraphics[width=\textwidth]{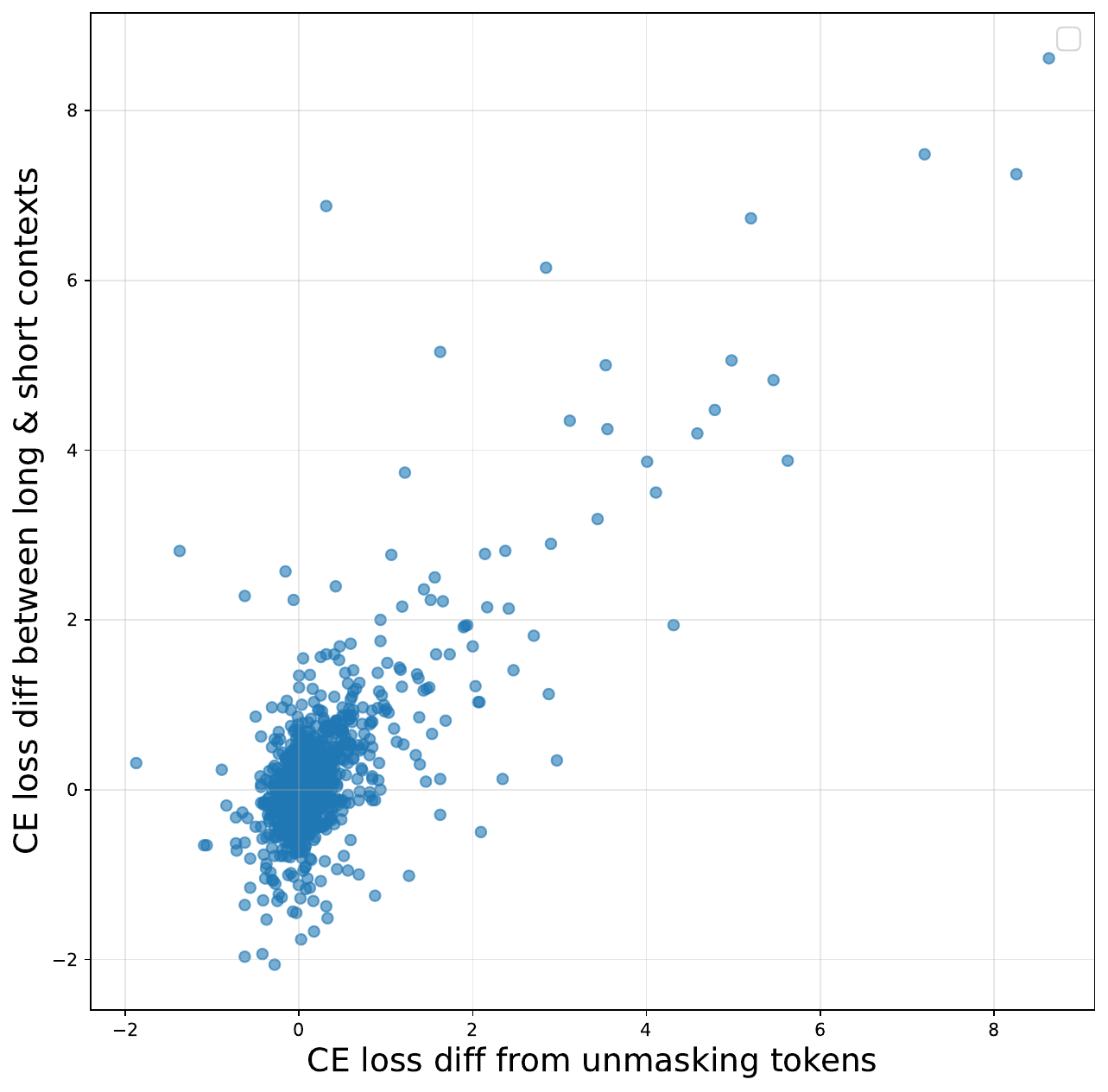}
          \caption{ $\hshort$ CE loss; $J_{1:k}$ chosen from $\hlong$.}
          \label{fig:hshort-hlong}
        \end{subfigure}
        
        \caption{Modification of the plot of \cref{fig:olmo-sparse} where cross-entropy of $\hlong$ is used instead (\cref{fig:hlong-hlong,fig:hlong-hshort}), and where the opposite model from the one being evaluated is used to generate the indices $J_{1:k}$ (\cref{fig:hlong-hshort,fig:hshort-hlong}).}
    \label{fig:olmo-sparse-appendix}
  \end{figure*}

}

\icml{
  \paragraph{Results \& discussion.} \cref{tab:training-ppl} confirms that the perplexity  of $\hshort$ on a context window of $128$ is less than that on a context window of $64$, verifying that $\hshort$ is able to utilize information beyond the shorter length-64 training context window. Moreover, in \cref{fig:olmo-sparse}, we plot, for each of a total of $500$ evaluation data points $\bX_{1:\bar L}$ with context window $\bar L = 128$, the amount of decrease in cross-entropy loss for predicting the \emph{last} token $\bX_{\bar L}$ of the sequence one achieves by the following two strategies:
  \begin{enumerate}\icml{[itemsep=0.0em,topsep=0.0em,leftmargin=0.35cm,rightmargin=0cm]}
  \item[(a)] Unmasking $k = 5$ ``influential'' tokens at positions $i \leq L = 64$ compared to masking out all tokens at positions $i \leq 64$ (this difference is the $x$-coordinate);
    \item[(b)] Unmasking \emph{all} tokens at positions $i \leq 64$ compared to masking out all tokens at positions $i \leq 64$ (this difference is the $y$-coordinate).
    \end{enumerate}
    In particular, the $y$-coordinate of each point should be interpreted as the amount the model uses the first $64$ tokens to improve the prediction of the last token, and the $x$-coordinate of each point should be interpreted as the amount the model uses a select $k$ of the first $64$ coordinates (see \cref{fig:lshortlong} for an illustration). As can be seen, the data points are roughly clustered along the line $x = y$, which indicates that (up to noise) the model's prediction of the last token in the sequence is sparse in its dependence on tokens ``far in the past'', consistent with \cref{it:ta-sparsity} which predicts that such sparsity allows $\hshort$ to length-generalize. We discuss additional details and experiments (including how the $k$ influentail tokens are chosen) in \cref{sec:additional-nlp}. 
}

\section*{Acknowledgements}
NG was supported in part by a Fannie \& John Hertz Foundation Fellowship and an NSF Graduate Fellowship.

\icml{\newpage}
\appendix

\icml{
\section*{Impact Statement}
This paper presents work whose goal is to advance the field
of Machine Learning. There are many potential societal
consequences of our work, none which we feel must be
specifically highlighted here.
}

\arxiv{\bibliographystyle{abbrvnat}}
\icml{\bibliography{length-generalization_updated_2,lg-extra_2}}
\arxiv{{\small \bibliography{length-generalization_updated_2,lg-extra_2}}}
\icml{\bibliographystyle{icml2025}}

\icml{\onecolumn}

\section{Related work}
\label{sec:related-work}

\subsection{Theoretical guarantees for length generalization}
\citet{zhou_what_2023,huang_formal_2024}
propose that transformers will length-generalize at tasks which can be solved using a short RASP program \cite{weiss_thinking_2021}. Theorem 7 of \citet{huang_formal_2024}, however, assumes that the transformer is chosen so as to minimize a certain regularizer tailored specifically for length generalization, and their result is asymptotic in nature (in contrast to ours).
\citet{sabbaghi_explicitly_2024} show that using  gradient flow on a 1-layer linear transformer model with relative position embeddings on a simple linear regression task will converge to a model which length-generalizes, whereas the use of absolute position embeddings will fail to length-generalize. \citet{ahuja_provable_2024} show that a model class defined as a simple abstraction of attention heads that sums up pairwise dependencies of tokens experiences length generalization. %
Unlike our theoretical results, those of   \citet{sabbaghi_explicitly_2024,ahuja_provable_2024} are asymptotic in nature and cannot capture the (most common) case of a \emph{softmax} attention head. Moreover, they proceed, roughly speaking, by showing that the learned model is equal to the ground-truth model \emph{on the entire domain}, which is generally \emph{not} the case with actual language models \cite{zou_universal_2023,wei_jailbroken_2023,andriushchenko_jailbreaking_2024}. As our theoretical setup incorporates distributional assumptions, it establishes length generalization in cases where the learned model is only accurate on in-distribution inputs, which we believe to be more realistic.
Moreover, all of the preceding works generally apply only to specific classes of \emph{transformers or linear classes}; in contrast, our framework, while being able to capture an attention head, is significantly more general in that we allow arbitrary, potentially nonlinear, function classes (see \cref{def:sparse-group-attn}), thus meaning our results may have significance for other (non-transformer) classes of models as well. 

\citet{hou_universal_2024} proposes to use \emph{Turing programs}, a scratchpad strategy inspired by Turing machines, to achieve length generalization on an array of tasks. Their  theoretical results, however, are only representational in nature, showing that transformers can represent Turing programs without  accounting for what models algorithms such as risk minimization will actually learn. \citet{hahn_why_2024} offer reasons why transformers struggle to length-generalize on parity, based on their sensitivity bias. 

Further from our own work, \citet{marsden_provable_2024} obtain provable guarantees for length generalization in the context of dynamical systems. \citet{wang_transformers_2024} prove that gradient descent on transformers can provably learn a task known as the \emph{sparse token selection task} \cite{sanford_representational_2023}, which bears resemblance to \cref{def:sparse-planted} (indeed, a slight modification of the task defined in Section 2.1 of \citet{wang_transformers_2024} is in fact a special case of a distribution with sparse planted correlations (\cref{def:sparse-planted})).

\subsection{Modifying positional embeddings for length \generalization}
\label{sec:related-posids}
In addition to position coupling \cite{cho_position_2024,cho_arithmetic_2024} (and the closely related Abacus embeddings  \cite{mcleish_transformers_2024}), which is a focus in our paper, several other techniques have been developed to modify position embeddings during training and/or inference time to improve the length generalization performance of transformers. \citet{zhu_pose_2024} developed the positional skip-wise technique (PoSE; see \cref{sec:prelim-pose}), which was later refined by modifying the distribution of position IDs used at training time in \citet{wu_never_2024}. We remark that PoSE is conceptually similar to \emph{randomized position embeddings} \cite{ruoss_randomized_2023}, which trains using a random set of position IDs from the test-length context window (with no guarantee on the contiguity of these IDs, unlike PoSE).

Another popular strategy to extend the context length of transformers which has seen traction for models at larger scales, such as as Code Llama \cite{roziere_code_2024}, is \emph{position interpolation}. This technique \emph{scales down} (``interpolates'') the position IDs in the longer test-length context window to match the length of the shorter training-length context. Several such interpolation strategies have been proposed, including the canonical choice of \emph{linear interpolation} \cite{chen_extending_2023}, as well as \emph{NTK-RoPE} \cite{emozilla_dynamically_2023}, \emph{YaRN} \cite{peng_yarn_2023}, and \emph{CLEX} \cite{chen_clex_2024}; the latter strategies adjust the amount of interpolation done on a per-frequency basis, with different RoPE frequencies receiving different interpolation scales. One downside of these position interpolation strategies is that they generally require some fine-tuning on the longer test-length sequences in order to effectively use the longer context windows (e.g., to achieve decreased perplexity on longer sequences than those seen during training). Such fine-tuning complicates the theoretical setting of length generalization, where it is typically assumed that any amount of training on the longer test-length sequences is not allowed. We remark, however, that these position interpolation techniques can be combined with PoSE \cite{zhu_pose_2024,wu_never_2024}; exploring such combinations in the context of our experiments is an interesting direction for future work.

Finally, we remark that some strategies, such as LM-Infinite \cite{han_lminfinite_2024} and Self-Extend \cite{jin_llm_2024}, have been proposed to adjust the attention mechanism at inference time so as to achieve length generalization without any fine-tuning, though their performance lags somewhat \cite{lu_controlled_2024}.

\subsection{Empirical evaluations \& explanations of length generalization}
A number of papers have offered empirical evaluations and comparisons of various techniques for length generalization. \citet{anil_exploring_2022} studied a few simple length generalization tasks, such as parity and variable assignment, and found that techniques such as finetuning and using a scratchpad did not lead to much length generalization. 
\citet{kazemnejad_impact_2023} compared the performance of various positional encoding techniques for length generalization, and found that NoPE performed best (though their analysis did not account for techniques such as position interpolation and PoSE which can significantly improve length generalization for, e.g., RoPE).  \citet{lee_teaching_2023} observed that length generalization is difficult for transformers on arithmetic tasks. 
Finally, \citet{lu_controlled_2024} performed a systematic study comparing various approaches to extend the context length of LLMs, including various types of position interpolation.

\citet{ontanon_making_2022,dziri_faith_2023,hupkes_compositionality_2020} study the out-of-distribution performance of transformers by focusing on \emph{compositional generalization}, which refers to the ability of transformers to compose individual tasks found in the training data. There is also extensive work more broadly on out-of-distribution generalization \cite{nagarajan_understanding_2021,abbe_generalization_2023,kalavasis_transfer_2024}.

Finally, a number of works (e.g., \citet{hupkes_compositionality_2020,liu_exposing_2023,deletang_neural_2023,zhang_unveiling_2023,hsieh_ruler_2024}) introduce new benchmarks and datasets for studying length generalization and more broadly the performance of LLMs on long contexts.

\paragraph{Additional techniques for length generalization.} Many papers have introduced new types of positional encoding schemes with the hope of improved length generalization, including relative positional embeddings \cite{shaw_self_2018,dai_transformer_2019,huang_improve_2020,ke_rethinking_2021}, ALiBi \cite{press_train_2022}, Hard-Alibi \cite{jelassi_repeat_2024}, and FIRE \cite{li_functional_2024}.

Many other types of techniques have  been proposed, such as priming the training dataset with a few long sequences \cite{jelassi_length_2023}, %
modifying the format of the input \cite{shen_positional_2023,hu_casebased_2024} such as by using a scratchpad \cite{lanchantin_learning_2023}, architectural changes \cite{csordas_neural_2022,fan_looped_2024}, 
and combining several such techniques \cite{csordas_devil_2022,liu_transformers_2023,zhou_transformers_2024}. Moreover, it was recently shown that length generalization can be achieved via a \emph{self-improvement} framework \cite{lee_selfimproving_2025}, which trains the model on sequences of increasing length, which are labeled by previous versions of the model.  We remark that such techniques lie outside the scope of this paper, as we require that the context length be bounded above by $L$ throughout training. Moreover, such techniques still may suffer from the computational issues that plague longer context lengths. 

\section{Additional Preliminaries}

\icml{
\subsection{Additional context on distributional assumptions}
\label{sec:basic-setup-appendix}
The following proposition rules out length \generalization in the absence of distributional assumptions. 
\begin{proposition}
  \label{prop:noasm-nolg}
Fix any $L, \bar L \in \BN$ with $L < \bar L$, and a hypothesis class $\MH \subset \MY^{\MV^\st}$. Let $\MP_1, \ldots, \MP_{L}$ be $\MH$-realizable distributions, realized by $h^\st$. Suppose that $\ep > 0$ and $\hat h$ defined in \cref{eq:define-hath} satisfies $\sup_{\bX \in \MV^{\bar L}} \Loss(h^\st(\bX), \hat h(\bX)) > \ep$. Then there is an ensemble $\MP$ extending $\MP_1, \ldots, \MP_L$ so that $\MH$ does not have $(L, \bar L, \ep)$-length \generalization with respect to $\MP$. 
\end{proposition}

Below, we give a simple example of a distribution ensemble having $k$-sparse planted correlations (per \cref{def:sparse-planted}).

}
\subsection{Transformers}
\label{sec:transformers-prelim}
In this section, we review the basic definitions and notations regarding the transformers architecture. We refer the reader to \citet{phuong_formal_2022} for a more extensive overview. 

\paragraph{Attention heads.} We begin by defining \emph{attention heads.} Consider matrices $\Wk, \Wq, \Wv \in \BR^{h \times d}$ and $\Wo \in \BR^{d \times h}$, where $d$ denotes the \emph{embedding dimension} and $h$ denotes the \emph{attention dimension}.  Let $\theta := (\Wk, \Wq, \Wv, \Wo)$ denote the full set of parameters for an attention head. The corresponding attention head $\Ahead_\theta : (\BR^d)^{L} \to \BR^d$ takes as input a sequence $\bh = (\bh_1, \ldots, \bh_L)$ of $L$ embedding vectors corresponding to individual tokens, as well as a \emph{query embedding vector} $\hquery \in \BR^d$, and outputs the embedding of the next token, $\Ahead_\theta(\bh, \hquery)$ by first computing:
\begin{align}
  \bq \gets \Wq \cdot \hquery \nonumber\\
  \bk_t \gets \Wk \cdot \bh_t, \quad \bv_t \gets \Wv \cdot \bh_t, \quad \alpha_t \gets \sigma(\langle \bq, \bk_1 \rangle, \ldots, \langle \bq, \bk_L \rangle), \quad \forall t \in [L],\nonumber
\end{align}
where typically $\sigma = \softmax$. %
Then, the attention head outputs $\Ahead_\theta(\bh, \hquery) := \Wo \cdot \sum_{t=1}^L \alpha_t \cdot \bv_t$.

\paragraph{Masked self-attention.} We consider transformers which perform masked self-attention, meaning that they aim to predict each token in a sequence by attending to previous tokens in the sequence. Moreover, in doing so, multiple attention heads are combined into an attention layer, of which multiple are combined into a transformer. First, we describe an attention layer: for some $m \in \BN$, %
we consider $m$ attention heads parametrized by $\theta_1, \ldots, \theta_m$, together with a multilayer perceptron $\MLP : \BR^d \to \BR^d$, with 2 layers. Letting $\theta = (\theta_1, \ldots, \theta_h, \MLP)$, we let the attention layer $\Alayer_\theta : (\BR^d)^{L} \to (\BR^d)^L$ be the mapping defined as follows: for $i \in [L]$ and $\bh = (\bh_1, \ldots, \bh_L) \in (\BR^d)^L$, 
\[
\Alayer_\theta(\bh)_i := \MLP(\bh_i + \by) + \bh_i + \by, \quad \by := \sum_{j=1}^m \Ahead_{\theta_j}(\bh_{1:i}, \bh_i).
\]
Note that $\Alayer_\theta(\bh)_i$ depends only on the embeddings up to index $i$, i.e., $\bh_1, \ldots, \bh_i$. Finally, given a collection $\theta = ((\theta\^1, \ldots, \theta\^\Lambda), \Wembed)$ denoting parameters for each of $\Lambda \in \BN$ layers together with an \emph{embedding matrix} $\Wembed \in \BR^{d \times |\MV|}$, we define $\Transformer_\theta : \MV^L \to (\BR^d)^L$ by, for $\bh = (\bh_1, \ldots, \bh_L) \in (\BR^d)^L$,
\begin{align}
\Transformer_\theta(\bX) := \Alayer_{\theta\^\Lambda} \circ \cdots \circ \Alayer_{\theta\^1}(\Wembed \cdot \bX)\nonumber,
\end{align}
where $\bX$ is interpreted as a matrix in $\BR^{|\MV| \times L}$ whose columns are the one-hot encodings of each of the tokens of $\bX$. To produce a sequence of tokens, typically $\Transformer_\theta$ is composed with an \emph{unembedding matrix} $\Wunembed \in \BR^{|\MV| \times d}$ to each position, and a distribution for each position may be obtained by applying a softmax.

\icml{}

\subsection{Sparse functional attention captures a single attention head}
In this section, we prove \cref{prop:model-attn-head-formal}, which shows that the $k$-sparse functional attention class in \cref{def:sparse-group-attn} is sufficiently general so as to capture the class of single attention heads (as defined in \cref{sec:transformers-prelim}). First we formally specify the relevant class of attention heads; to ensure that the class maps from sequences of \emph{tokens} (as defined in \cref{def:sparse-group-attn}) as opposed to sequences of \emph{embedding vectors} (as defined in \cref{sec:transformers-prelim}), we include an embedding matrix in the definition:
\begin{definition}[Single attention heads]
  Fix $d,h \in \BN$ and a finite set $\MV$ denoting the vocabulary. We define the class $\Aheads_{d,h,\MV}$, which consists of mappings $h : \MV^\st \to \BR^d$, as follows. It is the set of all mappings from $\bX \in \MV^L$ to an embedding vector as follows:
  \begin{align}
\bX \mapsto \Ahead_\theta(\Wembed \cdot \bX, \hquery)\nonumber,
  \end{align}
  where $\bW$ ranges over all matrices $\bW \in \BR^{d \times |\MV|}$, $\hquery$ ranges over all vectors $\hquery \in \BR^d$,  and $\theta$ ranges over all tuples $\theta = (\bK, \bQ, \bV, \bO)$, where $\bK, \bQ, \bV \in \BR^{h \times d}$ and $\bO \in \BR^{d \times h}$. Moreover, we are slightly abusing notation by interpreting $\bX$ as the matrix $\bX \in \BR^{|\MV| \times L}$ whose columns are the one-hot vectors of the individual tokens of the sequence $\bX$. 
\end{definition}

\begin{proposition}[Formal version of \cref{prop:model-attn-head}]
  \label{prop:model-attn-head-formal}
  Fix any $d,h \in \BN$ and finite set $\MV$. Then there are function classes $\Gpred$ consisting of functions mapping $\BN \times \MV \to (\BR \cup \{ -\infty \})$ and $\Gval$ consisting of functions mapping $\MV \to \BR^d$ so that $\Aheads_{d,h,\MV} $ is equal to the 1-sparse functional attention class $\Hpred(\Gpred, \Gval)$. 
\end{proposition}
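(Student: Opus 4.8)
The plan is to exhibit explicit classes $\Gpred,\Gval$ and check both inclusions. I would take $\Gpred$ to be all \emph{position-independent}, finite-valued functions on $\BN\times\MV$ — since $\MV$ is finite, each such $g_0$ is just a vector $\bw_{g_0}\in\BR^{|\MV|}$, with $g_0(i,\bx)=(\bw_{g_0})_\bx$ — and $\Gval$ to be all $g_1:\MV\to\BR^d$ whose value matrix $M_{g_1}:=[g_1(\bx)]_{\bx\in\MV}\in\BR^{d\times|\MV|}$ has rank at most $\min(d,h,|\MV|)$ (which is simply all functions $\MV\to\BR^d$ once $h\ge\min(d,|\MV|)$). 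For the inclusion $\Aheads_{d,h,\MV}\subseteq\Hpred(\Gpred,\Gval)$, given a head with embedding matrix $\bW=\Wembed$, query vector $\hquery$, and $\theta=(\bK,\bQ,\bV,\bO)$, I would set $g_0(i,\bx):=\langle\bQ\hquery,\bK\bW e_\bx\rangle$ and $g_1(\bx):=\bO\bV\bW e_\bx$, where $e_\bx$ is the one-hot encoding of $\bx$. Then $g_0\in\Gpred$ and $g_1\in\Gval$ by inspection, and unwinding \cref{def:sparse-group-attn} with $k=1$ the softmax weight that $h_{g_0,g_1}$ places on index $i$ is exactly the head's attention weight $\alpha_i$, so $h_{g_0,g_1}(\bx_{1},\ldots,\bx_L)=\sum_i\alpha_i\,\bO\bV\bW e_{\bx_i}=\bO\sum_i\alpha_i\bv_i$, the head's output; the $-\infty$ convention in \cref{def:sparse-group-attn} is vacuous here since all scores are finite.

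For the reverse inclusion $\Hpred(\Gpred,\Gval)\subseteq\Aheads_{d,h,\MV}$, given arbitrary $g_0\in\Gpred$ and $g_1\in\Gval$ I would realize $h_{g_0,g_1}$ as a head built on the \emph{fixed} embedding matrix $\bW:=[\,I_{|\MV|};\mathbf 0\,]\in\BR^{d\times|\MV|}$ (so each token embeds to a distinct standard basis vector of $\BR^d$; this uses $d\ge|\MV|$, discussed below). Then: (i) choose $\hquery,\bQ$ so that $\bQ\hquery$ is the first standard basis vector of $\BR^h$ and let the first row of $\bK$ be $(\bw_{g_0},\mathbf 0)\in\BR^d$, which yields $\langle\bQ\hquery,\bK\bW e_\bx\rangle=(\bw_{g_0})_\bx=g_0(i,\bx)$; and (ii) write the $d\times d$ matrix $[M_{g_1}\mid\mathbf 0]$ — which has rank $\le\min(d,h)\le h$ — as $\bO\bV$ for some $\bO\in\BR^{d\times h},\bV\in\BR^{h\times d}$ via a rank factorization, which yields $\bO\bV\bW e_\bx=M_{g_1}e_\bx=g_1(\bx)$. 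With these parameters the same computation as in the first paragraph shows the resulting head equals $h_{g_0,g_1}$.

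The crux — and really the only point that needs care — is that a single embedding matrix $\bW$ is shared between the key side ($\bk_t=\bK\bW e_{\bx_t}$) and the value side ($\bv_t=\bV\bW e_{\bx_t}$): both $\bw_{g_0}^\top=(\bQ\hquery)^\top\bK\bW$ and every row of $\bO\bV\bW$ lie in the (at most $d$-dimensional) row space of $\bW$, so for an arbitrary head $g_0$ and $g_1$ cannot be prescribed wholly independently unless that row space is large enough to contain $\bw_{g_0}$ together with the row space of $M_{g_1}$. This holds automatically under a mild dimension hypothesis — e.g.\ $d\ge|\MV|$, handled by the fixed choice $\bW=[I_{|\MV|};\mathbf 0]$ above, or $h<d$, in which case $\rank M_{g_1}\le h\le d-1$ leaves a spare dimension for $\bw_{g_0}$ — and I would carry out the argument in such a regime; the remaining decomposition of the prescribed matrices into factors $\bQ,\bK$ and $\bV,\bO$ of the required shapes is then routine linear algebra, and the two inclusions combine to give the claimed equality formalizing \cref{prop:model-attn-head}.
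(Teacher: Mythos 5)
Your forward inclusion ($\Aheads_{d,h,\MV}\subseteq\Hpred(\Gpred,\Gval)$) is exactly the paper's: define $g_0(i,\bx)=\langle\bQ\hquery,\bK\Wembed e_\bx\rangle$ and $g_1(\bx)=\bO\bV\Wembed e_\bx$, and check these land in $\Gpred$ and $\Gval$. Where you diverge is in the reverse inclusion, and here your extra care is genuinely warranted. The paper dispatches it in one sentence (``since we have assumed $h\ge d$, it is evident\dots''), taking $\Gval$ to be \emph{all} maps $\MV\to\BR^d$ with no rank restriction. But as you correctly isolate, the key side $(\bQ\hquery)^\top\bK\Wembed$ and value side $\bO\bV\Wembed$ share the single embedding matrix $\Wembed$, so $\bw_{g_0}$ and the rows of $M_{g_1}$ must jointly lie in the (at most $d$-dimensional) row space of $\Wembed\in\BR^{d\times|\MV|}$. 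This coupling is a real obstruction: for instance, with $d=h=1$ and $|\MV|=2$, the vectors $(\bQ\hquery)^\top\bK\Wembed$ and $\bO\bV\Wembed$ are both scalar multiples of $\Wembed$ and hence proportional, so $g_0=(0,1)$ together with $g_1=(1,0)$ is in $\Hpred(\Gpred,\Gval)$ but not in $\Aheads_{1,1,\MV}$ --- so the paper's hypothesis $h\ge d$ alone does not close the reverse inclusion when $|\MV|>d$. Your rank-capped $\Gval$ and the regimes you carve out ($d\ge|\MV|$, or $h<d$ leaving a spare dimension for $\bw_{g_0}$) are therefore the honest fix: the equality as literally stated needs a dimension hypothesis the proposition omits. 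In short, the construction is the paper's, but you have correctly identified that its one-line justification of the harder inclusion is incomplete, and you have outlined how to repair it; the only thing left undone on your end is to commit to a concrete hypothesis (e.g.\ $d\ge|\MV|$ or $h\ge d\ge|\MV|$) and carry the routine rank-factorization through in that regime.
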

\begin{proof}
  We let $\Gpred$ be the set of all mappings $g_0 : \BN \times \MV \to \BR$ which only depend on the $\MV$-component of the input, and $\Gval$ be the set of all mappings $g_1 : \MV \to \BR^d$. 

  Now fix any $\theta = (\bK, \bQ, \bV, \bO), \hquery, \Wembed$ indexing some element of $\Aheads_{d,h,\MV}$. For any $L \in \BN$, given $\bX \in \BR^{|\MV| \times L}$ whose columns represent the one-hot encoding vectors for some length-$L$ sequence, we have
  \begin{align}
    \Ahead_\theta(\Wembed \cdot \bX, \hquery) = & \bO \cdot \sum_{i=1}^L \alpha_i \cdot \bV  \Wembed  \bX_i \nonumber\\
\mbox{ where } \quad     \alpha_i =&  \softmax(\langle \bQ \hquery, \bK \Wembed \bX_1 \rangle, \ldots, \langle \bQ \hquery, \bK \Wembed \bX_L \rangle), \quad i \in [L]\nonumber.
  \end{align}
  Define $g_0 \in \Gpred$ by, for $\bx \in \MV$,
  \begin{align}
g_0(i, \bx) = \langle \bQ \hquery, \bK \Wembed \bx \rangle\nonumber,
  \end{align}
(where we conflate $\bx \in \MV$ and its one-hot encoding vector in $\BR^{|\MV|}$),  and define $g_1 \in \Gval$ by
  \begin{align}
g_1(\bx) := \bO\bV \Wembed \cdot \bx\nonumber.
  \end{align}
  It is now clear that the mapping $\bX \to \Ahead_\theta(\Wembed \cdot \bX, \hquery)$ is identical to the 1-sparse functional attention mapping $h_{g_0, g_1}$ defined in \cref{def:sparse-group-attn}.

  Moreover, since we have assumed $h \geq d$, it is evident that any $h_{g_0, g_1}$, for $g_0 \in \Gpred, g_1 \in \Gval$ may be realized as $\bX \mapsto \Ahead_\theta(\Wembed \cdot \bX, \hquery)$ for some $\theta, \hquery, \Wembed$ as above. 
\end{proof}

\section{Proofs for length \generalization}
\icml{
First, we make a remark.
  \begin{remark}[On \cref{asm:density-ratio-position}]
    To help interpret \cref{asm:density-ratio-position}, note first that in order for the distribution ensemble $\MP$ to be  realizable (\cref{asm:unique-selection}) by a class $\Hpred(\Gpred, \Gval)$ satisfying $\Llocal$-locality (\cref{asm:local-relative}), it will typically be the case that $\max_{i \in S^\st} i - \min_{i \in S^\st} i \leq \Llocal$ with probability at least  $1-\delta$ for $S^\st \in \Qpos_\ell$, for any $\ell \in \BN$. Thus, a natural choice for $\Qpos_\ell$ is to fix some distribution $\Qpos_{\Llocal}$ over sets in $\Sets{[\Llocal]}{k}$ and let $\Qpos_\ell$ be the distribution of a random shift of a sample from $\Qpos_{\Llocal}$. Formally, $\Qpos_\ell$ is the distribution of the shift $S + \Delta$, where $S \sim \Qpos_{\Llocal}$ and $\Delta \sim \Unif(\{ 0, 1, \ldots, \ell - \Llocal\})$. It is straightforward to see that such a construction ensures that \cref{asm:density-ratio-position} is satisfied with $\eta_\ell = \ell$. More broadly, any value $\eta_\ell \leq \poly(\ell)$ leads to interesting conclusions in the context of our results, so we interpret \cref{asm:density-ratio-position} as being fairly mild.
    \end{remark}
  }
\subsection{Proof of \cref{thm:length-extrap}}
\begin{proof}[Proof of \cref{thm:length-extrap}]
  Write $\Hpred := \Hpred(\Gpred, \Gval)$. 
  By \cref{asm:unique-selection}, %
  we have $ \E_{(\bX, \bY) \sim \MP_\ell}[\Loss(h^\st(\bX), \bY)] \leq \delta$ for some $h^\st \in \Hpred$. Let $\hat h \in \Hpred$ be chosen according to \cref{eq:define-hath}, so that we can write $\hat h = h_{\hat g_0, \hat g_1}$ for some $\hat g_0 \in \Gpred, \hat g_1 \in \Gval$.  Thus, for each $\ell \in [L/2,L]$, we have
  \begin{align}
    L\delta  \geq & \E_{(\bX, \bY) \sim \MP_\ell}[\Loss(\hat h(\bX), \bY)] \nonumber\\
    = &    \E_{(\bX, \bY) \sim \MP_\ell} \left[ \norm{ \bY -  \frac{\sum_{S \in \Sets{[\ell]}{k}}\exp(\hat g_0(S, \bX_S)) \cdot \hat g_1(\bX_S)}{\sum_{S' \in \Sets{[\ell]}{k}} \exp(\hat g_0(S', \bX_{S'}))} } \right]\nonumber \\
      = &    \E_{(\bX, S^\st) \sim \MP_\ell} \left[ \norm{ g^\st(\bX_{S^\st}) -  \frac{\sum_{S \in \Sets{[\ell]}{k}}\exp(\hat g_0(S, \bX_S)) \cdot \hat g_1(\bX_S)}{\sum_{S' \in \Sets{[\ell]}{k}} \exp(\hat g_0(S', \bX_{S'}))} } \right]\label{eq:indist-l},  
  \end{align}
  where $g^\st : \MV^k \to \MY$ is the function guaranteed by \cref{def:sparse-planted}. 
  By \cref{asm:density-ratio-position}, for any $S^\st \in \Sets{[L]}{k}$, we have $\frac{\MQ_{L-2\Llocal}(S^\st)}{\MQ_L(S^\st)} \leq \eta_L$ and $\frac{\MQ_{L-\Llocal}(S^\st)}{\MQ_L(S^\st)} \leq \eta_L$. (Recall that if, e.g., $S^\st \not \in \Sets{[L-\Llocal]}{k}$, then we use the convention that $\MQ_{L-\Llocal}(S^\st) = 0$.)  %
  At various points later in the proof, we will need to use \cref{eq:indist-l} for $\ell = L - \Llocal$ or $\ell = L - 2\Llocal$; in order for this to be valid, we need $L - 2\Llocal \geq L/2$, i.e., $L \geq 4\Llocal$, which is ensured by the hypotheses of \cref{thm:length-extrap}. 
  
  Thus, by the definition of the $k$-sparse planted structure (\cref{def:sparse-planted}), %
  we have that, for any $\bX \in \MV^L, S^\st \in \Sets{[L]}{k}$, 
  \begin{align}
\frac{\MP_{L-2\Llocal}(\bX_{1:L-2\Llocal}, S^\st) \cdot \prod_{i=L-2\Llocal+1}^L \mu(\bX_i)}{\MP_L(\bX, S^\st)} \leq \eta_L, \qquad \frac{\MP_{L-2\Llocal}(\bX_{1:L-2\Llocal}, S^\st) \cdot \prod_{i=L-2\Llocal+1}^{L-\Llocal} \mu(\bX_i)}{\MP_{L-\Llocal}(\bX, S^\st)} \leq \eta_L\nonumber.
  \end{align}
  It follows that
  \begin{align}
    \eta_L \cdot (L\delta) \geq & \E_{\substack{(\bX_{1:L-2\Llocal}, S^\st) \sim \MP_{L-2\Llocal} \\ \bX_{L-2\Llocal+1:L} \sim \mu^{\otimes 2\Llocal}}} \left[\norm{ g^\st(\bX_{S^\st}) - \frac{ \sum_{S \in \Sets{[L]}{k}}\exp(\hat g_0(S, \bX_S)) \cdot \hat g_1(\bX_S)}{\sum_{S' \in \Sets{[L]}{k}} \exp(\hat g_0(S', \bX_{S'}))} } \right] \label{eq:L2L-sstar}\\
    \eta_L \cdot (L \delta) \geq & \E_{\substack{(\bX_{1:L-2\Llocal}, S^\st) \sim \MP_{L-2\Llocal} \\ \bX_{L-2\Llocal+1:L-\Llocal} \sim \mu^{\otimes \Llocal}}} \left[\norm{ g^\st(\bX_{S^\st}) -  \frac{\sum_{S \in \Sets{[L-\Llocal]}{k}}\exp(\hat g_0(S, \bX_S)) \cdot \hat g_1(\bX_S)}{\sum_{S' \in \Sets{[L-\Llocal]}{k}} \exp(\hat g_0(S', \bX_{S'}))} } \right] \label{eq:ll-sstar}.
  \end{align}
  The triangle inequality then gives
  \begin{align}
2\eta_L \cdot  (L \delta) \geq & \E_{\substack{(\bX_{1:L-2\Llocal}, S^\st) \sim \MP_{L-2\Llocal} \\ \bX_{L-2\Llocal+1:L} \sim \mu^{\otimes 2\Llocal}}} \left[\norm{ \frac{\sum_{S \in \Sets{[L]}{k}}\exp(\hat g_0(S, \bX_S)) \cdot \hat g_1(\bX_S)}{\sum_{S' \in \Sets{[L]}{k}} \exp(\hat g_0(S', \bX_{S'}))}  -  \frac{\sum_{S \in \Sets{[L-\Llocal]}{k}}\exp(\hat g_0(S, \bX_S)) \cdot \hat g_1(\bX_S)}{\sum_{S' \in \Sets{[L-\Llocal]}{k}} \exp(\hat g_0(S', \bX_{S'}))} } \right]\label{eq:l-triangle}. 
  \end{align}
  We define the following random variables, where  $(\bX_{1:L-2\Llocal}, S^\st) \sim \MP_{L-2\Llocal}$, $\bX_j \sim \mu$ for $j > L-2\Llocal$, and $i \geq 1$: 
  \begin{align}
    A_0  := \sum_{S \in \Sets{[L-2\Llocal]}{k}} {\exp(\hat g_0(S, \bX_S)) \cdot \hat g_1(\bX_S)}, \qquad B_0 := \sum_{S' \in \Sets{[L-2\Llocal]}{k}} \exp(\hat g_0(S', \bX_{S'}))\nonumber\\
    A_i = \sum_{S \in \Sets{[L-2\Llocal+i\Llocal]}{k}\backslash \Sets{[L-2\Llocal+(i-1)\Llocal]}{k}} \exp(\hat g_0(S, \bX_S)) \cdot \hat g_1(\bX_S), \nonumber\\
    B_i := \sum_{S \in \Sets{[L-2\Llocal+i\Llocal]}{k} \backslash \Sets{[L-2\Llocal+ (i-1)\Llocal]}{k}} \exp(\hat g_0(S, \bX_S))\nonumber.
  \end{align}
  Note that $A_0, A_1, \ldots \in \BR^d$ are vectors, and $B_0, B_1, \ldots \in \BR$ are scalars.
  We have that
  \begin{align}
\E \left[ \norm{ \frac{A_0}{B_0} - \frac{A_0 + A_1 }{B_0 + B_1} }\right] \leq 2\eta_L \cdot (L \delta), \qquad \E\left[ \norm{ \frac{A_0 + A_1 + A_2}{B_0 + B_1 + B_2} - \frac{A_0 + A_1}{B_0 + B_1} } \right] \leq 2\eta_L \cdot (L\delta)\nonumber,
  \end{align}
  where the first inequality uses \cref{eq:indist-l} with $\ell = L - 2\Llocal$, \cref{eq:ll-sstar} and the triangle inequality, and the second inequality uses \cref{eq:l-triangle}. %
  Let $Z_0$ denote the collection of random variables $\bX_{1:L-2\Llocal}$, and for $i \geq 1$, let $Z_i$ denote the collection of random variables $\bX_{L-2\Llocal + (i-1)\Llocal : L-2\Llocal + i\Llocal}$. Then $Z_0, Z_1, \ldots$ are independent, and $Z_1, Z_2, \ldots$ are identically distributed (since we have $(\bX_{1:L-2\Llocal}, S^\st) \sim \MP_{L-2\Llocal}$).  Since $\hat g_0(S, \bX_S) = -\infty$ for any $S$ for which $\max(S) - \min(S) > \Llocal$ (\cref{it:local} of \cref{asm:local-relative}), $A_i, B_i$ can each be written as functions of $Z_i, Z_{i-1}$ for each $i \geq 0$. Moreover, by \cref{it:relative} of \cref{asm:local-relative}, this function does not depend on $i$ for $i \geq 1$. By \cref{lem:hybrid}, it follows that, for any $t \geq -2$, 
  {\small\begin{align}
          &  \E_{\substack{(\bX_{1:L-2\Llocal}, S^\st) \sim \MP_{L-2\Llocal} \\ \bX_j \sim \mu \ \forall j > L-2\Llocal}}\left[ \norm{  \frac{\sum_{S \in \Sets{[L-2\Llocal]}{k}}\exp(\hat g_0(S, \bX_S)) \cdot \hat g_1(\bX_S)}{\sum_{S' \in \Sets{[L-2\Llocal]}{k}} \exp(\hat g_0(S', \bX_{S'}))} -  \frac{\sum_{S \in \Sets{[L+ t\Llocal]}{k}}\exp(\hat g_0(S, \bX_S)) \cdot \hat g_1(\bX_S)}{\sum_{S' \in \Sets{[L + t\Llocal]}{k}} \exp(\hat g_0(S', \bX_{S'}))}}\right] \nonumber\\
           \leq  & 6\eta_L (L \delta)  (t+2)\nonumber.
 \end{align}}
       A symmetric argument establishes that for any $t \geq 0$,
{\small\begin{align}
         & \E_{\substack{(\bX_{1:L-2\Llocal}, S^\st) \sim \MP_{L-2\Llocal} \\ \bX_j \sim \mu \ \forall j \leq 0}}\left[ \norm{  \frac{\sum_{S \in \Sets{[L-2\Llocal]}{k}}\exp(\hat g_0(S, \bX_S)) \cdot \hat g_1(\bX_S)}{\sum_{S' \in \Sets{[L-2\Llocal]}{k}} \exp(\hat g_0(S', \bX_{S'}))} -  \frac{\sum_{S \in \Sets{[-t\Llocal+1,L-2\Llocal]}{k}}\exp(\hat g_0(S, \bX_S)) \cdot \hat g_1(\bX_S)}{\sum_{S' \in \Sets{[-t\Llocal+1,L -2\Llocal]}{k}} \exp(\hat g_0(S', \bX_{S'}))}}\right] \nonumber\\
         \leq &  6\eta_L(L \delta)  \cdot t\nonumber.
       \end{align}}
     Combining the two displays above with \cref{lem:ab-pi-vi} (and again using \cref{it:local} of \cref{asm:local-relative}), we see that for any $t_0 \geq 0, t_1 \geq -2$,
     {\small\begin{align}
         & \E_{\substack{(\bX_{1:L-2\Llocal}, S^\st) \sim \MP_{L-2\Llocal} \\ \bX_j \sim \mu \ \forall j \leq 0, j > L-2\Llocal}}\left[ \norm{  \frac{\sum_{S \in \Sets{[L-2\Llocal]}{k}}\exp(\hat g_0(S, \bX_S)) \cdot \hat g_1(\bX_S)}{\sum_{S' \in \Sets{[L-2\Llocal]}{k}} \exp(\hat g_0(S', \bX_{S'}))} -  \frac{\sum_{S \in \Sets{[-t_0 \Llocal+1,L + t_1\Llocal]}{k}}\exp(\hat g_0(S, \bX_S)) \cdot \hat g_1(\bX_S)}{\sum_{S' \in \Sets{[-t_0\Llocal+1,L +t_1\Llocal]}{k}} \exp(\hat g_0(S', \bX_{S'}))}}\right] \nonumber\\
         \leq &  6\eta_L(L \delta)  (t_0 + t_1+2)\label{eq:add-t0-t1}.
            \end{align}}
          Let $\MP_{L-2\Llocal, t_0, t_1}$ denote the distribution of $\tilde \bX_{1:L + \Llocal(t_0 + t_1)} \in \MV^{L + \Llocal(t_0 + t_1)}$ and $\tilde S^\st \in \Sets{[L + \Llocal(t_0 + t_1)]}{k}$ defined as follows: first, draw $(\bX_{1:L-\Llocal}, S^\st) \sim \MP_{L-2\Llocal}$, as well as $\bX_j \sim \mu$ for all $j \leq 0, \ j > L-2\Llocal$, and then set $\tilde \bX_j = \bX_{j - t_0\Llocal}$ for $j \in [L + \Llocal(t_0 + t_1)]$ and $\tilde S^\st = S^\st + t_0 \Llocal$. Then \cref{eq:add-t0-t1} together with \cref{eq:indist-l} (with $\ell = L- 2\Llocal$) gives that, for all $t_0 \geq 0, t_1 \geq -2$, 
          \begin{align}
            & \E_{(\bX_{1:L + \Llocal(t_0 + t_1)}, S^\st) \sim \MP_{L-2\Llocal, t_0, t_1}}\left[ \norm{ g^\st(\bX_{S^\st}) - \frac{\sum_{S \in \Sets{[L + \Llocal(t_0 + t_1)]}{k}} \exp(\hat g_0(S, \bX_S)) \cdot \hat g_1(\bX_S)}{\sum_{S' \in \Sets{[L + \Llocal(t_0 + t_1)]}{k}} \exp(\hat g_0(S', \bX_{S'}))} } \right]\nonumber\\
            \leq & O(\eta_L \cdot  (L \delta) \cdot (t_0 + t_1 + 2))\label{eq:etal-t0-t1}.
          \end{align}
          By \cref{def:sparse-planted} and \cref{asm:density-ratio-position}, for any $\bX_{1:\bar L} \in \MV^{\bar L}$ and $S^\st \in \Sets{[\bar L]}{k}$ with $\max\{S^\st \} - \min\{ S^\st\} \leq \Llocal$, there are some $t_0 \geq 0, t_1 \geq -2$ satisfying $t_0 + t_1 + 2 = (\bar L - L) / \Llocal$\footnote{In particular, any choice of $t_0$ satisfying $S^\st - t_0 \cdot \Llocal \in \Sets{[L - 2\Llocal]}{k}$ suffices.}  %
          so that
          \begin{align}
\frac{\MP_{\bar L}(\bX_{1:\bar L}, S^\st)}{\MP_{L-2\Llocal, t_0, t_1}(\bX_{1:\bar L}, S^\st)} \leq \frac{\MQ_{\bar L}(S^\st)}{\MQ_{L-2\Llocal}(S^\st - t_0 \cdot \Llocal)} \leq \eta_{\bar L} \label{eq:eta-barl-ub}.
          \end{align}
Noting that we have $\bar L = L + \Llocal(t_0 + t_1)$, using \cref{eq:eta-barl-ub} and summing \cref{eq:etal-t0-t1} over all possible values of $t_0,t_1$ for which $t_0 + t_1 + 2 = (\bar L - L)/\Llocal$ gives that
          \begin{align}
            & \E_{(\bX_{1:\bar L}, S^\st) \sim \MP_{\bar L}}\left[ \norm{ g^\st(\bX_{S^\st}) - \frac{\sum_{S \in \Sets{[\bar L]}{k}} \exp(\hat g_0(S, \bX_S)) \cdot \hat g_1(\bX_S)}{\sum_{S' \in \Sets{[\bar L]}{k}} \exp(\hat g_0(S', \bX_{S'}))} } \right]\nonumber\\
            \leq & O(\eta_L \eta_{\bar L} \cdot (L \delta) \cdot (t_0 + t_1)^2)\nonumber.
          \end{align}
          Using that $g^\st(\bX_{S^\st}) = \bY$ under $(\bX_{1:\bar L}, S^\st, \bY) \sim \MP_{\bar L}$, we conclude that
          \begin{align}
\E_{(\bX, \bY) \sim \MP_{\bar L}} \left[ \Loss(\hat h(\bX), \bY)\right] \leq O(\eta_L \eta_{\bar L} \cdot L \bar L^2 \cdot \delta)\nonumber.
          \end{align}
\end{proof}

\begin{lemma}
  \label{lem:hybrid}
  Suppose that $Z_0, Z_1, Z_2, \ldots$ are independent random variables with sample space $\Omega$, and that $Z_1, Z_2, \ldots$ are identically distributed. Furthermore suppose $f, f_0 : \Omega \to \BR^d$ and $g,g_0 : \Omega\to \BR$ are measurable functions, and let $A_i = f(Z_i, Z_{i-1}), B_i = g(Z_i, Z_{i-1})$ for each $i \geq 1$, and $A_0 = f_0(Z_1, Z_0), B_0 = g_0(Z_1, Z_0)$.  Suppose $\ep > 0$ satisfies 
  \begin{align}
\E \left[ \norm{ \frac{A_0}{B_0} - \frac{A_0+A_1}{B_0+B_1} }\right] \leq \ep, \quad \E\left[ \norm{ \frac{A_0+A_1}{B_0+B_1} - \frac{A_0 + A_1 + A_2}{B_0 + B_1 + B_2} } \right] \leq \ep\label{eq:ep-asm}.
  \end{align}
  Then for any $i \geq 1$, we have
  \begin{align}
\E \left[ \norm{ \frac{A_0}{B_0} - \frac{A_0 + A_1 + \cdots + A_i}{B_0 + B_1 + \cdots + B_i} } \right] \leq 3i\ep\nonumber.
  \end{align}
\end{lemma}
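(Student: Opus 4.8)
The plan is a hybrid (telescoping) argument, with the real work being a uniform-in-$j$ bound on the per-step drift. Writing $H_j := \frac{A_0 + \cdots + A_j}{B_0 + \cdots + B_j}$, the triangle inequality gives $\E\norm{H_0 - H_i} \le \sum_{j=1}^{i}\E\norm{H_{j-1}-H_j}$, so it suffices to prove $\E\norm{H_{j-1}-H_j}\le 3\ep$ for every $j\ge 1$; then summing gives $\ep+\ep+3(i-2)\ep\le 3i\ep$. For $j=1$ and $j=2$ this is immediate from the two hypotheses (which in fact give the sharper bound $\ep$). Hence everything reduces to controlling the per-step drift for $j\ge 3$, and the key point is that this must stay bounded by a fixed multiple of $\ep$: a telescoping through consecutive hybrids using only triangle inequalities would bound $\E\norm{H_{j-1}-H_j}$ by a quantity growing with $j$, yielding an $O(i^2\ep)$ (or worse) final bound, so stationarity of $Z_1,Z_2,\ldots$ has to be used essentially.

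To handle $j\ge 3$, I would ``collapse'' the first $j-1$ blocks into a single boundary variable: set $Z_0' := (Z_0, Z_1, \ldots, Z_{j-2})$, $Z_m' := Z_{j-2+m}$ for $m\ge 1$, $A_0':= A_0+\cdots+A_{j-2}$, $A_m':= A_{j-2+m}$ for $m\ge 1$, and likewise for $B'$. Using that $A_\ell,B_\ell$ depend on $(Z_\ell,Z_{\ell-1})$ through a function that does not depend on $\ell$ for $\ell\ge 1$, one checks that $(Z',A',B')$ again satisfies the structural hypotheses of the lemma, that $H_m' = H_{j-2+m}$, and hence $\E\norm{H_{j-1}-H_j} = \E\norm{H_1'-H_2'}$. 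The reason for collapsing this many blocks (rather than, say, just the first two) is that for $j\ge 3$ the variable $Z_{j-2}$ --- the only coordinate of the boundary $Z_0'$ that the first increment $A_1' = f(Z_1',Z_{j-2})$ actually sees --- is body-distributed and independent of the remaining coordinates of the boundary, so the ``interface'' between the collapsed history and the new blocks has the same form as the interface at block $1$ of the original process. From $H_1',H_2'$ one then passes to auxiliary hybrids in which the boundary's prefix is resampled so as to reproduce the joint laws appearing in the two hypotheses, and a three-term triangle inequality ($H_1'\to$ auxiliary $\to$ auxiliary $\to H_2'$) gives $\E\norm{H_1'-H_2'}\le 3\ep$. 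Throughout one uses the identity $H_{j-1}-H_j = \frac{B_j}{Q_j}(H_{j-1}-A_j/B_j)$ with $Q_j := B_0+\cdots+B_j$ and the bound $0\le \frac{B_j}{Q_j}\le 1$ (valid since the $B_\ell$ are nonnegative in the intended application), with the degenerate-denominator case handled via the convention in the footnote of \cref{def:sparse-group-attn}.

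The main obstacle is precisely this last step: establishing $\E\norm{H_1'-H_2'}\le 3\ep$ for the collapsed process. The collapsed process's own ``first hypothesis'', namely $\E\norm{H_0'-H_1'}=\E\norm{H_{j-2}-H_{j-1}}$, is not one of our two givens, so one cannot just invoke the lemma recursively; instead the argument has to transfer the two given bounds --- which constrain only the interface at block $1$ of the original process, i.e.\ $H_0\to H_1$ and $H_1\to H_2$ --- to the interface of the collapsed process, and this transfer is valid only because $Z_1,Z_2,\ldots$ are i.i.d.\ and the increments are built from a single fixed function. Making that coupling precise (and tracking the weights $B_j/Q_j$ correctly) is where the care is needed.
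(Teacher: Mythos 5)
Your decomposition is genuinely different from the paper's, and different in a way that makes the central estimate harder rather than easier. The paper does not telescope over consecutive differences $\E\norm{H_{j-1}-H_j}$. It anchors at $H_1 = \frac{A_0+A_1}{B_0+B_1}$ and proves the \emph{single-increment} bound $\E\norm{\frac{A_0+A_1}{B_0+B_1}-\frac{A_0+A_1+A_m}{B_0+B_1+B_m}}\le 3\ep$ uniformly over $m\ge 2$, via the hybrid $A_1' := f(Z_1,Z_m)$ together with both hypotheses in \cref{eq:ep-asm}; it then sums these via \cref{lem:ab-pi-vi}, which is an algebraic identity special to ratios of the form $\frac{a+\sum q_i}{b+\sum p_i}$ and not a plain triangle inequality, to get $\E\norm{H_1-H_j}\le 3(j-1)\ep$. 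The structural point that makes this tractable is that the paper's per-step quantity involves only the four variables $Z_0,Z_1,Z_{m-1},Z_m$: each added increment $(A_m,B_m)$ is compared to the fixed two-term anchor in isolation. Your per-step quantity $\E\norm{H_{j-1}-H_j}$ carries the entire window $Z_0,\ldots,Z_j$, which is exactly why your collapse $A_0':=A_0+\cdots+A_{j-2}$ produces a boundary object that is not built from $f_0$ and therefore cannot be matched in law to anything appearing in \cref{eq:ep-asm}.

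That mismatch is the real gap, and you flag it yourself: after reducing to $\E\norm{H_1'-H_2'}\le 3\ep$ for the collapsed process, you observe that the collapsed process's own first hypothesis $\E\norm{H_0'-H_1'}=\E\norm{H_{j-2}-H_{j-1}}$ is not among the givens, gesture at ``resampling the boundary's prefix'' and a ``three-term triangle inequality,'' and conclude that making the coupling precise ``is where the care is needed'' --- without producing it. Nothing in what you have written supplies that coupling; the two hypotheses constrain only the $Z_0$--$Z_1$ and $Z_1$--$Z_2$ interfaces of the original process, and the collapsed $A_0'/B_0'$ is not distributionally comparable to any term appearing there. As written, the proposal establishes only the reduction and the cases $j\le 2$; the uniform per-step bound for $j\ge 3$, which is the entire content of the lemma, is asserted rather than proven. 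To close the gap you would need to adopt something like the paper's anchoring trick (bound $\E\norm{H_1 - \frac{A_0+A_1+A_m}{B_0+B_1+B_m}}$ directly and aggregate via \cref{lem:ab-pi-vi}) so that the per-step comparison never leaves a bounded set of $Z$'s.
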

\begin{proof}
  We use a hybrid argument. Fix any $i \geq 2$, and define $A_1' = f(Z_1, Z_i), B_1' = g(Z_1, Z_i)$. We have
  \begin{align}
    & \E\left[ \norm{ \frac{A_0}{B_0} - \frac{A_0 + A_1 + A_i}{B_0 + B_1 + B_i} } \right] \nonumber\\
    \leq & \E\left[ \norm{ \frac{A_0}{B_0} - \frac{A_0 + A_1'}{B_0 + B_1'} } \right] + \E\left[ \norm{ \frac{A_0+A_1'}{B_0+B_1'} - \frac{A_0 + A_1 + A_i}{B_0 + B_1 + B_i} } \right]\leq 2\ep\nonumber,
  \end{align}
  where the final inequality follows from \cref{eq:ep-asm} together with the fact that $Z_1, Z_2, Z_i$ are iid. By the triangle inequality, it follows that
  \begin{align}
 \E\left[ \norm{ \frac{A_0+A_1}{B_0+B_1} - \frac{A_0 + A_1 + A_i}{B_0 + B_1 + B_i} } \right]  \leq 3\ep\nonumber. 
  \end{align}
  Fix any $j \geq 2$. Summing over $2 \leq i \leq j$ and using \cref{lem:ab-pi-vi}, we see that
  \begin{align}
\E\left[ \norm{ \frac{A_0+A_1}{B_0 + B_1} - \frac{A_0 + A_1 + \cdots + A_j}{B_0 + B_1 + \cdots + B_j} } \right] \leq 3\ep \cdot (j-1)\nonumber.
  \end{align}
  A final application of the triangle inequality yields the conclusion of the lemma. 
\end{proof}

\begin{lemma}
  \label{lem:ab-pi-vi}
  Let $L \in \BN$ and $a,q_i \in \BR^d$ be vectors for $1 \leq i \leq L$ and $b, p_i \in \BR_{>0}$ be positive real numbers for $1 \leq i \leq L$. Suppose that for each $i \in [L]$, $\norm{ \frac{a}{b} - \frac{q_i + a}{p_i + b} } = \epsilon_i.$ Then
  \[
\norm{ \frac{a}{b} - \frac{\sum_{i=1}^L q_i + a}{\sum_{i=1}^L p_i + b} } \leq \sum_{i=1}^L \ep_i.
\]
\end{lemma}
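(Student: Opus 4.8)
The plan is to reduce the statement to one algebraic identity followed by a single elementary inequality between nonnegative reals. First I would record, for each $i \in [L]$, the exact form of the difference appearing in the hypothesis: a direct computation gives $\frac{q_i + a}{p_i + b} - \frac{a}{b} = \frac{b q_i - a p_i}{b(p_i + b)}$ (this uses $b, p_i > 0$ so that all denominators are positive). Hence the hypothesis $\norm{\frac{a}{b} - \frac{q_i + a}{p_i + b}} = \ep_i$ is equivalent to the scalar-weighted vector identity $\norm{b q_i - a p_i} = \ep_i \cdot b (p_i + b)$.

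Next I would apply the very same identity, but with $a$ replaced by $\sum_{i=1}^L q_i$ and $b$ replaced by $P := \sum_{i=1}^L p_i$: this yields $\frac{\sum_i q_i + a}{P + b} - \frac{a}{b} = \frac{\sum_{i=1}^L (b q_i - a p_i)}{b(P + b)}$. Taking norms, applying the triangle inequality to the vector-valued numerator, and substituting the expressions from the previous step gives $\norm{\frac{a}{b} - \frac{\sum_i q_i + a}{P + b}} \le \frac{1}{b(P+b)} \sum_{i=1}^L \norm{b q_i - a p_i} = \frac{1}{P + b} \sum_{i=1}^L \ep_i (p_i + b)$.

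Finally I would bound the last expression by $\sum_{i=1}^L \ep_i$. Writing $\sum_i \ep_i(p_i + b) = \sum_i \ep_i p_i + b \sum_i \ep_i$, and using that all $\ep_i \ge 0$ and all $p_j \ge 0$, we get $\sum_i \ep_i p_i \le \big( \sum_i \ep_i \big)\big( \sum_j p_j \big) = P \sum_i \ep_i$ — simply expand the product on the right and discard the nonnegative cross terms. Therefore $\sum_i \ep_i(p_i + b) \le (P + b) \sum_i \ep_i$, and dividing through by $P + b$ completes the proof.

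I do not expect a genuine obstacle: the only points requiring care are the sign conventions (the identity needs $b, p_i > 0$, and the closing inequality needs $\ep_i, p_i \ge 0$, both of which are in the hypotheses) and applying the triangle inequality to the vector sum $\sum_i (b q_i - a p_i)$ rather than coordinatewise. The mildly nonobvious ingredient is recognizing the terminal inequality $\sum_i \ep_i p_i \le (\sum_i \ep_i)(\sum_j p_j)$, which is the crux that converts the naive ``weighted average with weights summing to more than one'' bound into the clean bound $\sum_i \ep_i$.
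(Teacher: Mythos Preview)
Your proof is correct and follows essentially the same route as the paper: rewrite each hypothesis as $\norm{bq_i - ap_i} = \ep_i\, b(p_i+b)$, sum, divide by $b(P+b)$, and then bound $\sum_i \ep_i(p_i+b) \le (P+b)\sum_i \ep_i$. The only difference is that you spell out the last inequality $\sum_i \ep_i p_i \le (\sum_i \ep_i)(\sum_j p_j)$ explicitly, whereas the paper passes from $b\sum_i \ep_i(p_i+b)$ to $\ep\, b(S+b)$ in a single line.
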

\begin{proof}
 For each \( i \), we have:

\[
\norm{ \frac{p_i a - bq_i}{b (p_i + b)} } = \norm{ \frac{a}{b} - \frac{q_i + a}{p_i + b} } \leq \epsilon_i.
\]

Multiplying both sides by \( b (p_i + b) \) (which is positive), we see that
\[
\norm{ p_ia-q_ib } \leq \epsilon_i b (p_i + b).
\]

Summing both sides over \( i = 1 \) to \( L \):

\[
\norm{ a \sum_{i=1}^L p_i - b \sum_{i=1}^L q_i} = \norm{ \sum_{i=1}^L p_ia - q_ib } \leq  b \sum_{i=1}^L\ep_i (p_i + b).
\]

Let $\ep = \sum_{i=1}^L \ep_i$, \( S = \sum_{i=1}^L p_i \) and \( V = \sum_{i=1}^L q_i \), so $ 
\norm{ a S - b V } \leq \epsilon b \left( S +  b \right).$ 
We need to bound $
\norm{ \frac{a}{b} - \frac{V + a}{S + b} }.
$
To do so, we compute
\[
\norm{ \frac{a}{b} - \frac{V + a}{S + b} }=\norm{ \frac{a S - b V}{b (S + b)} } \leq \frac{\epsilon b \left( S +  b \right)}{b (S + b)} = \epsilon,
\]
which gives the desired bound. 
\end{proof}

\subsection{On necessity of the assumptions}
\label{sec:lower-bounds}
In the below proposition, we establish that each of the assumptions of \cref{thm:length-extrap} is necessary in that none of them can be individually removed.

\begin{proposition}
  \label{prop:necessity-asms}
There is a constant $c > 0$ so that the following holds. For any of the assumptions: (a) \cref{it:local} of \cref{asm:local-relative}, (b) \cref{it:relative} of \cref{asm:local-relative}, (c) \cref{asm:unique-selection} with $\delta = 0$, (d) \cref{asm:density-ratio-position} with $\eta_\ell = O(\ell)$, and any integer $L \in \BN$, there is a $2$-sparse functional attention class $\Hpred = \Hpred(\Gpred, \Gval)$ and a $2$-sparse planted correlations distribution ensemble $\MP$ which satisfy each of the other 3 assumptions with $\Llocal = 2$ but which does not have $(L, \bar L, c)$-length \generalization for any $\bar L \geq 2L$. 
\end{proposition}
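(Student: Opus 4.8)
I would treat the four cases separately: cases (a), (b), (d) share a common template, while case (c) uses a direct information-theoretic argument. For (a), (b), (d) we retain \cref{asm:unique-selection} with $\delta=0$, so the optimal value of the population objective defining $\hat h$ is $0$, attained by the realizing hypothesis $h^\st$; consequently any \emph{other} hypothesis in $\Hpred$ with zero training loss is an equally valid minimizer. The idea, strengthening the degeneracy of \cref{prop:noasm-nolg} while respecting the three retained assumptions, is to build a $2$-sparse functional attention class $\Hpred(\Gpred,\Gval)$ containing a ``correct'' hypothesis $h^\st=h_{g_0^\st,g_1}$ witnessing realizability at all lengths (so the retained assumptions hold) and a ``lazy'' hypothesis $\tilde h=h_{\tilde g_0,g_1}$ that agrees with $h^\st$ on all training lengths $\ell\in[L/2,L]$ but, lacking the removed property, degenerates to a near-constant background-dominated map at length $\bar L$, so that $\E_{(\bX,\bY)\sim\MP_{\bar L}}[\Loss(\tilde h(\bX),\bY)]=\Omega(1)$. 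Taking $\hat h:=\tilde h$ then defeats $(L,\bar L,c)$-length \generalization. (This requires reading \cref{def:length-extrap} so that the conclusion must hold for \emph{every} minimizer, as it must lest the definition be vacuous under ties; this is forced, since under exact realizability a \emph{unique} minimizer would equal $h^\st$, which does generalize.)

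In each of (a), (b), (d) I would take $\MV=\{0,1\}\cup\{\text{markers}\}$ with $\mu=\Unif(\{0,1\})$, so that the planted tokens are ``markers'' never produced by $\mu$ and hence $S^\st$ is always recoverable from $\bX_{S^\st}$ alone (making content-based attention realizable); $g_1$ sends the marker pair to $\phi(c)\in\MY$ encoding one planted bit $c$, and every non-marker pair to $v_0:=\tfrac12(\phi(0)+\phi(1))$, with $\phi(0),\phi(1)$ at distance $1$. For \textbf{(a)} (drop \cref{it:local}; keep relative, realizability, coverage), set $S^\st=\{1,\ell\}$ — span $\ell-1$, so $\Hpred$ cannot be local — with markers at positions $1$ and $\ell$ whose bits XOR to the label, so any correct hypothesis must attend to a set containing \emph{both} endpoints; $g_0^\st$ scores $0$ on the marker set at \emph{any} gap, while $\tilde g_0$ does so only at gaps $\le T$ for a threshold $T$ with $L-1\le T<\bar L-1$, so at length $\bar L$ the marker set has forbidden gap and $\tilde h$ falls back to the average of $g_1\approx v_0$ (a harmless extra $g_0$ with finite span-$3$ score on never-occurring content makes $\Gpred$ non-$2$-local for small $L$). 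Coverage holds with $\eta_\ell=O(1)$ since $\Qpos_\ell$ is a point mass. For \textbf{(b)} (drop \cref{it:relative}), set $S^\st=\{i,i+1\}$ with $i\sim\Unif([\ell-1])$ (span $1$, hence $2$-local) and let $\tilde g_0$ be the \emph{absolute}-position rule that attends to the marker set iff $i\le L-1$; at length $\bar L\ge 2L$ at least half the mass of $i$ exceeds $L-1$, so $\tilde h$ errs with probability $\ge\tfrac12$. Here $\Qpos_{\ell'}(S^\st)/\Qpos_\ell(S^\st)$ is as large as $\ell-1$ when $\ell'=\Llocal=2$, so coverage holds exactly with $\eta_\ell=O(\ell)$, matching the proposition's hypothesis in (d). For \textbf{(d)} (drop \cref{asm:density-ratio-position}), keep $g_0^\st,\tilde g_0$ relative and $2$-local, but make $\Qpos_\ell$ supported on gap-$1$ marker sets for $\ell\le L$ and $\Qpos_{\bar L}$ supported on gap-$2$ marker sets; then $\Qpos_{\bar L}(S^\st)/\Qpos_{\ell'}(S^\st-\Delta)$ is infinite (coverage genuinely fails), $h^\st$ — which attends to the marker pair at any gap $\le 2$ — still realizes $\MP$ everywhere, while $\tilde g_0$, which attends only at gap $1$, falls back to $v_0$ at length $\bar L$. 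In every subcase one verifies the three retained assumptions with $\Llocal=2$.

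For \textbf{(c)} (drop realizability; keep locality, relativity, coverage) I would make the planted pair information-theoretically invisible: $\MV=\{0,1\}$, $\mu=\Unif(\{0,1\})$, $S^\st=\{i,i+1\}$ with $i\sim\Unif([\ell-1])$, $\Qvoc=\Unif(\{0,1\}^2)$, and $\bY=\phi(\bZ_1\oplus\bZ_2)$. Then $\MP_\ell$ has $\bX$ uniform on $\{0,1\}^\ell$ and, conditionally on $\bX$, the label is $\phi$ of a $\Ber(p(\bX))$ variable where $p(\bX)$ is the empirical fraction of adjacent XOR-$1$ pairs in $\bX$, which concentrates in $[1/4,3/4]$ once $\ell$ is large. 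A one-line posterior computation (using $\min_{\hat y}\bigl(p\,\|\phi(1)-\hat y\|+(1-p)\|\phi(0)-\hat y\|\bigr)=\min(p,1-p)$) then gives $\E_{(\bX,\bY)\sim\MP_{\bar L}}[\Loss(h(\bX),\bY)]\ge c$ for an absolute constant $c>0$ and \emph{every} $h\in\Hpred$, hence for $\hat h$ whatever it is; no tie-breaking caveat is needed. Taking $\Gpred$ to be all $2$-local relative functions and $\Gval$ all maps $\MV^2\to\MY$, locality, relativity, and coverage ($\eta_\ell=O(\ell)$, as in (b)) hold, while realizability fails because the optimal loss is already $\ge c$ at large $\ell$.

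\textbf{Main obstacle.} The crux in (a), (b), (d) is to keep exact realizability \emph{at all lengths, including $\bar L$} — which forces a correct $h^\st$ to exist everywhere — while still retaining a lazy minimizer that fails at $\bar L$; this is exactly what dictates the content-marker device (so $h^\st$ localizes $S^\st$ by content, not by position) and what makes ties, hence the per-minimizer reading of $\argmin$, unavoidable. For (c) the obstacle is instead stating and proving the $\Omega(1)$ lower bound on the achievable $\bar L$-loss, which reduces to the routine bound on $\min_{\hat y}\E[\|\phi(B)-\hat y\|]$ for $B\sim\Ber(p)$ with $p$ bounded away from $0$ and $1$, together with concentration of $p(\bX)$. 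A minor loose end is the degenerate regime of very small $L$ (where length-$2$ sequences make the $k=2$ setup nearly trivial), handled separately.
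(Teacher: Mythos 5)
Your overall template for cases (a), (b), (d) — keep exact realizability so that the $\argmin$ in \cref{eq:define-hath} has multiple minimizers, exhibit a ``lazy'' minimizer achieving zero training loss that collapses to a background-dominated average at length $\bar L$, and read \cref{def:length-extrap} as a requirement on every minimizer — is exactly the paper's strategy. Your constructions for (b) (an absolute-position gate at threshold $L-1$) and (d) ($\Qpos_\ell$ supported on gap-$1$ sets for $\ell \le L$ and gap-$2$ sets for $\ell > L$, with $g_0^\st$ attending at gaps $\le 2$ and $\tilde g_0$ only at gap $1$) are essentially the paper's.

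Case (c) is where you genuinely diverge. The paper keeps the marker device and shows a \emph{bad minimizer} exists by shifting $\Qpos_\ell$'s support from gap-$1$ to (mostly) gap-$2$ sets, with the class containing only gap-$1$ and gap-$2$ matchers, neither realizing; it also adds the $\tfrac{9}{10}/\tfrac{1}{10}$ weighting so as to rule out even an agnostic guarantee. You instead make the planted pair information-theoretically invisible ($\mu = \Qvoc$-marginal, label an XOR), so that \emph{every} hypothesis incurs $\Omega(1)$ loss at $\bar L$ by a Bayes-risk computation plus concentration of the adjacent-XOR fraction. Your version has the advantage of avoiding the tie-breaking caveat altogether and showing a uniform lower bound over the whole class; the paper's version has the advantage of staying entirely combinatorial (no concentration needed, and valid at all lengths $\bar L \ge 2L$ without a ``small $L$'' caveat) and of yielding the stronger anti-agnostic statement.

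Case (a) has a genuine gap. You take $\Qpos_\ell$ to be a point mass on $S^\st = \{1,\ell\}$ and assert coverage holds with $\eta_\ell = O(1)$ ``since $\Qpos_\ell$ is a point mass.'' This is backwards: a point mass whose support moves with $\ell$ is precisely what breaks \cref{asm:density-ratio-position}. Concretely, for any $\ell' \in [\Llocal, \ell)$ the set $S = \{1,\ell'\}$ lies in $\Sets{[\ell']}{2}$ and in $\Sets{[\ell]}{2}$, so the second inequality in \cref{asm:density-ratio-position} is required, yet $\Qpos_{\ell'}(S) = 1$ while $\Qpos_\ell(S) = 0$, giving an infinite ratio. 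The paper avoids this by making $\Qpos_\ell$ uniform over \emph{consecutive} pairs for all $\ell$ (so the shared-support ratios are $O(\ell)$), and for $\ell > L$ mixing in additional far-apart pairs $\{i, i+L\}$; those far-apart pairs are exempt from the coverage constraints precisely because $\{i,i+L\}$ never fits in $[\ell']$ for $\ell' \le L$. If you want to salvage your (a), you should switch to a $\Qpos_\ell$ whose local (consecutive) part is shared and well-covered across $\ell$, with the non-local planted sets appearing only for $\ell > L$ at positions that cannot be translated into $[\ell']$ — at which point you have essentially reconstructed the paper's ensemble.
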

\begin{proof}
  We show in turn that each of the assumptions cannot be removed. For all of the examples, we set $k = 2$, and $\MV = \{1,2\}$.  We let $\MY = [0,1]$ be the unit interval, and let $\Loss(\bY, \bY') := |\bY - \bY'|$.

  \paragraph{Removing locality.} We first show that $\Llocal$-locality (\cref{it:local} of \cref{asm:local-relative}) cannot be dropped.   We consider the sparse function attention class $\Hpred = \Hpred(\Gpred, \Gval)$ where $\Gpred, \Gval$ are defined as follows:
\begin{itemize}
\item $\Gpred= \{ g_{00}, g_{01} \}$, defined as follows:
  \begin{align}
    g_{00}(S, \bx) =& \begin{cases}
      0 &: \bx = (2,2) \mbox{ or } \max\{ S \} - \min\{ S \} \geq L \\
      -\infty &: \mbox{otherwise}
    \end{cases}\nonumber\\
    g_{01}(S, \bx) =& \begin{cases}
      0& : \bx = (2,2) \\
      -\infty &: \mbox{otherwise}.
      \end{cases}\nonumber
  \end{align}
\item $\Gval$ consists of the single function $g_1 : \MV^2 \to \MY$ defined as $g_1(2,2) = 1$ and $g_1(\bx_1, \bx_2) = 0$ for all $(\bx_1, \bx_2) \neq (2,2)$. 
\end{itemize}

Fix any integer $L$. We define a distribution ensemble $\MP$ with $k$-sparse planted correlations, as specified by distributions $\mu \in \Delta(\MV), \Qpos_\ell \in \Delta(\Sets{[\ell]}{k})$ for each $\ell \in \BN$, and $\Qvoc \in \Delta(\MV^k)$, as well as a function $g^\st : \MV^k \to \MY$, as follows:
  \begin{itemize}
  \item $\mu$ is the point mass on $\{1 \}$. 
  \item $\Qvoc$ is the point mass on $\{ (2,2) \}$. 
  \item $\Qpos_\ell$ is uniform over the sets $\{1, 2\}, \{2,3\}, \ldots, \{\ell-1, \ell \}$ if $\ell \leq L$; and is uniform over the sets $\{ 1, 2\}, \ldots, \{ \ell-1, \ell \}, \{ 1, L+1 \} , \{2, L+2\}, \ldots, \{\ell - L, \ell \}$ if $\ell > L$.
  \item We set $g^\st = g_{1} \in \Gval$ as defined above. 
  \end{itemize}

  It is immediate from the above definitions that $\Hpred$ is relative (i.e., \cref{it:relative} of \cref{asm:local-relative}). Next, realizability (\cref{asm:unique-selection}) is clearly satisfied as the function $h_{g_{00}, g_{1}}$ achieves $0$ loss. Finally, coverage (\cref{asm:density-ratio-position}) is readily verified by taking $\eta_\ell = 2\ell$ and $\Llocal = 2$. (In particular, for $S^\st = \{ a, a+L \} \in \Sets{[\ell]}{k}$ and $\ell' \leq L$, we have that $S^\st - \Delta\not \in \Sets{[\ell']}{k}$ for all choices of $\Delta$ and so we do not need to satisfy $\frac{\Qpos_\ell(S^\st)}{\Qpos_{\ell'}(S^\st - \Delta)} \leq \eta_\ell$ (such an inequality does not even make sense).

  Finally, note that the risk minimization procedure which returns $\hat h = h_{g_{00}, g_1}$ satisfies  \cref{eq:define-hath} with the chosen value of $L$ and error $\delta = 0$. However, it is readily checked that $\hat h$ experiences error $\Omega(1)$ on the distribution $\MP_\ell$ for any $\ell \geq 2L$.

  \paragraph{Removing the relative assumption.} Next we show that \cref{it:relative} of \cref{asm:local-relative} cannot be dropped. We consider the sparse attention class defined by the following $\Gpred, \Gval$:
  \begin{itemize}
\item $\Gpred= \{ g_{00}, g_{01} \}$, defined as follows:
  \begin{align}
    g_{00}(S, \bx) =& \begin{cases}
      0 &: \bx = (2,2) \mbox{ and } \max\{ S \} - \min\{S \} = 1 \\
      -\infty &: \mbox{otherwise}
    \end{cases}\nonumber\\
    g_{01}(S, \bx) =& \begin{cases}
      0& : \bx =(2,2) \mbox{ and }\max\{S \} - \min\{S \} = 1 \\
      0 &: \max\{ S \} - \min\{S \} = 2, \max\{S \} > L \\
      -\infty &: \mbox{otherwise}.
      \end{cases}\nonumber
  \end{align}
\item $\Gval$ consists of the single function $g_1 : \MV^2 \to \MY$ defined as $g_1(2,2) = 1$ and $g_1(\bx_1, \bx_2) = 0$ for all $(\bx_1, \bx_2) \neq (2,2)$. 
\end{itemize}
Fix an integer $L$. We define an ensemble $\MP$ specified by $\mu, \Qpos_\ell, \Qvoc, g^\st$ as follows:
\begin{itemize}
  \item $\mu$ is the point mass on $\{1 \}$. 
  \item $\Qvoc$ is the point mass on $\{ (2,2) \}$. 
  \item $\Qpos_\ell$ is uniform over the sets $\{1, 2\}, \{2,3\}, \ldots, \{\ell-1, \ell \}$. 
  \item $g^\st = g_{1} \in \Gval$. 
  \end{itemize}
  It is immediate from the above definitions that $\Hpred = \Hpred(\Gpred, \Gval)$ is $\Llocal$-local (\cref{it:local} of \cref{asm:local-relative}) with $\Llocal = 2$. Realizability (\cref{asm:unique-selection}) is satisfied with $h^\st = h_{g_{00}, g_{1}}$. Coverage (\cref{asm:density-ratio-position}) is satisfied with $\Llocal = 1$ and $\eta_\ell = \ell$.

  The risk minimization procedure which returns $\hat h = h_{g_{01}, g_1}$ satisfies \cref{eq:define-hath} with the chosen value of $L$ and error $\delta = 0$, but $\hat h$ experiences error $\Omega(1)$ on the distribution $\MP_\ell$ for any $\ell \geq L+1$.

  \paragraph{Removing realizability.} Next, we show that \cref{asm:unique-selection} cannot be dropped. We consider the sparse attention class defined by the following $\Gpred, \Gval$:
  \begin{itemize}
\item $\Gpred= \{ g_{00}, g_{01} \}$, defined as follows:
  \begin{align}
    g_{00}(S, \bx) =& \begin{cases}
      0 &: \max\{ S \} - \min\{S \} = 1 \mbox{ and } \bx = (2,2)\\
      -\infty &: \mbox{otherwise}
    \end{cases}\nonumber\\
    g_{01}(S, \bx) =& \begin{cases}
      0& : \max\{S \} - \min\{S \} = 2 \mbox{ and } \bx = (2,2) \\
      -\infty &: \mbox{otherwise}.
      \end{cases}\nonumber
  \end{align}
\end{itemize}
Fix an integer $L$. We define an ensemble $\MP$ specified by the following $\mu, \Qpos_\ell, \Qvoc, g^\st$:
\begin{itemize}
  \item $\mu$ is the point mass on $\{1 \}$. 
  \item $\Qvoc$ is the point mass on $\{ (2,2) \}$. 
  \item $\Qpos_\ell$ is uniform over the sets $\{1, 2\}, \{2,3\}, \ldots, \{\ell-1, \ell \}$ if $\ell \leq L$. For $\ell > L$, we define:
    \begin{align}
      \Qpos_\ell(\{ 1, 2\}) =  \cdots = \Qpos_\ell( \{ \ell-1, \ell \}) = \frac{1}{10(\ell-1)} \nonumber\\
      \Qpos_\ell(\{ 1, 3 \}) = \Qpos_\ell(  \{2, 4\}) =  \cdots = \Qpos_\ell(\{\ell - 2, \ell \})  = \frac{9}{10(\ell-2)}\label{eq:qpos-910}.
    \end{align}
    
    \item $g^\st = g_1 \in \Gval$. 
    \end{itemize}
    It is immediate from the above definitions that $\Hpred = \Hpred(\Gpred, \Gval)$ is $\Llocal$-local and relative (\cref{asm:local-relative}) with $\Llocal = 2$. Moreover, coverage (\cref{asm:density-ratio-position}) is satisfied with $\eta_\ell = \ell$.

    However, note that the hypothesis $\hat h = h_{g_{00}, g_1}$ achieves {0 expected loss} on the distributions $\MP_\ell$ for $\ell \leq L$, but $\Omega(1)$ expected loss on the distributions $\MP_\ell$ for $\ell > L$.\footnote{Note that for $\bX \sim \MP_\ell$ for $\ell > L$, we will have that $g_{00}(S, \bX_S) = -\infty$ for all $S$ with nonzero probability; here, recall from \cref{def:sparse-group-attn} that $h_{g_{00}, g_1}(\bX) = \frac{1}{|\Sets{[\ell]}{k}|} \sum_{S \in \Sets{[\ell]}{k}} g_1(\bX_S)$.} Moreover, this example shows that we cannot even hope for an agnostic guarantee whereby $\hat h$ is close to the best-in-class hypothesis, since for any $\ell > L$, we in fact have\footnote{Ensuring that \cref{eq:worst-in-class} holds is the reason that we have $\Qpos_\ell$ put more mass on sets $S$ with $\max\{S \} - \min\{S\}  = 2$ in \cref{eq:qpos-910}.}
    \begin{align}
E_{(\bX, \bY) \sim \MP_\ell}[\Loss(\hat h(\bX), \bY)] \geq \E_{(\bX, \bY) \sim \MP_\ell}[\Loss(h_{g_{01}, g_1}(\bX), \bY)] + \Omega(1)\label{eq:worst-in-class}.
    \end{align}

    \paragraph{Removing coverage.} Finally, we show that \cref{asm:density-ratio-position} cannot be dropped. We consider the sparse attention class defined by the following $\Gpred, \Gval$: 
      \begin{itemize}
\item $\Gpred= \{ g_{00}, g_{01} \}$, defined as follows:
  \begin{align}
    g_{00}(S, \bx) =& \begin{cases}
      0 &: \bx = (2,2) \mbox{ and } \max\{ S \} - \min\{S \} = 1 \\
      -\infty &: \mbox{otherwise}
    \end{cases}\nonumber\\
    g_{01}(S, \bx) =& \begin{cases}
      0& : \bx = (2,2) \mbox{ and }\max\{S \} - \min\{S \} \in \{1,2\}\\
      -\infty &: \mbox{otherwise}.
      \end{cases}\nonumber
  \end{align}
\item $\Gval$ consists of the single function $g_1 : \MV^2 \to \MY$ defined as $g_1(2,2) = 1$ and $g_1(\bx_1, \bx_2) = 0$ for all $(\bx_1, \bx_2) \neq (2,2)$.
\end{itemize}
Fix an integer $L$. We define an ensemble $\MP$ specified by the following $\mu, \Qpos_\ell, \Qvoc, g^\st$:
\begin{itemize}
  \item $\mu$ is the point mass on $\{1 \}$. 
  \item $\Qvoc$ is the point mass on $\{ (2,2) \}$. 
  \item $\Qpos_\ell$ is uniform over the sets $\{1, 2\}, \{2,3\}, \ldots, \{\ell-1, \ell \}$ if $\ell \leq L$, and uniform over the sets $\{ 1,3\}, \{2,4\}, \ldots, \{\ell-2, \ell \}$ if $\ell > L$. 
  \item $g^\st = g_1 \in \Gval$. 
  \end{itemize}
  It is immediate from the above definitions that $\Hpred = \Hpred(\Gpred, \Gval)$ is $\Llocal$-local and relative (\cref{asm:local-relative}) with $\Llocal = 2$. Moreover, realizability (\cref{asm:unique-selection}) is satisfied by noting that $h_{g_{01}, g_1}$ has $0$ loss.

  However, note that the hypothesis $\hat h = h_{g_{00}, g_1}$ achieves $0$ expected loss on the distributions $\MP_\ell$ for $\ell \leq L$, but $\Omega(1)$ expected loss on the distribution $\MP_\ell$ for $\ell > L$. 
\end{proof}

\section{Theoretical analysis for position coupling}
\label{sec:appendix-pc}

In this section we formally state and prove \cref{prop:pc-length-extrap-informal}. We begin with some notations. Given a set $\Omega \subset \BN$ and $i \in \Omega$, we let $\rank_\Omega(i)$ denote the rank of $i$ in $\Omega$, i.e., the position of $i$ in $\Omega$ when the elements of $\Omega$ are sorted.
Next for $S \in \Sets{[\ell]}{k}$, $S = (i_1, \ldots, i_k)$, and $\psi_\ell : [\ell] \to [\ell]$, we let $\psi_\ell(S) \in \Sets{[\ell]}{k}$ to denote the set $\{\psi_\ell(i) : i \in S \}$. %

\icml{First, we discuss some additional motivation for \cref{prop:pc-length-extrap-informal}.
  \begin{remark}[Limitations of \cref{asm:local-relative}]
    The combination of \cref{asm:local-relative,asm:unique-selection}  leads to the following restriction on the $k$-sparse distribution ensemble $\MP$: in typical examples (modulo some degenerate ones where, e.g., all functions in $\Gval$ are constant), in order to satisfy realizability (\cref{asm:unique-selection}), we will need the low-loss hypothesis $h^\st \in \Hpred(\Gpred, \Gval)$ to be of the form $h^\st = h_{g_0^\st, g_1^\st}$ for some $g_0^\st \in \Gpred$ which ``selects out'' the planted set $S^\st$ and $g_1^\st \in \Gval$ which correctly evaluates the label $\bY$ given $\bX_{S^\st}$; formally, for each $\ell \in \BN$, with high probability under $(\bX, S^\st, \bY) \sim \MP_\ell$,
\begin{align}
  g_0^\st(S^\st, \bX_{S^\st}) > -\infty, \quad g^\st(S, \bX_S) = -\infty \ \  \forall S \neq S^\st,\nonumber
\end{align}
and $g_1^\st(\bX_{S^\st}) = \bY$. (We formally call this property \emph{strong realizability} in \cref{def:strong-realizability}.) But by \cref{it:local} of \cref{asm:local-relative}, this means that $\max\{ S^\st \} - \min\{S^\st\} \leq \Llocal$ with high probability over the draw from $\MP_\ell$. Ideally, we would like to establish results for planted $k$-sparse ensembles $\MP$ for which the planted set $S^\st$ is \emph{not} local in this sense; this is accomplished by \cref{prop:pc-length-extrap}. 
\end{remark}
}

\icml{We now formally define the notion of a local positional coupling, which expands out the technical details omitted in the informal version in \cref{def:amenability-informal}.
  }

\iftrue
\paragraph{Position coupling operation on functional attention classes.}
We say that a mapping $g_0 : (\BN \times \MV)^k \to \BR \cup \{-\infty\}$ (e.g., an element of $\Gpred$ is \emph{position-independent} if $g_0(S, \bX)$ depends only on $\bX$ (and not $S$). For such $g_0$, we will write $g_0 : \MV^k \to \BR \cup \{-\infty \}$. We say that $\Gpred$ is \emph{position-independent} if all $g_0 \in \Gpred$ are position-independent.

We define a new vocabulary $\bar \MV_k := (\MV \sqcup [k])^{\leq k}$, and ``position-coupled'' versions, $\PC[g_0] : (\BN \times \bar \MV_k)^k \to \BR \cup \{ -\infty\}$ and $\PC[g_1] : \bar \MV_k^k \to \MY$ of mappings $g_0 : \MV^k \to \BR \cup \{ -\infty \}, g_1 : \MV^k \to \MY$, as follows. First, given $\bX \in \bar\MV_k^k$, let
\begin{align}
\expand( \bX) \in \MV^\st\nonumber
\end{align}
be defined as follows: %
for each $i \in [k]$, we can write $\bX_{i} = ( (j_{i1}, \bZ_{i1}), \ldots, (j_{in}, \bZ_{in}))$ for some $n \geq 0$, $j_{i1}, \ldots, j_{in} \in [\ell]$ and  $ \bZ_{i1} \ldots, \bZ_{in} \in \MV$. Then we let $\expand(S, \bX)$ be the tuple consisting of all pairs $(j_{i1}, \bZ_{i1}), \ldots, (j_{in}, \bZ_{in})$ across all $i \in [k]$, where the indices $j_{i1}, \ldots, j_{in}$ are sorted in increasing order across all $i$ and all duplicates in the indices are removed. %
We next define
\begin{align}
  \PC[g_0](S, \bX) := \begin{cases}
    -\infty &: \max\{ S \} - \min\{S \} > \Llocal \\
    -\infty &: \expand(\bX) \not \in \MV^k \\
    g_0(\expand(\bX)) &: \mbox{ otherwise},
    \end{cases} \label{eq:define-pc-g0}
\end{align}
so that $\PC[g_0]$ maps $(\BN \times \bar\MV_k)^k \to \BR \cup \{-\infty \}$, and
\begin{align}
  \PC[g_1](\bX) := \begin{cases}
    g_1(\expand(\bX)) &: \expand(S) \in \MV^k \\
    y_0 &: \mbox{otherwise},
  \end{cases}\label{eq:define-pc-g1}
\end{align}
where $y_0 \in \MY$ is an arbitrary element; thus $\PC[g_1]$ maps $\bar\MV_k^k \to \MY$. 
For a sparse functional attention class $\Hpred = \Hpred(\Gpred, \Gval)$, we define
\begin{align}
\PC[\Hpred] := \left\{ h_{\PC[g_0], \PC[g_1]} \ : \ g_0 \in \Gpred, g_1 \in \Gval \right\}\nonumber.
\end{align}
Note that hypotheses in $\PC[\Hpred]$ map $\bar\MV_k^\ell \to \MY$. 
\fi 
\paragraph{Position coupling operation on distribution ensembles.} 
Next, we define a ``position coupling'' operation which modifies a distribution ensemble equipped with a local position coupling (per \cref{def:amenability}) to produce sequences in $\bar \MV_k$ with tokens coupled between positions. 

Fix a distribution ensemble $(\MP_\ell)_{\ell \in \BN}$ with $k$-sparse planted correlations, which is specified by the tuple $(\mu, \Qpos_\ell, \Qvoc)$.  %
Given a sequence $\bX \in \MV^\ell$, a set $S \in \Sets{[\ell]}{k}$, and a function $\psi_\ell :[\ell] \to [\ell]$, we let $\PC[\bX, S]$ be the distribution over sequences $\bar\bX \in \bar \MV_k^\ell$ defined as follows: for $i \in [\ell]$, the distribution of $\bar \bX_i$ is given by %
\begin{align}
  \bar \bX_i \begin{cases}
     = ((\rank_{\psi_\ell^{-1}(S)}(j),\bX_j))_{j \in \psi_\ell^{-1}(i)} &: i \in \psi_\ell(S) \\
    = (I, \bX_{\min(\psi_\ell^{-1}(i))}), \  I \sim \Unif([\ell]) &: i \not \in \psi_\ell(S), \psi_\ell^{-1}(i) \neq \emptyset \\
   \sim \Unif([\ell]) \times \mu &: \mbox{otherwise},
  \end{cases}\label{eq:pc-sequence}
\end{align}
In words, we are ``rearranging'' the tokens of $\bX$ by grouping together (i.e., ``coupling'') tokens indexed by $S$ which are mapped by $\psi_\ell$ to the same index.  %
Moreover, for positions $i$ not in $\psi_\ell(S)$, we take the token at the smallest position of $\psi_\ell^{-1}(i)$, and for positions $i$ not in the image of $\psi_\ell$, we sample $\bar \bX_i$ from $\Unif([\ell]) \times \mu$. 
We let $\PC[\MP_\ell]$ denote the distribution of $\PC[\bX, S^\st]$ where $(\bX, S^\st) \sim \MP_\ell$.

\paragraph{Strong realizability.} Finally, we introduce the following slight strengthening of realizability:
\begin{assumption}[Strong realizability]
  \label{def:strong-realizability}
Write $\Hpred = \Hpred(\Gpred, \Gval)$. We say that an ensemble $\MP$ is \emph{$\delta$-strongly approximately $\Hpred$-realizable} if there are $g_0^\st \in \Gpred, g_1^\st \in \Gval$, for each $\ell \in \BN$, with probability $1-\delta$ under $(\bX, S^\st, \bY) \sim \MP_\ell$,
\begin{align}
  g_0^\st(S^\st, \bX_{S^\st}) > -\infty, \quad g_0^\st(S, \bX_S) = -\infty \ \  \forall S \neq S^\st,\nonumber
\end{align}
and $g_1^\st(\bX_{S^\st}) = \bY$.
\end{assumption}
Using the fact that the diameter of $\bY$ is at most 1, it is straightforward that $\delta$-strong approximate realizability implies $\delta$-approximate realizability. 

\paragraph{Position coupling removes the need for locality.} The below proposition confirms that applying the position-coupling operation to a distribution ensemble $\MP_\ell$ with $k$-sparse planted correlations (\cref{def:sparse-planted}), to yield $\PC[\MP_\ell]$, and applying the position coupling operation $\PC$ to a sparse functional attention class $\Hpred$, to yield $\PC[\Hpred]$, 
can effectively remove the requirement for locality to hold (per \cref{asm:local-relative}) in order to obtain provable length \generalization:
\begin{proposition}[Length \generalization without locality from position coupling]
  \label{prop:pc-length-extrap}
  Suppose that $\MP = (\MP_\ell)_{\ell \in \BN}$ is an ensemble with $k$-sparse planted correlations (\cref{def:sparse-planted}) defined by distributions $(\mu, \Qpos_\ell, \Qvoc)$, and for some $\Llocal \in \BN$ is amenable to $\Llocal$-position coupling (\cref{def:amenability}). Moreover suppose that the distribution of $\psi_\ell(S^\st)$ for $ (S^\st, \psi_\ell) \sim \Qposc_\ell$ satisfies the coverage assumption in \cref{asm:density-ratio-position} for some values of $\eta_\ell > 0$, $\ell \in \BN$. 

  Then for any sparse functional attention class $\Hpred = \Hpred(\Gpred, \Gval)$ for which $\Gpred$ is position-independent and $\MP$ is $\delta$-approximately strongly $\Hpred$-realizable, and any integers $L, \bar L \in [\Lmax]$ for which $\Llocal \mid \bar L - L$ and $L \geq 4\Llocal$, %
  the class $\PC[\Hpred]$ achieves $(L, \bar L, \eta_L \eta_{\bar L} \cdot L\bar L^2 \cdot \delta)$-length \generalization with respect to the ensemble $\PC[\MP]$. 
\end{proposition}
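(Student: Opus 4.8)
The plan is to \emph{reduce the statement to \cref{thm:length-extrap}}. Concretely, I will argue that $\PC[\MP]$ is itself an ensemble with $k'$-sparse planted correlations (for the $k'$ from \cref{it:rank} of \cref{def:amenability}) over the enlarged vocabulary $\bar\MV_k$, that $\PC[\Hpred]$ is a $k'$-sparse functional attention class which is $\Llocal$-local and relative in the sense of \cref{asm:local-relative}, that $\PC[\MP]$ is $\delta$-approximately $\PC[\Hpred]$-realizable (\cref{asm:unique-selection}), and that the position distribution of $\PC[\MP]$ inherits the coverage property \cref{asm:density-ratio-position} with the same constants $\eta_\ell$ --- the last point being exactly what the hypothesis on the law of $\psi_\ell(S^\st)$ provides. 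Granting these four facts, \cref{thm:length-extrap}, applied to the class $\PC[\Hpred]$ and the ensemble $\PC[\MP]$ with sparsity $k'$ and locality parameter $\Llocal$, outputs precisely the desired $(L,\bar L,\eta_L\eta_{\bar L}\cdot L\bar L^2\cdot\delta)$-length \generalization; the side conditions $\Llocal\mid\bar L-L$ and $L\ge 4\Llocal$ transfer verbatim, and all lengths invoked lie in $[\Lmax]$.

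\emph{Identifying $\PC[\MP]$ as a planted ensemble.} Fix $\ell\le\Lmax$, draw $(\bX,S^\st)\sim\MP_\ell$ and $(S^\st,\psi_\ell)\sim\Qposc_\ell$, and form $\bar\bX=\PC[\bX,S^\st]$ via \cref{eq:pc-sequence}. The planted index set should be taken to be $\bar S^\st:=\psi_\ell(S^\st)$, of size $k'$ (\cref{it:rank}), with law the prescribed $\Qposc_\ell$-marginal; the planted tokens should be $\bar\bX_{\bar S^\st}$, whose law is independent of $\ell$ because \cref{it:collisions} forces $\psi_\ell^{-1}(i)\subseteq S^\st$ for each $i\in\bar S^\st$, so $\bar\bX_{\bar S^\st}$ is a deterministic function of $\bX_{S^\st}\sim\Qvoc$ (which does not depend on $\ell$) together with the within-$S^\st$ rank structure of $\psi_\ell$ (which does not depend on $\ell$, again by \cref{it:rank}); and the background distribution should be $\Unif([\Lmax])\times\mu$, where \cref{it:collisions} guarantees that every coordinate of $\bar\bX$ outside $\bar S^\st$ carries an i.i.d.\ $\mu$-token underneath. (The one cosmetic wrinkle is that \cref{eq:pc-sequence} writes $\Unif([\ell])$ rather than $\Unif([\Lmax])$ for the ``filler'' rank of an uncoupled super-token; since the construction is only ever needed for $\ell\le\Lmax$, one may take this law over $[\Lmax]$ uniformly, which is what makes the background distribution $\ell$-independent.) Finally $\bar\bY=\bY=g^\st(\bX_{S^\st})=g^\st(\expand(\bar\bX_{\bar S^\st}))$, so the planted-label function of \cref{def:sparse-planted} is $g^\st\circ\expand$, which has exactly the form \cref{eq:y-gstar-x}.

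\emph{Locality, relativeness, and realizability of $\PC[\Hpred]$.} From \cref{eq:define-pc-g0}, $\PC[g_0](\bar S,\cdot)=-\infty$ whenever $\max\{\bar S\}-\min\{\bar S\}>\Llocal$, giving $\Llocal$-locality, and otherwise $\PC[g_0](\bar S,\bar\bx)$ equals $g_0(\expand(\bar\bx))$ or $-\infty$, a quantity which does not see $\bar S$ beyond this locality test --- because $\Gpred$ is position-independent and $\expand$ reads only the tokens --- so relativeness is immediate. For realizability, let $g_0^\st\in\Gpred,g_1^\st\in\Gval$ witness $\delta$-strong approximate $\Hpred$-realizability of $\MP$ (\cref{def:strong-realizability}) and set $h^\st_{\PC}:=h_{\PC[g_0^\st],\PC[g_1^\st]}$. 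On the probability-$(1-\delta)$ event that $g_0^\st(S^\st,\bX_{S^\st})>-\infty$, $g_0^\st(S,\bX_S)=-\infty$ for all $S\ne S^\st$, and $g_1^\st(\bX_{S^\st})=\bY$, one has $\PC[g_0^\st](\bar S^\st,\bar\bX_{\bar S^\st})=g_0^\st(\bX_{S^\st})>-\infty$, using $\max\{\bar S^\st\}-\min\{\bar S^\st\}\le\Llocal$ (\cref{it:local-coupling}) and $\expand(\bar\bX_{\bar S^\st})=\bX_{S^\st}\in\MV^k$; while one checks, from the definitions of $\expand$ and $\PC$ in \cref{eq:define-pc-g0,eq:define-pc-g1} together with \cref{it:collisions,it:rank}, that every other $\bar S\ne\bar S^\st$ (of size $k'$) receives score $-\infty$ --- the clause ``$\expand(\bar\bx)\notin\MV^k\Rightarrow\PC[g_0](\cdot)=-\infty$'', acting on the uniform filler ranks attached to uncoupled super-tokens, is precisely what prevents such $\bar S$ from decoding to a valid length-$k$ word. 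Consequently the softmax defining $h_{g_0,g_1}$ collapses onto the $\bar S^\st$ term and $h^\st_{\PC}(\bar\bX)=\PC[g_1^\st](\bar\bX_{\bar S^\st})=g_1^\st(\bX_{S^\st})=\bY=\bar\bY$; taking expectations over $\PC[\MP_\ell]$ bounds the loss by $\delta$, i.e., $\delta$-approximate $\PC[\Hpred]$-realizability.

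With these pieces in hand, all hypotheses of \cref{thm:length-extrap} hold for $\PC[\Hpred]$ and $\PC[\MP]$, and it yields the stated bound. I expect the main obstacle to be the realizability step: one must verify that pre-composing with $\expand$ creates no spurious finite scores on sets $\bar S\ne\bar S^\st$, which requires carefully tracking how the uniform filler ranks of uncoupled super-tokens interact with the sorting/deduplication and the length-$k$ test in \cref{eq:define-pc-g0} --- exactly the behavior those definitions were engineered to control --- and, in the worst case, absorbing a low-probability event where this fails into the $\delta$ budget. A secondary, purely cosmetic, point is the $\ell$-dependence of the background rank distribution noted above, which is handled by fixing it to $\Unif([\Lmax])$ throughout.
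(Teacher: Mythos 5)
Your proposal mirrors the paper's high-level structure closely: reduce to \cref{thm:length-extrap} by verifying that $\PC[\MP]$ is a planted ensemble with planted set $\psi_\ell(S^\st)$, that $\PC[\Hpred]$ is $\Llocal$-local and relative via \cref{eq:define-pc-g0}, that realizability transfers from strong realizability of $\MP$, and that coverage is inherited from the hypothesis on the law of $\psi_\ell(S^\st)$. You are also right to use $k'$ as the sparsity of the coupled ensemble (where the paper's writing is sloppy), and your observation that the $\Unif([\ell])$ rank component is technically $\ell$-dependent is a sharper reading of \cref{eq:pc-sequence} than the paper itself gives.

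The genuine divergence is in the realizability step, and it is worth flagging because you identified the issue but chose the harder of two available resolutions. You aim to show that every $\bar S \ne \bar S^\st$ receives score $-\infty$, so that the softmax collapses onto the single term $\bar S^\st$, and you rightly worry that ``pre-composing with $\expand$ creates no spurious finite scores'' is delicate to verify — and indeed uniqueness can fail (e.g., tokens at uncoupled positions may by coincidence reproduce $\bX_{S^\st}$ over a discrete alphabet, or freshly drawn filler tokens may decode to something $g_0^\st$ likes). The paper's proof does not attempt uniqueness at all. It instead argues that on the $1-\delta$ event, \emph{any} $\bar S$ with $\PC[g_0^\st](\bar S,\bar\bX_{\bar S})>-\infty$ must (via the $\expand(\cdot)\in\MV^k$ check plus position-independence of $g_0^\st$ and the fact that $g_0^\st$ is $-\infty$ off $\bX_{S^\st}$) satisfy $\expand(\bar\bX_{\bar S})=\bX_{S^\st}$, so $\PC[g_1^\st](\bar\bX_{\bar S})=g_1^\st(\bX_{S^\st})=\bY$ for every such $\bar S$. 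The softmax is then a convex combination of terms that all equal $\bY$, so the output is $\bY$ regardless of how many such $\bar S$ there are. This weaker-but-sufficient claim is what makes the argument go through cleanly; I would recommend replacing your uniqueness step with this observation rather than trying to fill the gap you correctly anticipated.
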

\begin{proof}[Proof of \cref{prop:pc-length-extrap}]
  We will verify that the sparse functional attention class $\PC[\Hpred]$ together with the ensemble $\PC_{\mu, \psi}[\MP]$ satisfy the requirements of \cref{thm:length-extrap} with respect to the vocabulary $\bar \MV$ defined above.

  First, it is straightforward to see that $\PC_{\mu, \psi}[\MP]$ is itself a $k$-sparse planted correlations distribution ensemble: for length $\ell$, the planted set is given by $\psi_\ell(S^\st)$ for $S^\st \sim \MP_\ell$, and for all $i \not \in S^\st$, the distribution of $\bar \bX_i$ (for $\bar\bX \sim \PC[\MP_\ell]$) is an independent sample from $\Unif([\ell]) \times \mu$. (Here we have used the second and third cases in \cref{eq:pc-sequence} together with \cref{it:collisions} of \cref{def:amenability}.) We have additionally used \cref{it:rank} of \cref{def:amenability}, which ensures that the distribution of $\bar\bX_{\psi_\ell(S^\st)}$ does not depend on $\ell$ (recall from \cref{def:sparse-planted} that $\Qvoc$ cannot depend on $\ell$).

  Next we verify each of the three assumptions made in \cref{thm:length-extrap}:
  \begin{itemize}
  \item \cref{asm:local-relative} is satisfied by the first line of \cref{eq:define-pc-g0} (which ensures $\Llocal$-locality) and since $\PC[g_0](S, \bX)$ does not depend on $S$ for sets $S$ satisfying $\max\{S\} - \min\{S \} \leq \Llocal$ (again by \cref{eq:define-pc-g0}). 
  \item To show that \cref{asm:unique-selection} is satisfied in the sense that
    $\PC[\MP]$ is $\delta$-approximately $\PC[\Hpred]$-realizable, we use that $\MP$ is $\delta$-strongly approximately $\Hpred$-realizable. In particular, choose $g_0^\st \in \Gpred, g_1^\st \in \Gval$ per \cref{def:strong-realizability}. %
    Consider a sample $(\bar \bX^\st, \bar S^\st) \sim \PC[\MP_\ell]$. We can write $\bar S^\st = \psi_\ell(S^\st)$ and $\bar \bX$ as in \cref{eq:pc-sequence} for a sample $(\bX, S^\st, \psi_\ell) \sim \MP_\ell$. 
    Thus, with probability $1-\delta$ under $(\bar \bX, \bar S^\st) \sim \PC[\MP_\ell]$, we have from \cref{eq:pc-sequence,eq:define-pc-g0} that
    \begin{align}
\PC[g_0^\st](\bar S^\st, \bar \bX_{\bar S^\st})  = g_0^\st(S^\st, \bX_{S^\st}) > -\infty\nonumber.
    \end{align}
    Moreover, in order to have $\PC[g_0^\st](S, \bar \bX_S)> -\infty$ for some $S \in \Sets{[\ell]}{k}$, we need (under the $1-\delta$ probability event above) that $\expand(\bar\bX_{S}) =\bX_{S^\st}$; here we have used that $g_0^\st \in \Gpred$ is position-independent. %
    For any such set $S$, it holds also that $\PC[g_1^\st](\bar\bX_{S}) = g_1^\st(\expand(\bar \bX_S)) = g_1^\st(\bX_{S^\st})$, which is equal to the label $\bar \bY$ on this event. %
    
  \item \cref{asm:density-ratio-position} is satisfied on account of the assumption from the proposition statement on the distribution of $\psi_\ell(S^\st)$ for $(S^\st, \psi_\ell) \sim \Qposc_\ell$. 
  \end{itemize}
\end{proof}

\icml{
\begin{remark}[Theoretical justification for PoSE]
  \label{rmk:pose}
  It is natural to wonder if our theoretical framework allows us to justify other techniques used to induce length \generalization, such as PoSE (\cref{sec:prelim-pose}), in a sense akin to \cref{prop:pc-length-extrap-informal}. At a high level, PoSE is adjusting the {distribution} of position IDs during training, so that IDs always seen farther apart than the training context window at test time may nevertheless be observed in the same training instance. In other words, a model trained as such should ``interpret'' a greater range of sets $S$ as satisfying the locality requirement of \cref{asm:local-relative}. Thus, we conjecture that this adjustment allows us to remove the locality requirement in \cref{asm:local-relative} as well; we leave a formal proof of this fact as an intriguing direction for future work. 
\end{remark}
}

\section{Additional experimental details: synthetic data experiments}
\label{sec:synthetic-experimental-details}
For our experiments with synthetic data (\cref{sec:sparse-parity,sec:cot-pc}), we used the GPT-NeoX decoder-only transformer model with a causal attention mask. The transformer had 12 layers, 16 heads, and an embedding dimension of 1024. We used the AdamW optimizer with learning rate $5 \cdot 10^{-5}$ and weight decay parameter equal to $0.1$; moreover, the experiments use a linear learning rate scheduler with 300 warmup steps. All shaded areas in figures represent $95\%$ confidence intervals, computed with respect to multiple training runs (the precise number of training runs for each experiment is noted in the following subsections). %

\icml{
\begin{remark}
  \label{rmk:local-relative}
  One might wonder why we emphasize \cref{it:local} but not \cref{it:relative} of \cref{asm:local-relative} in takeaway \ref{it:ta-locality}. In fact, the conceptual message of \cref{it:relative}, namely that position IDs only influence attention scores by their \emph{relative} information (i.e., the difference between different positions) is already captured by the fact that it is common to use \emph{relative positional embeddings} and variants (e.g., RoPE \cite{su_roformer_2023}, FIRE \cite{li_functional_2024}) in many open-source transformer architectures. Due to the success of such embeddings, in this sense the constraint imposed by \cref{it:relative} can ``come for free''.
\end{remark}
}

\subsection{Sparse parity (\cref{sec:sparse-parity})}

\subsubsection{Hyperparameters and data format.}
\label{sec:data-format-sparse-parity}
\paragraph{Hyperparameters.} The hyperparameters specific to our sparse parity experiments are shown in \cref{tab:sparse-parity-params}. 
\begin{table}[ht]
\centering
\begin{tabular}{l c}
\hline
\textbf{Parameter} & \textbf{Value} \\
\hline
Minimum training length & 20 \\
Maximum training length & 50 \\
Minimum testing length  & 20 \\
Maximum testing length  & 500 \\
Training batch size     & 64 \\
Number of training steps & 50,000 \\
Number of testing points & 512 \\
Vocabulary size $N$        & 20 \\
  Maximum position ID ($\Lmax$) & 510 \\
  Positional embedding & RoPE \\
  Number of training runs & 1 \\
\hline
\end{tabular}
\caption{Hyperparameters for sparse parity experiments (\cref{sec:sparse-parity}).}
\label{tab:sparse-parity-params}
\end{table}

\paragraph{Data format.} For some positive integer $N$, we consider an alphabet of $\MV = [N] \cup \{ \mf{t}_0, \mf{t}_1, \BOS, \SEP \}$. Given integers $k,\ell$ satisfying $k < \ell$, we consider the task of evaluating the parity of a particular set of $k$ out of $\ell$ tokens at even-numbered positions. The particular set of $k$ tokens is defined as those tokens for which the directly preceding token (which must belong to $[N]$) is $\leq N/2$. We remark that this task makes sense even for $N = 2$. 

Formally, the task is defined as follows. For each $\ell, k \in \BN$, we let $\Dsp_{\ell, k}$ denote the distribution over sequences of the form 
\begin{align}
\BOS, J_1, \mf{t}_{b_1}, \ldots, J_\ell, \mf{t}_{b_\ell}, \SEP, \mf{t}_{b^\st}\label{eq:sparse-parity}
\end{align}
where $(J_1, \ldots, J_\ell) \in [N]^\ell$ is chosen uniformly subject to the constraint that exactly $k$ of the values $J_1, \ldots, J_\ell$ are $\leq N/2$ and the remaining $\ell - k$ values are $> N/2$. Moreover, $b_1, \ldots, b_\ell \sim \Unif\{0,1\}$, and $b^\star = \bigoplus_{i : J_i \leq N/2} b_i$. We consider the task of predicting the final token $\mf{t}_{b^\st}$.

This task fits into the framework discussed in \cref{sec:model} as follows: $\MP_\ell$ is the distribution over tuples $(\bX, \bY)$ distributed as in \cref{eq:sparse-parity} where $\bX$ consists of all tokens except the final one $\mf{t}_{b^\st}$, and $\bY = \mf{t}_{b^\st}$.\footnote{Technically, the length of $\bX$ is $2\ell$ instead of $\ell$, but this discrepancy is immaterial.} It is straightforward to verify that, under the additional assumption that the $k$ values of $i$ for which $J_i \leq N/2$ are all within $\Llocal$ of each other (for some choice of $\Llocal$), then \cref{asm:density-ratio-position} is satisfied for $\eta_\ell = O(\ell)$ as long as $\ell \geq \Omega(\Llocal)$, and 
\cref{asm:local-relative,asm:unique-selection} are both satisfied for natural choices of the sparse functional attention class $\Hpred$. %

\subsubsection{Additional Results}
The analogous plots to \cref{fig:sparse-parity} for $\Ktrain \in \{4,6,\icml{8,}12\}$ are shown in \cref{fig:sparse-parity-extra}. The same patterns for $\Ktrain = \arxiv{8,}10$ are evident for each of these other values of $\Ktrain$ (except that the values of $\Ktrain \in \{4,6\}$ are sufficiently small so that there is essentially perfect length generalization for each value of $k \leq \Ktrain$). 
\label{sec:sparse-parity-extra-results}
\begin{figure*}[t]
    \centering
    \foreach \i in {4,6,\icml{8,}12} {
        \includegraphics[width=0.49\textwidth]{plots/output_dir.model_sparse_parity-0-0.5_ngram\i vtrain_FINALlgplots.pdf}
      }
      \caption{Length generalization for the sparse parity task for values $\Ktrain \in \{4,6,\icml{8,}12\}$ (see \cref{fig:sparse-parity} for $\Ktrain = \arxiv{8,}10$).}
      \label{fig:sparse-parity-extra}
    \end{figure*}

\subsection{Parity with scratchpad (\cref{sec:parity-scratchpad})}
\subsubsection{Hyperarameters and data format}
\label{sec:parity-scratchpad-data-format}

\paragraph{Hyperarameters.} The hyperparameters specific to our parity-with-scratchpad experiments are shown in \cref{tab:parity-scratchpad-params}.

\paragraph{Data format.}  For each length $\ell \in \BN$, we train using data of the following format, which computes the parity of $\ell$ bits by successively computing the parity of the first $i$ bits for each $i = 1, 2, \ldots, \ell$: 
\begin{align}
\BOS, b_1, \ldots, b_\ell, \SEP, b_1', \ldots, b_\ell'\label{eq:parity-cot}
\end{align}
where $b_1, \ldots, b_\ell \sim \Unif\{0,1\}$ are independent and $b_i' = \bigoplus_{j=1}^i b_j$ for $i \in [\ell]$. The model is trained to predict all tokens after the $\SEP$ (namely, the scratchpad). For position coupling, we use position IDs of $0, 1, \ldots, \ell, 0, 1, \ldots, \ell$ (which are shifted by a random offset during training as discussed above).  Note that in this task, the position IDs used in position coupling in fact \emph{are} determined solely by the length of the sequence. 
 
We train on $\ell$ for which the combined length of input and scratchpad satisfies $2\ell \in {[10, 40]}$ and test for $\ell$ satisfying $2\ell \in {[10, 200]}$.

\subsubsection{Additional results}
In \cref{fig:parity-cot-appendix}, we show the length generalization behavior for our experimental setup with modifications that (a) remove \predpc and simply use absolute position embeddings with a random shift during training time, and (b) use RoPE with PoSE. Both of these modifications significantly harm length \generalization behavior. 
\begin{table}[ht]
\centering
\begin{tabular}{l c}
\hline
\textbf{Parameter} & \textbf{Value} \\
\hline
Minimum training length & 10 \\
Maximum training length & 40 \\
Minimum testing length  & 10 \\
Maximum testing length  & 200 \\
Training batch size     & 64 \\
Number of training steps & 50,000 \\
Number of testing points & 192 \\
Maximum position ID ($\Lmax$) & 210 \\
  Positional embedding & Absolute Position Embeddings (learned) \\
  Number of training runs & 10\footnotemark\\
\hline
\end{tabular}
\caption{Hyperparameters for parity-with-scratchpad experiments (\cref{sec:parity-scratchpad}). }
\label{tab:parity-scratchpad-params}
\end{table}
\footnotetext{For the ablation studies in \cref{fig:parity-cot-appendix}, we only used 3 training runs. }

\begin{figure}[h]
    \centering
    \begin{subfigure}[b]{0.45\textwidth}  \includegraphics[width=\textwidth]{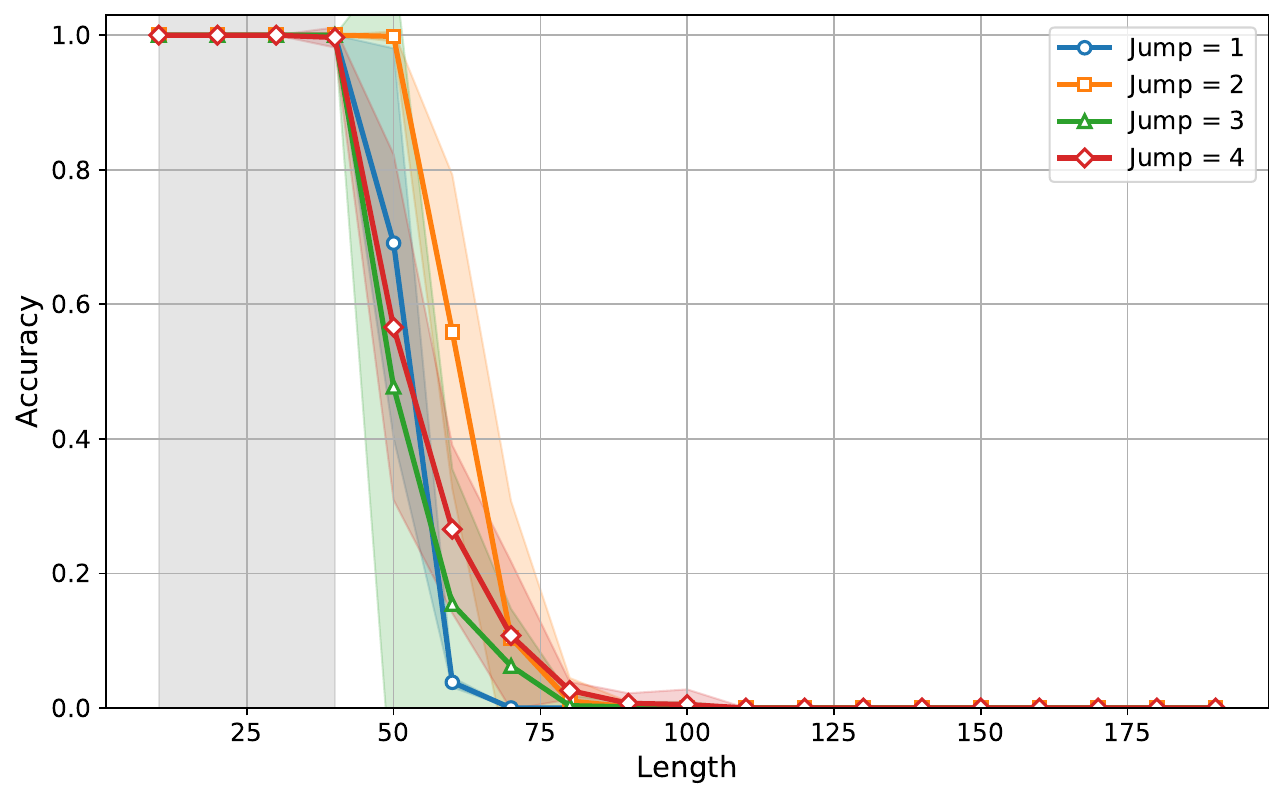}
      \caption{Absolute positional embeddings with random shift}
      \label{fig:parity-cot-absshift}
    \end{subfigure}
    \begin{subfigure}[b]{0.45\textwidth}
      \includegraphics[width=\textwidth]{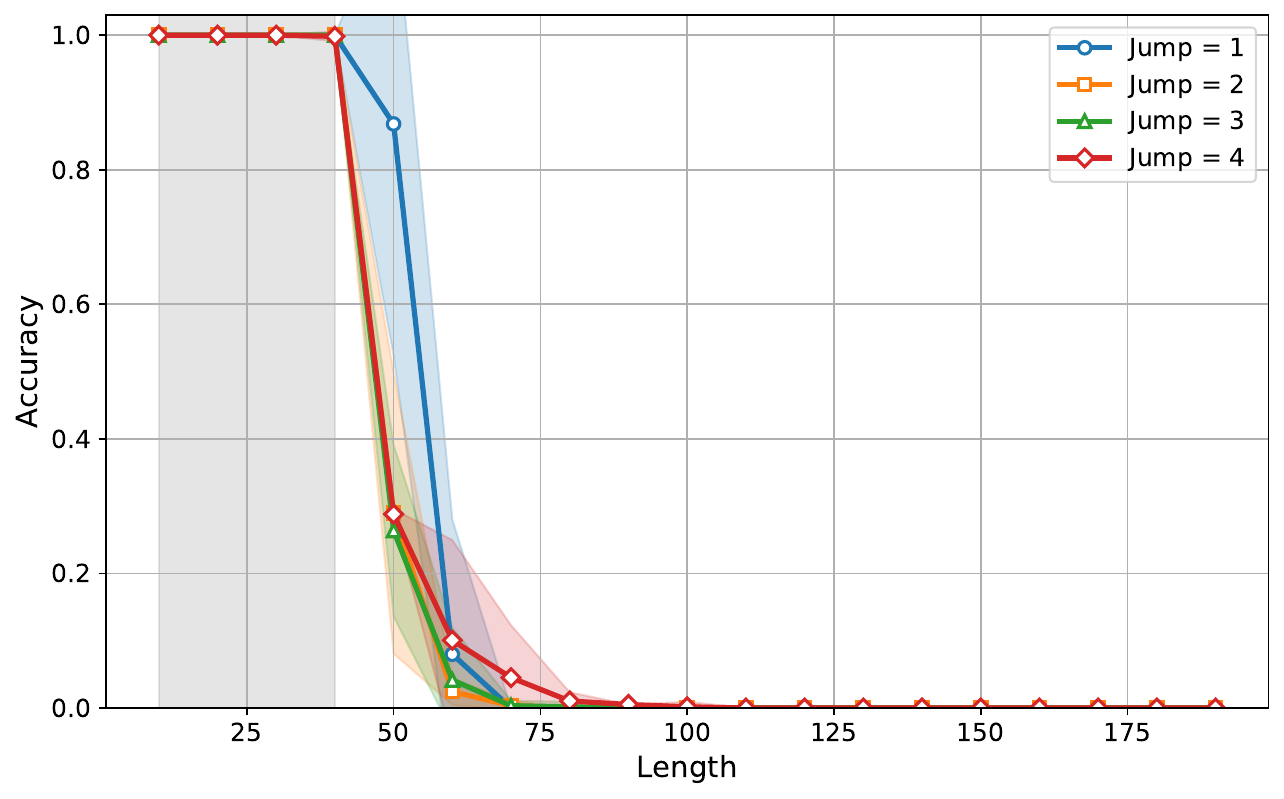}
      \caption{RoPE with PoSE}
      \label{fig:parity-cot-rope-pose}
      \end{subfigure}
      \caption{Length generalization figure for parity-with-scratchpad experiments (\cref{sec:parity-scratchpad}), with modifications that replace \predpc. Length generalization behavior is significantly worse than that in \cref{fig:parity-cot}.}
    \label{fig:parity-cot-appendix}
  \end{figure}
\subsection{Variable assignment with scratchpad (\cref{sec:va-scratchpad})}
\label{sec:var-assign}

\begin{table}[ht]
\centering
\begin{tabular}{l c}
\hline
\textbf{Parameter} & \textbf{Value} \\
\hline
Minimum training length & 20 \\
Maximum training length & 60 \\
Minimum testing length  & 20 \\
Maximum testing length  & 200 \\
Training batch size     & 32 \\
Number of training steps & 50,000 \\
Number of testing points & 192 \\
Vocabulary size  (number of variables)        & 200 \\
Maximum position ID ($\Lmax$) & 230 \\
  Positional embedding & Absolute positional embedding (learned) \\
  Number of training runs & 3\footnotemark \\
\hline
\end{tabular}
\caption{Hyperparameters for variable assignment-with-scratchpad experiments (\cref{sec:va-scratchpad}).}
\label{tab:va-params}
\end{table}
\footnotetext{The ablation study in \cref{fig:pointer-rope-pose} uses 10 training runs.} 

\subsubsection{Hyperparameters and Data format}
\label{sec:data-format-va-scratchpad}
\paragraph{Hyperparameters.} The hyperparameters specific to our variable assignment-with-scratchpad experiments are shown in \cref{tab:va-params}. 

\paragraph{Data format.} We now describe in detail the distribution of data for the ``variable assignment with scratchpad'' problem discussed in \cref{sec:va-scratchpad}. 
To sample a problem instance, we first draw some number $w$ of \emph{chains}, where a \emph{chain} is a sequence of $d$ variable assignments of the form: $v_1 \gets v_2; v_2 \gets v_3; \cdots ; v_d \gets v_{d+1}$. We sample uniformly at random a sequence of $w$ such chains with distinct variable IDs, where chain $j$ is denoted $v_{j,1} \gets v_{j,2}; v_{j,2} \gets v_{j,3}; \cdots ; v_{j,d} \gets v_{j,d+1}$. We interleave these chains randomly (but keep the order within each chain preserved). We let $\vinit = v_{1,1}$, and begin the input sequence with $v_{1,1}$ (so that it is possible for the model to determine which chain to follow). Finally, the sequence concludes with a scratchpad of the form $v_{1,1} \gets v_{1,2}; v_{1,2} \gets v_{1,3}; \cdots ; v_{1,d} \gets v_{1,d+1}$, evaluating the variable assignments along the correct chain. Overall, an example sequence consists of the following token IDs (here, $\gets$, $\SEPo$, $\SEPt$ are separator tokens):
\begin{align}
  & \BOS, v_{1,1}, \SEPo, v_{\pi(1), \sigma(1)}, \gets, v_{\pi(1), \sigma(1) + 1}, v_{\pi(2), \sigma(2)}, \gets, v_{\pi(2), \sigma(2) + 1}, \ldots, \SEPt, \nonumber\\
  & v_{1,1}, \gets, v_{1,2}, \ldots, v_{1,d}, \gets, v_{1,d+1}\label{eq:example-pc-cot}.
\end{align}
Here we have used $\pi(1), \pi(2), \ldots \in [w]$ to denote the chain index corresponding to each step in the input sequence, and 
$\sigma(1), \sigma(2), \ldots \in [d]$ to denote the position within its chain corresponding to each step in the input sequence.
As an example with $v = 2$ and $w = 2$, we might have the sequence:
\begin{align}
  & \BOS, v_{1,1}, \SEPo, v_{2,1}, \gets, v_{2,2}, v_{1,1} \gets v_{1,2}, v_{1,2}, \gets v_{1,3}, v_{2,2} \gets v_{2,3}, \SEPt, v_{1,1}, \gets, v_{1,2}, v_{1,2}, \gets, v_{1,3}\nonumber.
\end{align}

The model is trained to predict all tokens after the $\SEPt$. For a fixed depth $d$ and desired length $\ell$ of the input, we choose $w$ as large as possible so that the length of  the sequence between $\SEPo$ and $\SEPt$ is at most $\ell$. 

For position coupling, the first part of the example (before $\SEPt$) uses the true position IDs $1, 2,3, \ldots, \ell$, where $\ell$ denotes the number of tokens before $\SEPt$. The final part of the example (after $\SEPt$) uses position IDs $I_1, I_1 + 1, I_1 + 2, I_2, I_2 + 1, I_2 + 2, \ldots$, where for $j \in [d]$, $I_j$ denotes the position ID assigned to $v_{1,j}$ in the first part of the example. Summarizing, the position IDs corresponding to the example \cref{eq:example-pc-cot} are:
\begin{align}
0, 1, 2, \ldots, \ell, 0, I_1, I_1 + 1, I_1 + 2, \ldots, I_d, I_d + 1, I_d + 2\nonumber. 
\end{align}
Note that, because $I_1, I_2, \ldots, I_d$ depend on the (random) functions $\pi, \sigma$, it is crucial that the model learns to predict these coupled position IDs, so that they can be passed as the next position ID. %
\icml{
\subsubsection{Additional Results}
\label{sec:var-assign-results}
\cref{fig:pointer-cot-appendix} shows the length generalization behavior for our experimental setup with modifications that (a) remove \predpc and use absolute positional embeddings with a random shift during training time, (b) use RoPE with PoSE (keeping the scratchpad), and (c) use RoPE with PoSE and remove the scratchpad. Notice that \predpc (\cref{fig:pointer-cot}) greatly outperforms all 3 of these modifications.

}

  \section{Additional experimental details: Natural Language Experiments}
  \label{sec:additional-nlp}

  \paragraph{Additional experimental details.} Hyperparameters used for our training procedure for the model $\hshort$, discussed in \cref{sec:experiments-natural-language}. The model $\hshort$ with context length $L = 64$ discussed in \cref{sec:experiments-natural-language} was trained for 34000 steps without PoSE, and 15000 steps with PoSE. We additionally trained a model $\hlong$ with context length $\bar L = 128$ for 48000 steps, whose hyperparameters are identical to those of $\hshort$ except the batch and microbatch sizes are halved. Altogether, both models $\hshort,\hlong$ were trained on roughly 25B tokens, and each has 1.3B parameters. 

\begin{table}[ht]
\centering
\begin{tabular}{lll}
\toprule
\textbf{Hyperparameter} & $\hshort$ & $\hlong$  \\
\midrule
Model Dimension         & 2048 & 2048 \\
MLP Hidden Size         & 8192 & 8192 \\
Number of Heads         & 32& 32 \\
Number of Layers        & 24 &24\\
  Position Embedding      & RoPE (w/ PoSE) & RoPE \\
  Activation Type & GeLU & GeLU\\
  Vocabulary Size & 32000 & 32000 \\
  Tokenizer & Llama-2-7b & Llama-2-7b \\
  Batch Size & 8192 & 4096\\
  Microbatch (per-device) size & 256 & 128 \\
  Training Context Length & 64  & 128 \\ 
  Number of training steps & 49000 & 48000 \\
\arxiv{  Learning Rate Scheduler        & Cosine w/ Warmup of 5000 & Cosine w/ Warmup of 5000  \\}
  Optimizer & AdamW (w/o weight decay) & AdamW (w/o weight decay) \\
  Learning Rate & $10^{-3}$ & $10^{-3}$ \\
\bottomrule
\end{tabular}
\caption{Hyperparameters for our natural language modeling experiments (based off of OLMo codebase with C4 dataset).}
\label{tab:olmo-params}
\end{table}

  \icml{}

\end{document}